\newcommand{\ucambridge}{\normalfont \text{\textipa{D}}}
\newcommand{\ethz}{\text{\normalfont \textipa{Q}}}
\newcommand{\mitboston}{\normalfont \text{\textipa{@}}}
\newcommand{\jhu}{\normalfont \text{\textipa{6}}}
\setlist{nosep}
\tikzset{
-{Stealth[length=2mm, width=2mm]}, 
node distance=1cm,
every state/.style={thick, fill=gray!10},
initial text=$ $,
}
\crefname{section}{\S}{\S\S}
\crefname{table}{Tab.}{Tables}
\crefname{figure}{Fig.}{Figures}
\crefname{algorithm}{Alg.}{Algs.}
\crefname{equation}{Eq.}{Eqs.}
\crefname{definition}{Def.}{Definitions}
\crefname{appendix}{App.}{Apps.}
\crefname{theorem}{Thm.}{Theorems}
\crefname{myexample}{Ex.}{Examples}
\crefname{proposition}{Prop.}{Propositions}
\crefname{corollary}{Cor.}{Corollaries}
\crefname{observation}{Observation}{Observations}
\crefname{assumption}{Assumption}{Assumptions}
\crefname{hypothesis}{Hyp.}{Hypotheses}
\crefname{footnote}{footnote}{footnotes}   
\newcommand{\defeq}[0]{\mathrel{\stackrel{\textnormal{\tiny def}}{=}}}
\theoremstyle{plain}
\declaretheorem[name=Theorem,numberwithin=section]{theorem}
\newtheorem{proposition}[theorem]{Proposition}
\newtheorem{lemma}[theorem]{Lemma}
\newtheorem{corollary}[theorem]{Corollary}
\newtheorem{definition}[theorem]{Definition}
\theoremstyle{remark}
\theoremstyle{definition}
\newtheorem{myexample}[theorem]{Example}
\newcommand{\seq}[1]{\texttt{#1}}
\newcommand{\vsource}{\boldsymbol{s}}
\def\vh{\boldsymbol{h}}
\def\vu{\boldsymbol{u}}
\def\vv{\boldsymbol{v}}
\newcommand{\vtarget}{\boldsymbol{t}}
\newcommand{\alphabet}{\Sigma}
\newcommand{\kleene}[1]{#1^*}
\newcommand{\overC}{\overline{C}}
\newcommand{\overcalC}{\overline{\mathcal{C}}}
\newcommand{\vertiii}[1]{{\left\vert\kern-0.25ex\left\vert\kern-0.25ex\left\vert #1 
    \right\vert\kern-0.25ex\right\vert\kern-0.25ex\right\vert}}
\newcommand{\ueos}{\bm{u}_{\eos}}
\newcommand*\iftodonotes{\if@todonotes@disabled\expandafter\@secondoftwo\else\expandafter\@firstoftwo\fi}
\newcommand{\noindentaftertodo}{\iftodonotes{\noindent}{}}
\newcommand{\Ryan}[2][]{\noindentaftertodo}
\newcommand{\Tiago}[2][]{\noindentaftertodo}
\newcommand{\Clara}[2][]{\noindentaftertodo}
\newcommand{\Lucas}[2][]{\noindentaftertodo}
\newcommand{\Leo}[2][]{\noindentaftertodo}
\newcommand{\Jason}[2][]{\noindentaftertodo}
\DeclareSymbolFont{bbold}{U}{bbold}{m}{n}
\DeclareSymbolFontAlphabet{\mathbbold}{bbold}
\newcommand{\justification}[1]{\text{{\color{gray}(#1)}}}
\renewcommand{\P}{\mathbb{P}}
\newcommand{\calC}{\mathcal{C}}
\newcommand{\powerset}[1]{\mathcal{P}(#1)}
\newcommand{\vx}{\bm{x}}
\renewcommand{\c}{\mathsf{c}}
\newcommand{\calF}{\mathcal{F}}
\newcommand{\calA}{\mathcal{A}}
\newcommand{\calG}{\mathcal{G}}
\newcommand{\relu}{\text{ReLU}}
\newcommand{\R}{\mathbb{R}}
\newcommand{\N}{\mathbb{N}}
\newcommand{\Z}{\mathbb{Z}}
\newcommand{\xx}{\boldsymbol{x}}
\newcommand{\yy}{\boldsymbol{y}}
\newcommand{\defn}[1]{\textbf{#1}}
\newcommand{\alphabeteos}{\overline{\alphabet}}
\newcommand{\eos}{\textsc{eos}\xspace}
\newcommand{\bos}{\textsc{bos}\xspace}
\newcommand{\pString}{p}
\newcommand{\pASM}{\bar{p}}
\newcommand{\pknownLM}{p_0}
\definecolor{darkblue}{rgb}{0.0,0.0,0.5}
\definecolor{purple}{rgb}{0.5,0.0,0.5}
\newcommand{\bmomega}{\bm{\omega}}
\newcommand{\infoft}{\text{ i.o.}}
\newcommand{\comp}{\mathsf{c}}
\def\mP{{\mathbf{P}}}
\def\mI{{\mathbf{I}}}
\newcommand{\autoregmodel}{autoregressive sequence model\xspace}
\newcommand{\autoregmodelAcronym}{ASM\xspace}
\newcommand{\softplus}{\mathrm{softplus}}
\newcommand{\ptildeEOS}{\widetilde{p}_\eos}
\newcommand{\boldX}{X}
\title{A Measure-Theoretic Characterization of Tight Language Models}
\author{%
Li Du$^{\jhu}$%
~\;~\;~Lucas Torroba Hennigen$^{\mitboston}$%
~\;~\;~Tiago Pimentel$^{\ucambridge}$ \\
\textbf{Clara Meister}$^{\ethz}$
~\;~\;~\textbf{Jason Eisner}$^{\jhu}$~\;~\;~\textbf{Ryan Cotterell}$^{\ethz}$\\
    $^{\jhu}$Johns Hopkins University~\;~\;~\;~$^{\mitboston}$MIT \\
  $^{\ucambridge}$University of Cambridge   ~\;~\;~\;~
  $^{\ethz}$ETH Z{\"u}rich
   \\
\texttt{\href{mailto:leodu@cs.jhu.edu}{leodu@cs.jhu.edu}}%
  ~\;~ \texttt{\href{mailto:lucastor@mit.edu}{lucastor@mit.edu}}%
  ~\;~ \texttt{\href{mailto:tp472@cam.ac.uk}{tp472@cam.ac.uk}} \\  \texttt{\href{mailto:clara.meister@inf.ethz.ch}{clara.meister@inf.ethz.ch}}%
  ~\;~ \texttt{\href{mailto:jason@cs.jhu.edu}{jason@cs.jhu.edu}}%
  ~\;~ \texttt{\href{mailto:ryan.cotterell@inf.ethz.ch}{ryan.cotterell@inf.ethz.ch}}
 }
\begin{document}
\maketitle
\begin{abstract}
Language modeling, a central task in natural language processing, involves estimating a probability distribution over strings.
In most cases, the estimated distribution sums to 1 over all finite strings.
However, in some pathological cases, probability mass can ``leak'' onto the set of infinite sequences.
In order to characterize the notion of leakage more precisely, this paper offers a measure-theoretic treatment of language modeling.
We prove that many popular language model families are in fact tight, meaning that they will not leak in this sense. 
We also generalize characterizations of tightness proposed in previous works.\looseness=-1

\end{abstract}

\section{Introduction}

Language modeling is a core task in natural language processing.
As canonically defined, language modeling involves estimating a distribution over the set of strings over a given alphabet.
If the alphabet is the set of words in a language,\footnote{Or perhaps alphabetic symbols or subwords; see, e.g., \citet{durrett-nlp}.} then a language model can be thought of as a distribution over the language's sentences.
Since \citet{shannon1948mathematical}, language modeling has been used to estimate statistical properties of language and has become essential for computational linguistics research \cite{hale2001probabilistic,meister-etal-2021-revisiting}.
Further, it is also central to a wide range of natural language processing applications, whether as a source model in a noisy channel architecture \cite{weaver1949,jelinek1976}, as a way of learning better representations of language \citep{peters-etal-2018-deep}, or,
more recently, for prompted generation \citep{gpt3}, where the distribution defined by a language model is employed in tasks like question-answering \citep{petroni-etal-2019-language}, style transfer \citep{reif-etal-2022-recipe}, and even sequence tagging \cite{liu+al.emnlp22}.\looseness=-1

More formally, a language model is typically defined to be a distribution over the \emph{countably} infinite set $\kleene{\alphabet}$ of all (finite) strings \cite{booth1973}.\footnote{Recall that $\kleene{\alphabet} \defeq \bigcup_n \alphabet^n$ where for $n \geq 0$, $\alphabet^n$ is the set of strings of length $n$.  The $^*$ is the Kleene closure operator.}
However, it has been shown that some classes of autoregressive language models have 
parameter settings in which the generative process terminates with probability $< 1$.
\Citet{welleck-etal-2020-consistency} discuss this issue for recurrent neural network  language models.  
Models whose generative process may fail to terminate are called \defn{non-tight} \cite[who discussed non-tight PCFGs]{chi-1999-statistical}.
If an autoregressive language model is non-tight, 
it may generate infinite sequences and MCMC algorithms over such models will not mix to the correct distribution.

It is here that a subtlety arises: 
the set of infinite sequences is \emph{uncountably} infinite.
Properly treating a distribution over this sample space requires a modicum of measure theory.\footnote{Indeed, our treatment resolves an imprecision present in the literature. 
For instance, \citet{welleck-etal-2020-consistency} discusses the probability of infinite sequences despite using the canonical definition of a language model as a distribution over $\kleene{\alphabet}$.}
To clarify the situation, we review the measure-theoretic treatment of distributions over 
infinite sequences.
We then make use of a termination symbol $\eos$ to define a random variable whose value can be a string, i.e., an element of $\kleene{\alphabet}$, or an infinite sequence.
In a tight language model, this random variable has probability 1 of being a string and hence finite.

Beyond offering a measure-theoretic formalization, our paper also demonstrates how tightness relates to the Borel--Cantelli lemmata, simplifying a recent result by \citet{meister-tacl2022}.
To conclude our paper, we analyze several language modeling architectures and give conditions on their tightness.
We demonstrate that $n$-gram language models---and more generally, language models defined by stochastic finite-state automata---can be non-tight, and we give a simple necessary and sufficient condition for tightness in terms of the inverse of the automaton's transition matrix.
This builds on a known result due to \citet{lehmann1977algebraic}.
We also discuss when neural language models become non-tight.
We prove that Transformer-based language models \cite{vaswani-2017-attention,gpt3} are always tight and that recurrent neural language models are always tight when they employ a bounded activation function.
However, we also exhibit a recurrent neural network (RNN) language model with a ReLU activation~\citep{relu} that is non-tight in a simpler construction than the one offered by \citet{chen-etal-2018-recurrent}.
As a byproduct, we also generalize and strengthen the results given by \citet{welleck-etal-2020-consistency}, who give a sufficient condition for tightness of recurrent neural language models in terms of the norm of the hidden state.\looseness=-1

\section{Motivating Examples}
\label{sec:examples}

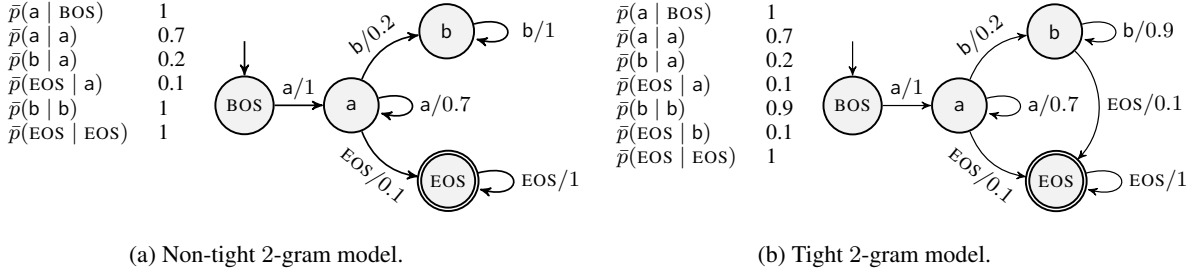
\begin{figure*}[ht]
    \begin{subfigure}{0.47\textwidth}
    \centering
    \scalebox{0.91}{
    \raisebox{-0.65\height}{
        \footnotesize
        \begin{tabular}{ll}
            $\pASM(\seq{a}\mid \textsc{bos})$ & 1 \\
            $\pASM(\seq{a}\mid \seq{a})$ & 0.7 \\
            $\pASM(\seq{b}\mid \seq{a})$ & 0.2 \\ 
            $\pASM(\eos\mid \seq{a})$ & 0.1 \\ 
            $\pASM(\seq{b}\mid \seq{b})$ & 1 \\ 
            $\pASM(\eos\mid\eos)$ & 1
        \end{tabular}}

      \raisebox{-0.8\height}{
        \footnotesize
        \begin{tikzpicture}[->,>=stealth',line width=0.7pt,node distance=1cm,minimum size=0.9cm,]
        \node[state] (a) {$\seq{a}$};
        \node[state,above right=0.5cm and 0.8cm of a] (b) {$\seq{b}$};
        \node[state,below right=0.5cm and 0.8cm of a,accepting] (eos) {{\footnotesize\eos}};
        \node[state,left=0.7cm of a] (bos) {{\footnotesize\bos}};
        \node[above=0.5cm of bos] (dummy) {};

        \path (dummy) edge node {} (bos);
        \path (bos) edge node [above,sloped, yshift=-6pt] {$\seq{a}/1$} (a);
        \path (a) edge [loop right] node [right] {$\seq{a}/0.7$} (a);
        \path (b) edge [loop right] node {$\seq{b}/1$} (b);
        \path (eos) edge [loop right] node {$\eos/1$} (eos);
        \path (a) edge [bend left] node [above, sloped, yshift=-6pt] {$\seq{b}/0.2$} (b);
        \path (a) edge [bend right] node [below, sloped, yshift=6pt] {$\eos/0.1$} (eos);
        \end{tikzpicture}
    }
    }
    \caption{Non-tight $2$-gram model.}
    \label{fig:non-tight-2-gram-example}
    \end{subfigure}
    \quad
    \begin{subfigure}{0.47\textwidth}
    \centering
    \scalebox{0.91}{
    \raisebox{-0.7\height}{
    \footnotesize
        \begin{tabular}{ll}
            $\pASM(\seq{a}\mid \textsc{bos})$ & 1 \\
            $\pASM(\seq{a}\mid \seq{a})$ & 0.7 \\
            $\pASM(\seq{b}\mid \seq{a})$ & 0.2 \\
            $\pASM(\eos\mid \seq{a})$ & 0.1 \\
            $\pASM(\seq{b}\mid \seq{b})$ & 0.9 \\
            $\pASM(\eos\mid \seq{b})$ & 0.1 \\
            $\pASM(\eos\mid\eos)$ & 1
        \end{tabular}}

      \raisebox{-0.8\height}{
        \footnotesize
        \begin{tikzpicture}[->,>=stealth',line width=0.5pt,node distance=1cm,minimum size=0.9cm,]
        \node[state] (a) {$\seq{a}$};
        \node[state,above right=0.5cm and 0.8cm of a] (b) {$\seq{b}$};
        \node[state,below right=0.5cm and 0.8cm of a,accepting] (eos) {{\footnotesize\eos}};
        \node[state,left=0.7cm of a] (bos) {{\footnotesize\bos}};
        \node[above=0.5cm of bos] (dummy) {};

        \path (dummy) edge node {} (bos);
        \path (bos) edge node [above,sloped, yshift=-6pt] {$\seq{a}/1$} (a);
        \path (a) edge [loop right] node [right] {$\seq{a}/0.7$} (a);
        \path (b) edge [loop right] node {$\seq{b}/0.9$} (b);
        \path (b) edge [bend left=45] node [right] {$\eos/0.1$} (eos);
        \path (eos) edge [loop right] node {$\eos/1$} (eos);
        \path (a) edge [bend left] node [above, sloped, yshift=-6pt] {$\seq{b}/0.2$} (b);
        \path (a) edge [bend right] node [below, sloped, yshift=6pt] {$\eos/0.1$} (eos);
        \end{tikzpicture}
    }
    }
     \caption{Tight $2$-gram model.}
     \label{fig:tight-2-gram-example}
    \end{subfigure}
    \caption{\label{fig:sfssm}Tight and non-tight bigram models, expressed as Mealy machines (see \cref{sec:sfssm}). Transitions with conditional probability of 0 are omitted. The termination probability at a state is represented by an \eos arc from that state.}
    \vspace{-0.65cm}
\end{figure*}

Let $\alphabet$ be an alphabet, i.e., a finite set of symbols,
and let $\eos \notin \alphabet$ be a distinguished end-of-sequence symbol.  Let 
$\alphabeteos \defeq \alphabet \cup \{\eos\}$. 
A \defn{string} of length $n \geq 0$ is a finite sequence $\vx = x_1 x_2 \ldots x_n$ where each $x_t \in \alphabet$. 
By convention, we say that $x_{n+1}=\eos$, although $x_{n+1}$ is not an element of the sequence $\vx$.
For any integer $1 \leq t \leq n+1$, we write $\vx_{<t}$ for the prefix $x_1 x_2\cdots x_{t-1}$.  

We now begin to distinguish between ``language models'' and ``sequence models.''
As is traditional in the NLP literature, we henceforth define a \defn{language model} to be a probability distribution over the countable set $\alphabet^*$ of all strings (see \cref{def:language-model}).
It is popular to specify such a distribution in terms of its conditional probabilities $\pASM(x_t \mid \vx_{<t})$.\looseness=-1
\begin{definition} 
\label{def:autoreg-model}
An \defn{\autoregmodel} (\defn{\autoregmodelAcronym}) is a conditional probability distribution $\pASM(x_t \mid \vx_{<t})$ where $x_t\in\alphabeteos$ and $\vx_{<t}\in\alphabeteos^*$.
\end{definition}
If $\pASM$ is an \autoregmodelAcronym, then we define a non-negative function $\pString$ over $\Sigma^*$ by
$\pString(\vx) \defeq \prod_{t=1}^{n+1}\pASM(x_t\mid\vx_{<t})=\pASM(\eos\mid\vx)\prod_{t=1}^{n}\pASM(x_t\mid \vx_{<t})$, where $n$ denotes the length of $\vx$. 

But is $\pString$ a language model?  Not always, since as we will see below, it is possible for $\pString(\Sigma^*) \defeq \sum_{\vx\in\alphabet^*}\pString(\vx) < 1$.  Of course this ``bad'' case never happens if the ASM's conditional probabilities are derived from a known LM, in which case $\pString$ simply recovers that LM.\footnote{That is, suppose the
\autoregmodelAcronym's conditional probabilities
match the conditional probabilities of some known language model $\pknownLM$: that is, $\pknownLM(X_t=x_t\mid X_1\ldots X_{t-1}=\vx_{<t}) = \pASM(x_t\mid\vx_{<t})$ whenever the former conditional probability is well-defined under the language model $\pknownLM$, i.e., whenever $x_t \in \alphabeteos$ and $\vx_{<t}\in\kleene{\alphabet}$ with $\pknownLM(X_1\ldots X_{t-1}=\vx_{<t}) > 0$.
Then by the chain rule of probability, $\pString(\vx)=\pknownLM(\vx)$ for each $\vx\in\alphabet^*$.  Thus $\pString=\pknownLM$, so $\pString$ is a language model.} 
In this case clearly $\pString(\alphabet^\ast)  = 1$.  
It follows that if $\pString(\alphabet^\ast) < 1$, then the \autoregmodelAcronym's conditional probabilities do \emph{not} match the conditional probabilities of any language model $\pknownLM$.

We now exhibit such a ``bad'' \autoregmodelAcronym.  Although the conditional probability distributions $\pASM(\cdot \mid \vx_{<t})$ each sum to 1 over $\alphabeteos$, they fail to combine into a model $\pString$ that sums to 1 over $\kleene{\alphabet}$ (i.e., a language model).
\begin{myexample}[non-tight bigram model]\label{ex:non-tight-2-gram}
Consider 
the bigram model defined in \cref{fig:non-tight-2-gram-example} over the alphabet $\alphabet = \{\seq{a}, \seq{b}\}$. Under this model, any finite string that contains the symbol \seq{b} will have probability 0, since $\pASM(\eos\mid\seq{b})=\pASM(\seq{a}\mid\seq{b})=0$. This implies $\pString(\alphabet^\ast) = \sum_{n=0}^\infty \pString(\seq{a}^n) = \sum_{n=0}^\infty (0.7)^n \cdot 0.1 = \frac{0.1}{1 - 0.7} = \frac{1}{3} < 1$.
\end{myexample}

\begin{myexample}[tight bigram model] \label{ex:tight-2-gram}
In contrast, in \cref{fig:tight-2-gram-example}, obtained from \cref{ex:non-tight-2-gram} by changing the arcs from the \seq{b} state, $\pString(\alphabet^*)=1$. See \cref{sec:tight-2-gram-proof} for details of this calculation.
\end{myexample}

\cref{ex:non-tight-2-gram} above confirms that the autoregressive formulation does not necessarily yield $\pString$ that is a valid distribution over $\alphabet^*$.

But if $\pString$ is not a language model, what is it?  It is intuitive to suspect that, in a model with $\pString(\alphabet^*)<1$, the remainder of the probability mass ``leaks'' to infinite sequences, i.e., the generative process may continue forever with probability $>0$.
We will make this intuition formal in \cref{sec:measure-lm}.
By analogy with \citet{chi-geman-1998-estimation} and \citet{cohen-johnson-2013-effect}, we refer to such models as \defn{non-tight}.\footnote{In \citet{chi-geman-1998-estimation} and \citet{cohen-johnson-2013-effect}, a PCFG is non-tight if its generative process may not terminate, and consequently the total probability of all finite trees is less than 1. 
}  

The non-tightness of \cref{ex:non-tight-2-gram} is related to the fact that the probability of $\eos$ is $0$ at some states, in contrast to \cref{ex:tight-2-gram}. 
However, requiring $\pASM(\eos \mid \vx_{<t}) > 0$ for all prefixes $\vx_{<t}$ is neither necessary nor sufficient to ensure tightness. 
It is not \emph{necessary} because one can, for example, construct an \autoregmodelAcronym 
in which $\pASM(\eos \mid \vx_{<t}) = 0.1$ when $t$ is even but $= 0$ otherwise.  Such a model generates only odd-length strings but is tight.
It is not \emph{sufficient} because of the following example, in which $\pASM(\eos \mid \vx_{<t})$ is always positive but decays so rapidly toward $0$ that the generative process might continue forever.
\begin{myexample}[non-tight RNN] \label{ex:non-tight-rnn}
Consider an RNN over a small alphabet $\alphabeteos = \{\seq{a}, \eos\}$ with the following hidden state recurrence:
\begin{align}
    h_0 &= 0, &
    h_t &= \relu(h_{t-1}+1).
\end{align}
In this case, the hidden state admits a closed-form expression $h_t=t \in \mathbb{R}$. 
Setting the (1-dimensional) embedding of the alphabet to be $v_\seq{a}=1$ and $v_\eos=0$, we arrive at
\begin{align}
    \pASM(\eos& \mid \vx_{<t}) 
    = \mathrm{softmax}(v_\seq{a}h_t, v_\eos h_t)_\eos  \nonumber \\
    &= \textstyle\frac{e^{0\cdot t}}{e^{1\cdot t} + e^{0 \cdot t}}
    = \frac{1}{e^t + 1} > 0.
\end{align}
The \eos probability is always strictly positive, but \Cref{prop:lm-tight-main} shows that this sequence model is non-tight. Numerically, $\pString(\kleene{\alphabet}) \approx 0.702 < 1$. 
\end{myexample}
On the other hand, an \autoregmodelAcronym may be tight after all if the probability of $\eos$ decays more slowly---even when it still approaches $0$.   
\begin{myexample}[tight RNN] \label{ex:tight-rnn}
Consider again an RNN over the alphabet $\alphabeteos=\{\seq{a},\eos\}$ with the following recurrence using $\softplus$ activation:\footnote{We use $\softplus$ instead of \relu{} to simplify arithmetics.}
\begin{align}
    h_1&=0, &
    h_t&=\log(\exp(h_{t-1}) + 1).
\end{align}
Starting from $h_1=0=\log 1$, a simple induction argument shows that
\begin{align}
    h_t = \log (\exp \log (t-1)+1)=\log t.
\end{align}
Again, setting $v_\seq{a}=1$ and $v_\eos=0$, we arrive at
\begin{align}
    \pASM(\eos \mid \vx_{<t}) &= \mathrm{softmax}(v_\seq{a}h_t, v_\eos h_t)_\eos  \\
    &= \textstyle\frac{e^{0\cdot \log t}}{e^{1\cdot \log t}+e^{0\cdot\log t}} = \frac{1}{t+1} > 0.  \nonumber
\end{align}
This decays slowly to 0: $\lim_{t \to \infty} \pASM(\eos \mid \vx_{<t}) = 0$, but since $\sum_{t=1}^\infty  \pASM(\eos \mid \vx_{<t}) = \infty$,  \cref{prop:div-implies-tight} below implies that this \autoregmodelAcronym is tight.
\end{myexample}

Finally, we illustrate the peril of not treating distributions over uncountable sets carefully.
\begin{myexample}[infinite coin toss]\label{ex:inf-coin-toss}
Consider the infinite independent fair coin toss model, where we aim to place a distribution over $\{\seq{H},\seq{T}\}^\infty$, the uncountable set of infinite sequences of $\{\seq{H},\seq{T}\}$. Intuitively, such a distribution corresponds to an ASM in which for all $\vx_{<t}$, $\pASM(\seq{H} \mid \vx_{<t}) = \pASM(\seq{T} \mid \vx_{<t}) = \frac{1}{2}$ and $\pASM(\eos \mid \vx_{<t})=0$.
Clearly, each individual infinite sequence over $\{\seq{H},\seq{T}\}$ should be assigned probability $(\frac{1}{2})^\infty = 0$. Without a formal foundation, one may arrive at the following paradox:
\begin{align}
    1&=\pString\left(\{\seq{H},\seq{T}\}^\infty\right) 
    =\textstyle\pString\left(\bigcup_{\bmomega\in\{\seq{H},\seq{T}\}^\infty} \{\bmomega\}\right) \\
    &=\sum_{\bmomega\in\{\seq{H},\seq{T}\}^\infty}\pString(\{\bmomega\}) = \sum_{\bmomega\in\{\seq{H},\seq{T}\}^\infty} 0 \stackrel{?}{=} 0.%
    \nonumber \tag*{\qedhere}%
\end{align}
\end{myexample}
Together, these examples suggest that one must take care to characterize tightness.
And, to the authors' surprise, it does not appear as if such a careful characterization yet exists in the NLP literature.\looseness=-1

\section{The Language Model Measure}
\label{sec:measure-lm}

In this section, we rigorously characterize the kind of distribution induced by an \autoregmodelAcronym. As mentioned earlier, an \autoregmodelAcronym can lose probability mass to the set of infinite sequences, $\alphabet^\infty$. However, $\alphabet^\infty$, unlike $\alphabet^*$, is uncountable, and it is due to this fact that we need to work explicitly with the measure-theoretic formulation of probability.

\subsection{Measure-Theoretic Background}
\label{sec:background-measure-theory}
The goal of measure-theoretic probability is to assign probability to subsets
of an \defn{outcome space} $\Omega$. For instance, in \cref{ex:inf-coin-toss}, $\Omega=\{\seq{H},\seq{T}\}^\infty$.
However, in the course of the study of measure theory, it has become clear that for many common $\Omega$, it is impossible to assign probabilities in a way that satisfies a set of reasonable desiderata.\footnote{Measure theory texts commonly discuss such desiderata and the dilemmas that comes with them. See, e.g., Chapter 7 in \citet{tao2016analysis}, Chapter 3 in \citet{royden1988real} or Chapter 3 in \citet{billingsley1986}. We also give an example in \cref{thm:ac-ch-impossible}.} 
Consequently, the standard approach to probability theory resorts
to only assigning probability to certain ``nice'' subsets of $\Omega$, which are referred to as \defn{events} or \defn{measurable subsets}, as in the theory of integration or functional analysis.
The set of measurable subsets is commonly denoted as $\calF$ (\cref{def:sigma-algebra}), and a probability measure $\P:\mathcal{F}\to[0,1]$ is the function that assigns a probability to each measurable subset. 
As it turns out, the following simple and reasonable requirements imposed on $\calF$ and $\P$ are enough to rigorously discuss probability \cite{kolmogorov1933}.

\begin{definition}
\label{def:sigma-algebra}
Let $\powerset\Omega$ be the powerset of $\Omega$.
Then $\calF \subseteq \powerset\Omega$ is called a \defn{$\boldsymbol{\sigma}$-algebra} (or $\sigma$-field)
over $\Omega$ if the following conditions hold:
\begin{enumerate}[1)]
    \item $\Omega\in\calF$,
    \item if $E\in\calF$, then its complement $E^\c \in \calF$,
    \item if $E_1,E_2,\dots$ is a finite or infinite sequence of sets in $\calF$, then $\bigcup_n E_n \in\calF$.
\end{enumerate}
If $\calF$ is a $\sigma$-algebra over $\Omega$, we call the tuple $(\Omega,\calF)$ a \defn{measurable space}.
\end{definition}
A measurable space guarantees that operations on countably many sets are always valid, and hence permits the following definition.
\begin{definition}
\label{def:probability-measure}
A \defn{probability measure} $\P$ over a measurable space $(\Omega,\calF)$ is a function $\P:\mathcal{F}\to[0,1]$ such that
\begin{enumerate}[1)]
    \item $\P(\Omega)=1$,
    \item if $E_1,E_2,\dots$ is a finite or infinite sequence of disjoint sets in $\calF$, then $\P(\bigcup_n E_n)=\sum_n \P(E_n)$.
\end{enumerate}
In this case we call $(\Omega,\calF, \P)$ a \defn{probability space}.  Note that it assigns measure only to the sets in $\mathcal{F}$; other sets are said to be \defn{non-measurable}.
\end{definition}

\subsection{Sequence Models}
\label{sec:background-language-model}

As we saw in \cref{sec:examples}, sampling successive symbols from a non-tight \autoregmodelAcronym has probability $> 0$ of continuing forever.
Hence, we hope to regard the \autoregmodelAcronym as defining a probability space over $\Omega = \kleene{\alphabet} \cup \alphabet^\infty$, where $\alphabet^\infty$ denotes the set of infinite strings\footnote{We will use the phrase ``infinite string'' in this paper when it is natural to do so, e.g., in the context of $\kleene{\alphabet} \cup \alphabet^\infty$. However, this is nonstandard terminology:  in computer science, \emph{string} generally refers to a finite object.} over $\alphabet$.
Note that this set $\Omega$ is uncountable whenever $|\alphabet| \geq 2$.
We will first need to turn it into a measurable space by defining an appropriate $\sigma$-algebra.  

This type of distribution is more general than a language model, which takes $\Omega$ to be the set $\alphabet^*$ of finite strings. 
To distinguish the two, we refer to a distribution over $\kleene{\alphabet} \cup \alphabet^\infty$ as a sequence model.  

\begin{definition} 
\label{def:sequence-model}
A \defn{sequence model} is a probability measure $P$ over the set $\kleene{\alphabet} \cup \alphabet^\infty$.
\end{definition}
Intuitively (we will make this precise later), the event $\alphabet^\infty\subset\kleene{\alphabet} \cup \alphabet^\infty$ in \cref{def:sequence-model} represents \defn{non-termination} of the generating process, i.e., it attempts to generate an infinitely long sequence. If this never happens, we have a language model.

\begin{definition}\label{def:language-model}
A \defn{language model} is a probability measure $P$ over just $\alphabet^*$.
Equivalently, it is a sequence model $P$ such that $P(\alphabet^\infty)=0$.
\end{definition}
Our goal in the rest of this section is to rigorously construct a sequence model $P$ 
that encodes the conditional probabilities of a given \autoregmodelAcronym. Since the \autoregmodelAcronym specifies conditional distributions over the augmented alphabet $\alphabeteos$, we first use it to construct a probability measure $\P$ over a measurable space $(\alphabeteos^\infty, \sigma(\overcalC))$.  We then derive our sequence model $P$ from $\P$ as the probability measure of a random variable $X$ in a measurable space $(\kleene{\alphabet} \cup \alphabet^\infty, \sigma(\calC))$.  The $\sigma$-algebras $\sigma(\overcalC)$ and $\sigma(\calC)$ will be built below.

\subsection{Pre-Measure} \label{sec:pre-measure}

As mentioned in \cref{sec:background-measure-theory}, it is often impossible to measure the probability of every single subset of $\Omega$. 
For example, in the infinite coin toss model in \cref{ex:inf-coin-toss}, we might begin by reasonably assigning probability 0 to each individual sequence $\bmomega\in\{\seq{H}, \seq{T}\}^\infty$.
Unfortunately, it is then impossible to assign probability to \emph{every} subset of $\{\seq{H}, \seq{T}\}^\infty$; we must restrict our measurable space to a strict subset of $\powerset{\Omega}$,
where $\powerset{}$ is the powerset operator.%

\begin{restatable}{theorem}{ACCHImpossible} \label{thm:ac-ch-impossible}
Assuming the Axiom of Choice and the Continuum Hypothesis, there exists no probability measure $\P$ over $(\{\seq{H}, \seq{T}\}^\infty, \powerset{\{\seq{H}, \seq{T}\}^\infty)}$ such that $\P(\{\bmomega\})=0$ for each $\bmomega\in\{\seq{H}, \seq{T}\}^\infty$.
\end{restatable}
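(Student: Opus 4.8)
The plan is to prove this as a special case of Ulam's theorem: no set of cardinality $\aleph_1$ carries a $\sigma$-additive probability measure, defined on its full powerset, that assigns measure zero to every singleton. First I would invoke the two hypotheses to reduce to an abstract set of size $\aleph_1$. The Continuum Hypothesis gives $|\{\seq{H},\seq{T}\}^\infty| = 2^{\aleph_0} = \aleph_1$, and the Axiom of Choice, via the well-ordering theorem, lets me enumerate $\{\seq{H},\seq{T}\}^\infty = \{\bmomega_\alpha : \alpha < \omega_1\}$ in order type $\omega_1$. After this step the coin-toss structure is irrelevant; in particular I make no assumption relating $\P$ to the uniform measure — only $\sigma$-additivity and vanishing on points are used.

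The key construction is an \emph{Ulam matrix}. Each $\beta < \omega_1$ is a countable ordinal, so the initial segment $\{\alpha : \alpha \leq \beta\}$ is countable; using AC, fix for every $\beta < \omega_1$ an injection $f_\beta \colon \{\alpha : \alpha \leq \beta\} \to \N$. For $n \in \N$ and $\alpha < \omega_1$ set
\[
A^\alpha_n \defeq \{\bmomega_\beta : \alpha \leq \beta < \omega_1 \text{ and } f_\beta(\alpha) = n\}.
\]
I would then check the two properties that make this matrix useful. (i) For fixed $n$, the family $\{A^\alpha_n : \alpha < \omega_1\}$ is pairwise disjoint: if $\bmomega_\beta \in A^\alpha_n \cap A^{\alpha'}_n$, then $\alpha,\alpha' \leq \beta$ and $f_\beta(\alpha) = n = f_\beta(\alpha')$, so $\alpha = \alpha'$ by injectivity of $f_\beta$. (ii) For fixed $\alpha$, $\bigcup_n A^\alpha_n = \{\bmomega_\beta : \beta \geq \alpha\}$, since every $\beta \geq \alpha$ lies in the domain of $f_\beta$; hence its complement $\{\bmomega_\beta : \beta < \alpha\}$ is countable.

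Now assume for contradiction that such a $\P$ exists. By countable additivity and $\P(\{\bmomega\}) = 0$ for all $\bmomega$, every countable subset of $\{\seq{H},\seq{T}\}^\infty$ is $\P$-null; hence by (ii), $\P\bigl(\bigcup_n A^\alpha_n\bigr) = 1$ for each $\alpha$, so some $n(\alpha) \in \N$ has $\P(A^\alpha_{n(\alpha)}) > 0$. Since $\alpha \mapsto n(\alpha)$ maps the uncountable set $\omega_1$ into the countable set $\N$, there exist a single $n^\star$ and an uncountable $I \subseteq \omega_1$ with $n(\alpha) = n^\star$ for all $\alpha \in I$. Writing $I = \bigcup_{k \geq 1}\{\alpha \in I : \P(A^\alpha_{n^\star}) > 1/k\}$, uncountability of $I$ forces one piece to be infinite; picking distinct $\alpha_1,\alpha_2,\dots$ in it and using (i) for disjointness, countable additivity gives $\P\bigl(\bigcup_i A^{\alpha_i}_{n^\star}\bigr) = \sum_i \P(A^{\alpha_i}_{n^\star}) = \infty$, contradicting $\P(\{\seq{H},\seq{T}\}^\infty) = 1$.

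The only real obstacle is setting up the Ulam matrix and verifying (i)--(ii); once those are in hand the contradiction is immediate, so I do not anticipate difficulty beyond bookkeeping with countable ordinals. It is worth stating explicitly where the hypotheses enter: CH is used only to force $2^{\aleph_0} = \aleph_1$, so that the sample space can be enumerated in type $\omega_1$ (the argument does not on its own handle an arbitrary uncountable domain), and AC is used for the well-ordering, for the simultaneous choice of the injections $f_\beta$, and for the pigeonhole step (a countable union of countable sets is countable). I would also remark in passing that the same proof shows no probability space $(\Omega,\powerset{\Omega},\P)$ with all singletons null can have $|\Omega| \leq \aleph_1$.
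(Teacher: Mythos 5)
Your proof is correct and follows the same overall route as the paper: use CH to identify $\mathrm{card}(\{\seq{H},\seq{T}\}^\infty)=2^{\aleph_0}$ with $\aleph_1$, and then apply Ulam's theorem that no $\sigma$-additive finite measure on the full powerset of a set of cardinality $\aleph_1$ can vanish on all singletons without vanishing identically. The one substantive difference is that the paper stops there---it states Ulam's theorem as quoted from Oxtoby and explicitly omits its proof---whereas you go on to prove it via the Ulam matrix: the injections $f_\beta$ into $\N$, the sets $A^\alpha_n$, the row-disjointness and column-cocountability properties, and the pigeonhole contradiction with countable additivity. All of these steps check out (in particular, disjointness for fixed $n$ does follow from injectivity of $f_\beta$, and the final contradiction only needs infinitely many $\alpha$ with $\P(A^\alpha_{n^\star})>1/k$, which your decomposition of $I$ supplies). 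So your write-up is a self-contained version of the paper's citation-based argument; what it buys is independence from the external reference, at the cost of the ordinal bookkeeping, and your closing remark that the same argument rules out any $(\Omega,\powerset{\Omega},\P)$ with $|\Omega|\leq\aleph_1$ and null singletons is exactly the generality the cited theorem already provides.
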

\begin{proof} This is a direct consequence of \citet{ulam1930}. See \cref{sec:ac-ch-impossible-proof} for a discussion.\looseness=-1
\end{proof}

We will address this with well-known methods. A versatile theorem of Carath\'{e}odory provides a natural way to construct a probability space for sequences, in which prefix probabilities are well-defined. We first review two needed definitions.%
\begin{definition}
$\calA \subseteq \powerset{\Omega}$ is called an \defn{algebra} (or field) over $\Omega$ if
\begin{enumerate}[1)]
    \item $\Omega\in\calA$,
    \item if $E\in\calA$, then $E^\c \in \calA$,
    \item if $E_1,E_2 \in \calA$, then $E_1 \cup E_2 \in \calA$.
\end{enumerate}
\end{definition}

\begin{definition}
Let $\calA$ be an algebra over some set $\Omega$. A \defn{probability pre-measure} over $(\Omega,\calA)$ is a function $\P_0:\calA \to [0,1]$ such that
\begin{enumerate}[1)]
    \item $\P_0(\Omega)=1$,
    \item if $E_1,E_2,\dots$ is a (countable) sequence of disjoint sets in $\calA$ \emph{whose (countable) union is also in} $\calA$, then $\P_0(\cup_{n=1}^\infty E_n)=\sum_{n=1}^{\infty} \P_0(E_n)$.
\end{enumerate}
\end{definition}

Note that the only difference between a $\sigma$-algebra (\cref{def:sigma-algebra}) and an algebra is that condition 3 is weakened from countable to finite, and the only difference between a probability measure (\cref{def:probability-measure}) and a pre-measure is that the latter is defined with respect to an algebra instead of a $\sigma$-algebra.

The idea behind Carath\'eodory's Extension Theorem is that there is often a simple construction of an algebra $\calA$ over $\Omega$ such that there is a natural way to define a probability pre-measure. One can then extend this probability pre-measure to a probability measure that is both minimal and unique in a precise sense. For example, the standard Lebesgue measure over the the real line can be constructed in this way.
For our case of infinite sequences, we will first construct an algebra over $\Omega=\alphabeteos^\infty$ for some alphabet $\alphabet$. Then, assuming we are given an \autoregmodelAcronym $\pASM$ over $\alphabeteos$, we can associate the algebra with a pre-measure that is consistent with $\pASM$. We will make use of the following definition to construct the algebra:

\begin{definition}\label{def:cylinder-set}
Given any set $H\subseteq\alphabeteos^k$, define its \defn{cylinder set} (of \defn{rank} $k$) to be
\begin{equation}
\overC(H)\defeq\left\{\vx\bmomega~:~\vx\in H, \bmomega\in\alphabeteos^\infty\right\}
\end{equation}
\end{definition}
In essence, a cylinder set of rank $k$ 
is the set of infinite strings that share their $k$-prefix with some string $\vx \in H\subseteq\alphabeteos^k$.
For a length-$k$ string $\bm{x}=x_1\dotsm x_k$, the rank-$k$ cylinder set $\overC(\bm{x})\defeq\overC(\{\bm{x}\})$ is the set of all infinite strings prefixed by $\bm{x}$.\footnote{\label{fn:thin-cylinder}This type of cylinder set, i.e., one that is generated by a singleton, is also called a \defn{thin cylinder}.} We denote the collection of all rank-$k$ cylinder sets by $\overcalC_k\defeq\left\{\overline{C}(H)~:~H\in\powerset{\alphabeteos^k}\right\}$ and define $\overcalC \defeq \bigcup_{k=1}^\infty\overcalC_k$ to be the collection of all cylinder sets over $\Omega$.\footnote{Observe that $\overcalC_1\subset\overcalC_2\subset\overcalC_3\subset\dotsm$.} 
\begin{restatable}{lemma}{eosCylinderIsAlgebra} \label{lem:cylinder-algebra}
$\overcalC\subset\powerset\Omega$ is an algebra over $\Omega=\alphabeteos^\infty$.
\end{restatable}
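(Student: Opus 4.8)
The plan is to verify the three defining properties of an algebra directly from \cref{def:cylinder-set}, using two elementary facts: (i) for a fixed rank $k$, the map $H \mapsto \overC(H)$ carries Boolean operations on $\powerset{\alphabeteos^k}$ to the corresponding set operations on subsets of $\Omega$; and (ii) a cylinder set of a given rank can be re-expressed at any larger rank, so that the nesting $\overcalC_1 \subseteq \overcalC_2 \subseteq \dotsm$ holds and any two cylinder sets may be regarded as having a common rank.

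For (i), fix $k$ and observe that whether $\vx\bmomega \in \overC(H)$ depends only on whether the length-$k$ prefix of $\vx\bmomega$ lies in $H$. Hence $\overC(\alphabeteos^k) = \Omega$, $\overC(H)^{\c} = \overC(\alphabeteos^k \setminus H)$, and $\overC(H_1) \cup \overC(H_2) = \overC(H_1 \cup H_2)$ for all $H, H_1, H_2 \subseteq \alphabeteos^k$. For (ii), given $H \subseteq \alphabeteos^k$ and $j \ge k$, let $H'$ be the set of length-$j$ strings whose length-$k$ prefix lies in $H$; then $\overC(H) = \overC(H')$, since an infinite string has its $k$-prefix in $H$ exactly when its $j$-prefix is in $H'$. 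In particular $\overcalC_k \subseteq \overcalC_j$ whenever $k \le j$, which also justifies the footnote's remark that the $\overcalC_k$ are nested.

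With these in hand, the three conditions are immediate. Property 1: $\Omega = \overC(\alphabeteos) \in \overcalC_1 \subseteq \overcalC$. Property 2: if $E \in \overcalC$, choose $k$ and $H \subseteq \alphabeteos^k$ with $E = \overC(H)$; then $E^{\c} = \overC(\alphabeteos^k \setminus H) \in \overcalC_k \subseteq \overcalC$. Property 3: if $E_1, E_2 \in \overcalC$, use (ii) to pick a common rank $k$ and sets $H_1, H_2 \subseteq \alphabeteos^k$ with $E_i = \overC(H_i)$, so that $E_1 \cup E_2 = \overC(H_1 \cup H_2) \in \overcalC_k \subseteq \overcalC$.

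I do not anticipate a genuine obstacle here: the only subtlety is that two given cylinder sets may be presented at different ranks, and step (ii) — lifting to a common rank — is precisely what removes it. It is worth noting in passing that $\overcalC$ is \emph{not} a $\sigma$-algebra, since a countable union of cylinder sets of unbounded rank need not itself be a cylinder set; this is why \cref{sec:pre-measure} goes on to invoke Carath\'eodory's extension theorem rather than working with $\overcalC$ directly.
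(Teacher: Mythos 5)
Your proof is correct and follows essentially the same route as the paper's: $\Omega$ is itself a cylinder set, complements are taken within a fixed rank, and unions are handled by lifting both sets to a common rank. The only difference is that you spell out the rank-lifting construction ($H' = $ length-$j$ extensions of $H$) explicitly, which the paper leaves implicit.
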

\begin{proof}
See \cref{sec:pre-measure-proof}.
\end{proof}
 We are now ready to define the pre-measure $\P_0$ for the cylinder algebra $\overcalC$. Given an \autoregmodelAcronym $\pASM$ and any set $\overC(H)\in\overcalC$, let
\begin{align}
  \textstyle\P_0(\overC(H))\defeq\sum_{\bm{x}\in H} \pASM(\bm{x}) \label{eq:pre-measure}
\end{align}
where, denoting the length of $\vx$ by $k$,\looseness=-1
\begin{align}\label{eq:cylinder-prod}
    \textstyle\pASM(\bm{x}) \defeq \prod_{t=1}^k \pASM(x_t\mid \bm{x}_{<t}).
\end{align}
We confirm in \cref{prop:pre-measure-well-defined} that $\P_0$ is well-defined
even though the cylinder set $\overC(H)$ may also arise as $\overC(H')$ where $H'\neq H$.\footnote{For example, in the infinite coin toss model, $\overC(\seq{H})=\overC(\{\seq{HH},\seq{HT}\})$.}

\begin{restatable}{lemma}{eosCylinderPremeasure} \label{lem:cylinder-premeasure}
$\P_0$ is a pre-measure over $\overcalC$.
\end{restatable}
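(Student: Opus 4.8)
The plan is to verify the two defining properties of a probability pre-measure directly from \cref{eq:pre-measure}. Normalization is immediate: since $\Omega=\alphabeteos^\infty=\overC(\alphabeteos)$ is the rank-$1$ cylinder generated by all of $\alphabeteos$, \cref{eq:pre-measure} gives $\P_0(\Omega)=\sum_{x\in\alphabeteos}\pASM(x)$, and by \cref{eq:cylinder-prod} each $\pASM(x)$ is just the conditional probability of $x$ given the empty prefix, so the sum is $1$ because $\pASM$ is a conditional distribution over $\alphabeteos$. The same telescoping of conditionals (sum out the last symbol, then the next-to-last, and so on) shows $\sum_{\bm{x}\in\alphabeteos^k}\pASM(\bm{x})=1$ for every $k$, a fact I reuse below. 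The substance of the lemma is therefore countable additivity.

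First I would establish \emph{finite} additivity. The only real issue is that the cylinder sets in a finite disjoint family need not be presented at a common rank; but since $\overcalC_1\subset\overcalC_2\subset\cdots$, any such family can be rewritten at a common rank $k$ via the lifting $\overC(H)=\overC\bigl(\{\bm{x}\bm{y}:\bm{x}\in H,\ \bm{y}\in\alphabeteos^{k-j}\}\bigr)$ for $H\subseteq\alphabeteos^{j}$ with $j\le k$, and by a direct telescoping computation (equivalently, by well-definedness of $\P_0$, \cref{prop:pre-measure-well-defined}) this rewriting leaves $\P_0$ unchanged. At a fixed rank $k$ the map $H\mapsto\overC(H)$ is a bijection $\powerset{\alphabeteos^{k}}\to\overcalC_k$ (recover $H$ as the set of $k$-prefixes of $\overC(H)$), under which disjoint unions correspond to disjoint unions and $\P_0(\overC(H))=\sum_{\bm{x}\in H}\pASM(\bm{x})$. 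Hence if $\overC(H)=\overC(H_1)\sqcup\dots\sqcup\overC(H_m)$ with everything presented at rank $k$, then $H=H_1\sqcup\dots\sqcup H_m$ inside $\alphabeteos^{k}$, and the claim reduces to additivity of a finite sum over $\alphabeteos^{k}$; in particular $\P_0(\emptyset)=\P_0(\overC(\emptyset))=0$.

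The main step, and the only place where finiteness of $\alphabet$ is genuinely used, is upgrading this to countable additivity. Here I would invoke compactness: $\Omega=\alphabeteos^\infty$ is a countable product of finite discrete spaces, hence compact in the product topology, and each cylinder set $\overC(H)$ with $H\subseteq\alphabeteos^{k}$ is a finite union of basic clopen sets $\overC(\{\bm{x}\})$, so it is itself clopen (cf.\ \cref{lem:cylinder-algebra}) and therefore compact. Now suppose $\overC(H)=\bigsqcup_{n=1}^{\infty}E_n$ with the $E_n\in\overcalC$ pairwise disjoint. The $E_n$ are open and cover the compact set $\overC(H)$, so finitely many already cover it, say $E_1,\dots,E_N$. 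For $n>N$, disjointness forces $E_n=E_n\cap\overC(H)=E_n\cap(E_1\cup\dots\cup E_N)=\emptyset$, so $\P_0(E_n)=0$. Therefore $\sum_{n=1}^{\infty}\P_0(E_n)=\sum_{n=1}^{N}\P_0(E_n)=\P_0(\overC(H))$ by finite additivity (extended to $N$ summands by induction), which is exactly what the pre-measure axiom demands.

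I expect the compactness argument to be the crux: finite additivity is essentially bookkeeping once \cref{prop:pre-measure-well-defined} is in hand, whereas countable additivity is false for infinite $\alphabet$ and genuinely relies on $\overC(H)$ being compact together with the $E_n$ being open. An alternative to the explicit finite-subcover step is to prove the equivalent statement that $\P_0$ is continuous from above at $\emptyset$ — i.e.\ $\P_0(A_n)\downarrow 0$ whenever $A_n\in\overcalC$ with $A_n\downarrow\emptyset$ — which for cylinder sets over a finite alphabet follows from a König's-lemma-style argument producing a point of $\bigcap_n A_n$ whenever the $\P_0(A_n)$ stay bounded away from $0$; I would mention this equivalence but carry out the compactness version in full.
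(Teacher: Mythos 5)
Your proof is correct, and it reaches countable additivity by a somewhat different route than the paper. Both arguments ultimately rest on the same topological fact---$\alphabeteos^\infty$ is compact by Tychonoff's theorem (\cref{thm:tychonoff}) and every cylinder set is clopen, hence compact---but they deploy it differently. The paper first proves finite additivity for two disjoint same-rank cylinders, then invokes a general criterion (\cref{lem:pre-measure-continuity}: a finitely additive pre-measure that is continuous from above at $\emptyset$ is countably additive) and verifies that continuity via Cantor's intersection theorem applied to a decreasing sequence of compact cylinders. You instead argue directly on a countable disjoint decomposition $\overC(H)=\bigsqcup_n E_n$: compactness of $\overC(H)$ plus openness of each $E_n$ yields a finite subcover, and disjointness then forces all but finitely many $E_n$ to be empty, reducing countable additivity to finite additivity. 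Your version is marginally more self-contained (no continuity lemma needed) and makes transparent the underlying reason the axiom holds: nontrivial countably infinite disjoint decompositions of a cylinder into cylinders simply do not exist. The paper's version follows the standard textbook template, and its continuity lemma is a reusable tool. Your handling of normalization and of finite additivity (lifting to a common rank via the well-definedness argument of \cref{prop:pre-measure-well-defined}, then using the rank-$k$ bijection $H\mapsto\overC(H)$ under which disjointness of cylinders corresponds to disjointness of generating sets) is also sound, and somewhat more complete than the paper's, which only spells out the two-set, same-rank case.

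One small correction to a side remark: it is not true that ``countable additivity is false for infinite $\alphabet$.'' For a countably infinite alphabet the compactness \emph{argument} breaks down (the product space is no longer compact), but the conclusion still holds, e.g.\ via the Ionescu--Tulcea theorem. What is true is that your proof, like the paper's, genuinely uses finiteness of $\alphabeteos$; this does not affect the correctness of your argument for the lemma as stated, since the paper fixes $\alphabet$ to be finite.
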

\begin{proof}
See \cref{sec:pre-measure-proof}.
\end{proof}

\subsection{Extension of Pre-Measure} \label{sec:extension}

We have now gathered all the ingredients needed to state Carath\'eodory's theorem.

\begin{restatable}[Carath\'eodory's Extension Theorem]{theorem}{caratheodory} \label{thm:caratheodory}
Given an algebra $\calA$ over some set $\Omega$ and a probability pre-measure $\P_0:\calA\to[0,1]$, there exists a probability space $(\Omega,\calF,\P)$ such that $\mathcal{A} \subset \calF$ and $\P|_\mathcal{A}=\P_0$. 
Furthermore, the $\sigma$-algebra $\calF$
depends only on $\calA$ and is minimal and unique---thus we may denote it by $\sigma(\calA)$---and the probability measure $\P$ is unique.
\end{restatable}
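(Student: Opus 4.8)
The plan is to prove this via the classical outer-measure construction (see, e.g., Chapter 3 in \citet{billingsley1986}); I sketch the route here. First, define the \emph{outer measure} $\P^*:\powerset{\Omega}\to[0,1]$ induced by $\P_0$ by
\[
  \P^*(E) \defeq \inf\Bigl\{\textstyle\sum_{n=1}^\infty \P_0(A_n) \;:\; A_n\in\calA,\ E\subseteq\textstyle\bigcup_{n=1}^\infty A_n\Bigr\},
\]
where the infimum ranges over all countable covers of $E$ by algebra sets (such covers exist since $\Omega\in\calA$). A short argument shows $\P^*(\emptyset)=0$, that $\P^*$ is monotone, and that $\P^*$ is countably subadditive, the last via the standard $\eps 2^{-n}$ perturbation of near-optimal covers.

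Next, following Carath\'eodory, introduce the collection
\[
  \mathcal{M} \defeq \bigl\{E\subseteq\Omega \;:\; \P^*(A) = \P^*(A\cap E) + \P^*(A\cap E^\c)\ \text{ for all }A\subseteq\Omega\bigr\}
\]
of \emph{$\P^*$-measurable} sets, and prove Carath\'eodory's lemma: $\mathcal{M}$ is a $\sigma$-algebra, and the restriction of $\P^*$ to $\mathcal{M}$ is a countably additive probability measure. Closure of $\mathcal{M}$ under complementation is immediate from the symmetry of the defining identity; closure under finite unions, and then under countable disjoint unions together with additivity, is the technical core, obtained by repeatedly applying the measurability criterion to judiciously chosen test sets $A$ and invoking subadditivity. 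I expect this lemma to be the main obstacle; the remaining steps are comparatively routine bookkeeping.

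It then remains to tie $(\mathcal{M},\P^*)$ back to $(\calA,\P_0)$. I would show (i) $\calA\subseteq\mathcal{M}$, i.e., every algebra set satisfies the Carath\'eodory criterion, using finite additivity of $\P_0$ and the definition of $\P^*$; and (ii) $\P^*|_{\calA}=\P_0$, where $\P^*\le\P_0$ on $\calA$ is trivial (use the one-element cover) while the reverse inequality $\P_0(A)\le\sum_n\P_0(A_n)$ for any algebra cover of $A$ is exactly where the \emph{countable} additivity axiom on the pre-measure $\P_0$ is used, after disjointifying via $B_n\defeq A\cap A_n\cap(A_1\cup\dots\cup A_{n-1})^\c\in\calA$. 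Given (i) and (ii), since $\mathcal{M}$ is a $\sigma$-algebra containing $\calA$ we get $\sigma(\calA)\subseteq\mathcal{M}$; setting $\calF\defeq\sigma(\calA)$ and $\P\defeq\P^*|_{\calF}$ then yields a probability space with $\calA\subset\calF$ and $\P|_{\calA}=\P_0$.

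For the structural assertions: $\sigma(\calA)$ is minimal and unique by construction, being the intersection of all $\sigma$-algebras over $\Omega$ that contain $\calA$ (this intersection is itself a $\sigma$-algebra and is nonempty, as $\powerset{\Omega}$ is one such). For uniqueness of $\P$, suppose $\P'$ is another probability measure on $\sigma(\calA)$ with $\P'|_{\calA}=\P_0$. Since an algebra is closed under finite intersections, $\calA$ is a $\pi$-system, and $\{E\in\sigma(\calA):\P(E)=\P'(E)\}$ is a $\lambda$-system containing $\calA$ (here finiteness, $\P(\Omega)=\P'(\Omega)=1$, is what makes the complement step of the $\lambda$-system go through); Dynkin's $\pi$--$\lambda$ theorem then forces $\P=\P'$ on $\sigma(\calA)$, completing the plan.
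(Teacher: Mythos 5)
Your proposal is correct and follows essentially the same route as the paper's own proof sketch: the outer measure built from countable covers by algebra sets, the Carath\'eodory measurability criterion yielding a $\sigma$-algebra containing $\calA$ on which $\P^*$ is countably additive, restriction to $\sigma(\calA)$ for minimality, and Dynkin's $\pi$--$\lambda$ theorem for uniqueness. The only difference is one of detail---the paper defers the technical steps to standard references, while you spell out more of them (correctly, including the disjointification step where countable additivity of $\P_0$ is used).
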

\begin{proof}[Proof Sketch]
See \cref{sec:extension-proof}.
\end{proof}

Applying Carath\'eodory's extension theorem to our cylinder algebra $\overcalC$ and pre-measure $\P_0$, we see that there exists a probability space $(\alphabeteos^\infty,\sigma(\overcalC),\P)$ over $\alphabeteos^\infty$ that agrees with the \autoregmodelAcronym $\pASM$'s probabilities.
It is a fair question to ask what kinds of sets are non-measurable under this construction; we discuss this in \Cref{sec:nonmeasurable}.
\subsection{A String-Valued Random Variable}\label{sec:rv}
Having constructed the probability space $(\alphabeteos^\infty,\sigma(\overcalC),\P)$, we now demonstrate how to use it to induce a probability space over $\kleene{\alphabet} \cup \alphabet^{\infty}$ as required by \cref{def:sequence-model}. 
We will achieve this through the use of a random variable.
\begin{definition}[random variable]\label{def:rv}
A mapping $X:\Omega\to S$ between two measurable spaces $(\Omega, \calF)$ and $(A, \calG)$ is an $(A, \calG)$-valued \defn{random variable}, or a measurable mapping, if, for all $B \in \calG$,%
\begin{equation}
    X^{-1}(B)\defeq \{\omega\in\Omega:X(\omega)\in B\} \in \mathcal{F}.
\end{equation}
\end{definition}
To construct a random variable that takes values in $\alphabet^*\cup \alphabet^\infty$, \cref{def:rv} requires us to first construct a $\sigma$-algebra over $\alphabet^*\cup\alphabet^\infty$. We will do so in a similar fashion as we constructed $(\alphabeteos^\infty,\overline{\calC})$.
Given $H\subseteq \alphabet^k$, define a rank-$k$ cylinder set in $\alphabet^*\cup\alphabet^\infty$ to be\looseness=-1
\begin{align}
    C(H)\defeq\{\bm{x}\bmomega~:~\bm{x}\in H, \bmomega \in \alphabet^*\cup\alphabet^\infty\}. \label{eq:rv-cylinder}
\end{align}
Let $\mathcal{C}_k$ be the set of all rank-$k$ cylinder sets.
Define $\calC \defeq \cup_{k=1}^\infty \calC_k$.
Then, $\sigma\left(\calC\right)$ is a $\sigma$-algebra by the same reasoning as in \cref{lem:cylinder-algebra} and \cref{thm:caratheodory}.
We can now define the random variable $X$ by\footnote{In this definition, the position $k \leq \infty$ of the first \eos---a \defn{stopping time}---is itself a random variable.}
\begin{align}\label{eq:rv}
\!\!X(\bmomega) = \begin{cases}
\bmomega_{<k} & \textbf{if }\omega_k\text{ is the first \eos in }\bmomega   \\
\bmomega & \textbf{otherwise} \text{     (}\textbf{if } \eos \notin \bmomega\text{)}
\end{cases}
\end{align}
where $\bmomega\in \alphabeteos^\infty$.
We claim that $X$ is well-defined:
\begin{restatable}{proposition}{rvMeasurable}
The function $X:(\alphabeteos^\infty,\sigma(\overcalC))\to(\alphabet^*\cup\alphabet^\infty, \sigma(\calC))$
defined in \cref{eq:rv} is a measurable mapping.
\end{restatable}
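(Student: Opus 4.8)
The plan is to verify the defining condition of \cref{def:rv}: for every $B \in \sigma(\calC)$, the preimage $X^{-1}(B)$ lies in $\sigma(\overcalC)$. Since $\sigma(\calC)$ is generated by the collection $\calC$ of cylinder sets, it suffices to check this for $B \in \calC$ only; this is the standard measure-theoretic reduction, justified by noting that $\{B \subseteq \alphabet^*\cup\alphabet^\infty : X^{-1}(B) \in \sigma(\overcalC)\}$ is itself a $\sigma$-algebra (preimages commute with complements and countable unions), so if it contains the generating set $\calC$ it contains all of $\sigma(\calC)$. I would state this reduction first and then reduce further to the thin cylinders $C(\bm{y})$ for $\bm{y}\in\alphabet^k$, since every $C(H) = \bigcup_{\bm{y}\in H} C(\bm{y})$ is a finite union (as $H \subseteq \alphabet^k$ is finite), and a finite union of preimages in $\sigma(\overcalC)$ is again in $\sigma(\overcalC)$.

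The heart of the argument is then an explicit computation of $X^{-1}(C(\bm{y}))$ for a fixed string $\bm{y} = y_1\cdots y_k \in \alphabet^k$. Unwinding \cref{eq:rv}, an outcome $\bmomega \in \alphabeteos^\infty$ satisfies $X(\bmomega) \in C(\bm{y})$ exactly when the $\alphabet^*\cup\alphabet^\infty$-string $X(\bmomega)$ has $\bm{y}$ as a prefix, i.e. when the first $k$ non-\eos{} behavior of $\bmomega$ agrees with $\bm{y}$ and no \eos{} occurs before position $k+1$. Concretely, this means $\bmomega_{<k+1} = y_1 \cdots y_k$ as the literal rank-$k$ prefix (recall $y_i \in \alphabet$, so none of these positions is \eos), with no constraint on $\omega_{k+1}, \omega_{k+2}, \dots$. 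Hence
\begin{align}
X^{-1}(C(\bm{y})) = \overC(\bm{y}) = \overC(\{y_1\cdots y_k\}) \in \overcalC_k \subseteq \sigma(\overcalC),
\end{align}
the rank-$k$ thin cylinder in $\alphabeteos^\infty$. The one subtlety to handle carefully is the two-case structure of \cref{eq:rv}: whether the first \eos{} in $\bmomega$ occurs at some position $j$ or never, in either branch $X(\bmomega)$ has prefix $\bm{y}$ iff $j > k$ and $\omega_1\cdots\omega_k = \bm{y}$; the ``otherwise'' branch is simply the sub-case $j = \infty$, and it is covered by the same prefix condition, so the two cases collapse into the single clean description above. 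I would spell out this case check as the main content of the proof.

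With $X^{-1}(C(\bm{y})) \in \sigma(\overcalC)$ established for thin cylinders, the finite-union step gives $X^{-1}(C(H)) = \bigcup_{\bm{y}\in H}\overC(\bm{y}) \in \sigma(\overcalC)$ for all $H \in \powerset{\alphabet^k}$ and all $k$, hence $X^{-1}(B) \in \sigma(\overcalC)$ for all $B \in \calC$, and the generating-class argument finishes. The only real obstacle is bookkeeping rather than mathematics: one must keep straight the distinction between a cylinder in $\alphabeteos^\infty$ (where coordinates may equal \eos) and a cylinder in $\alphabet^*\cup\alphabet^\infty$ (where \eos{} has been stripped and the tail may be finite), and confirm that the map $X$ sends the former exactly onto the latter at matching ranks. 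Once that correspondence is pinned down the proof is a couple of lines. I would relegate the verification that the good-set $\{B : X^{-1}(B)\in\sigma(\overcalC)\}$ is a $\sigma$-algebra to a one-sentence remark, since it is entirely routine.
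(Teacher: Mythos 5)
Your proposal is correct and follows the same overall skeleton as the paper's proof (reduce to a generating class, then compute preimages of thin cylinders), but the key computation lands in a different, and in fact cleaner, place. The paper writes
\begin{align}
X^{-1}(C(\bm{x})) \;=\; \Bigl(\textstyle\bigcup_{\bm{y}\in \alphabet^*} \overC(\bm{x}\bm{y}\,\eos)\Bigr) \;\cup\; \Bigl(\overC(\bm{x})\cap \textstyle\bigcap_{k=1}^\infty A^\mathsf{c}_k \Bigr),
\end{align}
splitting the preimage according to whether $X(\bmomega)$ is a finite or an infinite string, and then observes that this is a countable combination of cylinders of $\alphabeteos^\infty$, hence in $\sigma(\overcalC)$. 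You instead observe directly that $X^{-1}(C(\bm{y})) = \overC(\bm{y})$: since $\bm{y}\in\alphabet^k$ contains no $\eos$, the condition ``$X(\bmomega)$ has prefix $\bm{y}$'' collapses in both branches of \cref{eq:rv} to the single condition $\bmomega_{<k+1}=\bm{y}$. This identity is correct (indeed, the paper's right-hand side above simplifies to exactly $\overC(\bm{x})$, the union over terminating and non-terminating continuations of a fixed $\alphabet$-prefix), and it is sharper: it shows the preimage of a thin cylinder is itself a single cylinder, i.e., already an element of the algebra $\overcalC$ rather than merely of $\sigma(\overcalC)$. Your explicit remarks on the good-set reduction and on writing $C(H)$ as a finite union of thin cylinders are routine but correctly fill in what the paper leaves implicit. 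No gaps.
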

\begin{proof}
See \cref{sec:rv-proof}.
\end{proof}

Any measurable function induces a probability measure on the output space, called the pushforward measure (cf. \S2.4 in \citealp{tao2011}), given by%
\begin{align} 
    P(X\in E)\defeq\P(X^{-1}(E)).
\end{align}%
One can check that $P$, defined using $\P$,
 is indeed a probability measure on $(\alphabet^*\cup\alphabet^\infty, \sigma(\calC))$ and hence $(\alphabet^*\cup\alphabet^\infty, \sigma(\calC), P)$ is a probability space. We have therefore shown that, given any \autoregmodelAcronym, we can construct an associated sequence model as defined in \cref{def:sequence-model}.%

Under the formulation of a probability space together with a random variable, useful probability quantities arise naturally and intuitively. 
In particular, when $\vx\in\alphabet^*$ is a finite string, we have%
\begin{equation}\label{eq:prob-finite-string}
P(X=\bm{x}) \defeq P(X\in\{\bm{x}\}) = \pString(\bm{x})
\end{equation}
with the definition of $\pString$ from \cref{sec:examples}.  Additionally, as we will show in the next section, the probability of the set of infinite strings $P(X \in \alphabet^{\infty})$ is the probability of generating an infinite string.\footnote{An important detail left out in this discussion is that both the singleton set $\{\vx\}$ and $\alphabet^\infty$ need to be measurable in $(\alphabet^*\cup\alphabet^\infty, \sigma(\calC))$ for the above to make sense.
This is addressed by \cref{prop:rv-singleton-measurable} and \cref{prop:rv-infinite-measurable}.}\looseness=-1

\paragraph{Deriving $\eos$}
As an aside, the preceding section allows us to motivate the $\eos$ token in \autoregmodelAcronym as a construct that emerges naturally.
Specifically, for any $\vx \in \alphabet^\ast$, rearranging \cref{eq:prob-finite-string}:
\begin{subequations}
\begin{align}
\pASM(\eos \mid \vx) &= \textstyle \frac{P(X = \vx)}{\pASM(\vx)} = \textstyle \frac{P(X = \vx)}{P(X \in C({\vx}))} \\
    &= P(X = \vx \mid X \in C({\vx}))
\end{align}
\end{subequations}
where we have used $\pASM(\vx) = \P(\overC(\vx)) = \P(X^{-1}(C(\vx))) = P(X\in C(\vx))$.
This means that the $\eos$ probability in an \autoregmodelAcronym emerges as the conditional probability that, given that we must generate a string with a prefix $\vx \in \alphabet^\ast$, the string is exactly $\vx$.

\section{Characterizing Tightness} \label{sec:tightness}
Beyond the measure-theoretic formalization, a goal of this paper is to provide an exact characterization of tightness in \autoregmodelAcronym{}s. 
The results presented in this section generalize Lemma 3.2 in \citet{welleck-etal-2020-consistency}.
First, we consider the event
\begin{align}
A_k\defeq\{\bmomega\in\alphabeteos^\infty:\omega_k=\textsc{eos}\} \label{eq:term-at-k}
\end{align}
in the probability space $(\alphabeteos^\infty,\sigma(\overcalC),\P)$. Intuitively, $A_k$ is the event that an $\eos$ symbol appears at position $k$ in the string.
Note that under this definition the $A_k$ are not disjoint.
For example, the string $\bmomega = \seq{ab}\,\eos\,\seq{c}\,\eos\, \seq{dd}\dotsm$ lives in the intersection of $A_3$ and $A_5$ since $\eos$ appears at both position 3 and position 5. 
Using \cref{eq:term-at-k}, we can express the event 
consisting of all finite strings as $\bigcup_{k=1}^\infty A_k$. 
It follows that we can express the event of an infinite string as $\left(\bigcup_{k=1}^\infty A_k\right)^\mathsf{c}=\bigcap_{k=1}^\infty A_k^\mathsf{c}$. 
Thus, using the random variable $X$, we can express the probability of generating an infinite string as
\begin{subequations}
    \begin{align}
    P(X\in \alphabet^\infty)&=\P(X^{-1} (\alphabet^\infty))  \\
 &=\textstyle \P\left(\bigcap_{k=1}^\infty A^\mathsf{c}_k\right).
\end{align}
\end{subequations}
Hence, we can now formalize the notion of tightness, which we have introduced in \cref{sec:examples} and \cref{def:language-model}.\looseness=-1
\begin{definition}\label{def:tight}
A sequence model $P$ is said to be \defn{tight} if $P(X\in\alphabet^\infty) = 0$, in which case it is also a language model (cf. \cref{prop:seq-tight}).
Otherwise, we say that it is \defn{non-tight}.
\end{definition}

Note that the definition of $A_k$ only uses a string's $k$-prefix, and hence is a cylinder set of rank $k$. Recalling that the cylinder sets are measurable and so are the sets countably generated by them, we see that both the event consisting of all finite strings and the event consisting of all infinite strings are measurable.
Thus, $\P\left(\cup_{k=1}^\infty A_k\right)$ and $\P\left(\cap_{k=1}^{\infty} A_k^\mathsf{c}\right)$ are well defined.\looseness=-1

\subsection{A Lower Bound Result}
We have characterized tightness in terms of the probability of a specific event $\P\left(\cap_{k=1}^{\infty} A_k^\mathsf{c}\right)$, a quantity we now seek to determine.
\begin{restatable}{lemma}{durrettConditional} \label{lem:durrett-435}
If $\sum_{n=2}^\infty \P\left(A_n\mid \cap_{m=1}^{n-1}A_m^\mathsf{c}\right)=\infty$, then
$\P\left(\cap_{m=1}^\infty A_m^\mathsf{c}\right)=0$.
\end{restatable}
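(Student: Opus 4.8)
The plan is to reduce this to a standard ``conditional Borel--Cantelli'' style computation. Write $B_n \defeq \bigcap_{m=1}^{n} A_m^\comp$, so that $B_1 \supseteq B_2 \supseteq \dotsm$ and $\bigcap_{m=1}^\infty A_m^\comp = \bigcap_{n=1}^\infty B_n$. By continuity of measure from above, $\P\left(\bigcap_{n=1}^\infty B_n\right) = \lim_{n\to\infty} \P(B_n)$, so it suffices to show $\P(B_n) \to 0$. Now telescope the probability of the nested events using the chain rule: for each $n \geq 2$,
\begin{equation}
\P(B_n) = \P(B_1) \prod_{k=2}^{n} \P\left(B_k \mid B_{k-1}\right) = \P(A_1^\comp)\prod_{k=2}^n \left(1 - \P\left(A_k \mid \textstyle\bigcap_{m=1}^{k-1} A_m^\comp\right)\right),
\end{equation}
where I use $B_k = B_{k-1} \cap A_k^\comp$ so that $\P(B_k \mid B_{k-1}) = 1 - \P(A_k \mid B_{k-1})$. (One subtlety: these conditional probabilities are only defined when $\P(B_{k-1}) > 0$; if $\P(B_{k-1}) = 0$ for some $k$, then $\P(B_n) = 0$ for all larger $n$ already and we are done, so we may assume all the conditioning events have positive probability.)

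Next I would invoke the elementary inequality $1 - x \leq e^{-x}$ for $x \in [0,1]$, applied to each factor $x = \P(A_k \mid \bigcap_{m=1}^{k-1} A_m^\comp) \in [0,1]$. This gives
\begin{equation}
\P(B_n) \leq \P(A_1^\comp) \exp\left(-\sum_{k=2}^n \P\left(A_k \mid \textstyle\bigcap_{m=1}^{k-1} A_m^\comp\right)\right) \leq \exp\left(-\sum_{k=2}^n \P\left(A_k \mid \textstyle\bigcap_{m=1}^{k-1} A_m^\comp\right)\right).
\end{equation}
By the hypothesis that $\sum_{n=2}^\infty \P\left(A_n \mid \bigcap_{m=1}^{n-1} A_m^\comp\right) = \infty$, the partial sums in the exponent diverge to $+\infty$, so the right-hand side tends to $0$ as $n \to \infty$. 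Hence $\P(B_n) \to 0$, and therefore $\P\left(\bigcap_{m=1}^\infty A_m^\comp\right) = 0$, as claimed.

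I do not anticipate a serious obstacle here; this is essentially Durrett's proof of the second Borel--Cantelli lemma in its conditional form (hence the lemma's name). The only genuine care needed is bookkeeping around whether the conditional probabilities are well-defined, i.e., handling the degenerate case where some $\P(B_{k-1})$ vanishes — but as noted, that case makes the conclusion immediate and can be dispatched in one line. Everything else (measurability of the $B_n$, which is needed to even write $\P(B_n)$) is already guaranteed because each $A_k$ is a rank-$k$ cylinder set, as observed just before the lemma statement. If one wanted to avoid the positivity caveat entirely, an alternative is to argue directly that $\P(B_n) \le \prod_{k=2}^n (1 - p_k)$ where $p_k$ is the conditional probability (interpreting the product as $0$ once an undefined/degenerate factor is reached), but the telescoping presentation above is cleanest.
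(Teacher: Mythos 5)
Your proposal is correct and is essentially the same argument as the paper's: both telescope $\P\bigl(\bigcap_{m=1}^{n}A_m^\comp\bigr)$ into a product of conditional probabilities, apply the inequality $1-y\leq e^{-y}$ (the paper writes it in logarithmic form as $1-x\leq\log\tfrac{1}{x}$), and conclude via continuity of measure, with the same remark about the degenerate case where some conditioning event has probability zero. No substantive differences to report.
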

\begin{proof}
See \cref{sec:termination-proof}.
\end{proof}

Using \cref{lem:durrett-435}, we can derive the following useful sufficient condition for a sequence model derived from an ASM to be tight. It applies when the probability of \eos does not decay too rapidly with the length of the prefix.  
\begin{restatable}{proposition}{divergentLowerBound} \label{prop:div-implies-tight}
    If $\pASM(\textsc{eos}\mid\bm{x})\geq f(t)$ for all $t \geq 1, \bm{x}\in\alphabet^{t-1}$, and $\sum_{t=1}^\infty f(t)=\infty$, then $\P(\cap_{k=1}^\infty A_k^\mathsf{c})=0$. In other words, $P$ is tight.
\end{restatable}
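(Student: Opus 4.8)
The plan is to reduce the statement to \cref{lem:durrett-435}: by that lemma it suffices to prove that $\sum_{n=2}^\infty \P\bigl(A_n \mid \bigcap_{m=1}^{n-1} A_m^\c\bigr) = \infty$, since this immediately yields $\P\bigl(\bigcap_{k=1}^\infty A_k^\c\bigr) = 0$, which is exactly $P(X \in \alphabet^\infty) = 0$, i.e.\ tightness in the sense of \cref{def:tight}. Before invoking the lemma I would dispose of a degenerate case: the event $\bigcap_{m=1}^{n-1} A_m^\c$ is precisely the rank-$(n-1)$ cylinder set $\overC(\alphabet^{n-1})$ of prefixes of length $n-1$ that contain no $\eos$. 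If $\P\bigl(\overC(\alphabet^{n-1})\bigr) = 0$ for some $n$, then by monotonicity $\P\bigl(\bigcap_{m=1}^\infty A_m^\c\bigr) \le \P\bigl(\overC(\alphabet^{n-1})\bigr) = 0$ and we are done; so assume $\P\bigl(\overC(\alphabet^{n-1})\bigr) > 0$ for every $n$, which makes all the conditional probabilities in question well-defined.

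The key step is to lower-bound each conditional probability by $f(n)$. Here I would use the pre-measure formula \eqref{eq:pre-measure} and \eqref{eq:cylinder-prod} together with the observation that $\bigcap_{m=1}^{n-1} A_m^\c \cap A_n$ is the cylinder set generated by $\{\bm{x}\eos : \bm{x} \in \alphabet^{n-1}\} \subseteq \alphabeteos^n$. This gives $\P\bigl(\bigcap_{m=1}^{n-1} A_m^\c \cap A_n\bigr) = \sum_{\bm{x}\in\alphabet^{n-1}} \pASM(\bm{x})\,\pASM(\eos\mid\bm{x})$ and $\P\bigl(\bigcap_{m=1}^{n-1} A_m^\c\bigr) = \sum_{\bm{x}\in\alphabet^{n-1}} \pASM(\bm{x})$. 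By hypothesis $\pASM(\eos\mid\bm{x}) \ge f(n)$ for every $\bm{x}\in\alphabet^{n-1}$, so the numerator is at least $f(n)$ times the denominator, and therefore $\P\bigl(A_n \mid \bigcap_{m=1}^{n-1} A_m^\c\bigr) \ge f(n)$.

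Summing over $n \ge 2$ then gives $\sum_{n=2}^\infty \P\bigl(A_n \mid \bigcap_{m=1}^{n-1} A_m^\c\bigr) \ge \sum_{n=2}^\infty f(n) = \infty$, the last equality holding because $\sum_{t=1}^\infty f(t) = \infty$ and deleting the single bounded term $f(1) \le \pASM(\eos\mid\varepsilon) \le 1$ cannot affect divergence. An appeal to \cref{lem:durrett-435} then completes the argument.

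I do not expect a genuine obstacle: the proof is essentially a one-line application of \cref{lem:durrett-435} once the conditional probability has been identified. The only points that need care are (i) the well-definedness of the conditionals, handled by the degenerate-case remark, and (ii) correctly matching $\bigcap_{m=1}^{n-1} A_m^\c$ and $\bigcap_{m=1}^{n-1} A_m^\c \cap A_n$ with the cylinder sets to which the pre-measure construction of \cref{sec:pre-measure} applies. If one preferred to avoid \cref{lem:durrett-435} altogether, an equivalent route is to set $q_n \defeq \P\bigl(\bigcap_{m=1}^n A_m^\c\bigr)$, note $q_n \le (1-f(n))\,q_{n-1}$ by the same computation, iterate to obtain $q_n \le \prod_{k=2}^n (1-f(k))$, invoke the standard fact that $\sum_k a_k = \infty$ with $a_k \in [0,1]$ forces $\prod_k(1-a_k) = 0$, and then use continuity of $\P$ from above to conclude $\P\bigl(\bigcap_{m=1}^\infty A_m^\c\bigr) = \lim_n q_n = 0$.
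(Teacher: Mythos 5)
Your proposal is correct and follows essentially the same route as the paper's own proof: identify $\bigcap_{m=1}^{n-1} A_m^\c$ and $A_n \cap \bigcap_{m=1}^{n-1} A_m^\c$ with cylinder sets, use the pre-measure formula to show $\P\bigl(A_n \mid \bigcap_{m=1}^{n-1} A_m^\c\bigr) \geq f(n)$, and conclude via \cref{lem:durrett-435}. Your explicit handling of the degenerate case $\P\bigl(\overC(\alphabet^{n-1})\bigr)=0$ is a small point of extra care that the paper leaves implicit, but it does not change the argument.
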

\begin{proof}
See \cref{sec:div-implies-tight-proof}.
\end{proof}

\noindent This test implies tightness for all of the tight examples in \cref{sec:examples}, but not for the non-tight ones.  Note that the lower-bounding function $f$ depends only on the length of the prefix, not its content.  $f$ does not have to be monotonic---in the case of the even/odd example from \cref{sec:examples}, it is not.

\subsection{The Borel--Cantelli Lemmata}
It turns out that \cref{prop:div-implies-tight} admits a converse statement in which we can prove a similar property of $\pASM$ by assuming that the model is tight.
To prove this result, we will use a fundamental inequality from probability theory---the Borel--Cantelli lemmata. 
The Borel--Cantelli lemmata are useful for our purposes because they relate the probability measure of sets of the form $\bigcap_{n=0}^\infty A_n$ or $\bigcup_{n=0}^\infty A_n$ to a series $\sum_{n=0}^\infty p_n$.
We will only state the lemmata here without supplying their proofs;\footnote{See \S2.3 in \citet{durrett_2019} or \S4 in \citet{billingsley1986} instead.} however, we point out that \cref{lem:durrett-435} can be viewed as a parallel statement to the Borel--Cantelli lemmata and one can prove the lemmata using a very similar proof (cf. proof of Thm 2.3.7 in \citealp{durrett_2019}).

Concretely, given a sequence of events $\{A_n\}_{n=1}^\infty$ in some probability space, the Borel--Cantelli lemmata are statements about the event
\begin{equation} \label{eq:defn-io}
    \textstyle\{A_n \infoft\}\defeq\bigcap_{m=1}^\infty\bigcup_{n=m}^\infty A_n
\end{equation}
where i.o. stands for ``infinitely often.''
Intuitively, $\{A_n \infoft\}$ is the set of outcomes that appear in infinitely many sets in the collection $\{A_n\}_{n=1}^\infty$ (hence the name).
We will not use Borel--Cantelli directly, but they offer a probabilistic proof of a key result (\cref{cor:durrett-434}) which will in turn lead to the desired statement about tightness.
We formally state the first and second Borel--Cantelli lemmata below.\looseness=-1
\begin{lemma}[Borel--Cantelli I] \label{thm:bc1}
If \\ $\sum_{n=1}^\infty\P(A_n) < \infty$, then $\P(A_n \text{ i.o.})=0$.
\end{lemma}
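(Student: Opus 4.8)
The plan is to exploit the nested structure of the event $\{A_n \text{ i.o.}\} = \bigcap_{m=1}^\infty \bigcup_{n=m}^\infty A_n$ together with countable subadditivity and the convergence of the series. First I would fix an arbitrary $m \geq 1$ and observe that, by the definition in \cref{eq:defn-io}, the event $\{A_n \text{ i.o.}\}$ is contained in the ``tail union'' $B_m \defeq \bigcup_{n=m}^\infty B_m$ — wait, $B_m \defeq \bigcup_{n=m}^\infty A_n$. Each $B_m$ is measurable since the $A_n$ are (they are cylinder sets, as noted for the specific $A_k$ in the paper, but in the abstract statement they are simply given events in a probability space), and a countable union of measurable sets is measurable. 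Monotonicity of $\P$ then gives $\P(\{A_n \text{ i.o.}\}) \leq \P(B_m)$ for every $m$.

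Next I would bound $\P(B_m)$ from above. Countable additivity of $\P$ (\cref{def:probability-measure}) implies countable subadditivity, so $\P(B_m) = \P\left(\bigcup_{n=m}^\infty A_n\right) \leq \sum_{n=m}^\infty \P(A_n)$. Combining the two displays yields $\P(\{A_n \text{ i.o.}\}) \leq \sum_{n=m}^\infty \P(A_n)$ for every $m \geq 1$. Since $\sum_{n=1}^\infty \P(A_n) < \infty$ by hypothesis, the tail sums $\sum_{n=m}^\infty \P(A_n)$ form the tail of a convergent series and hence tend to $0$ as $m \to \infty$. Because the left-hand side $\P(\{A_n \text{ i.o.}\})$ does not depend on $m$, taking $m \to \infty$ forces $\P(\{A_n \text{ i.o.}\}) \leq 0$, and since a probability is non-negative we conclude $\P(\{A_n \text{ i.o.}\}) = 0$.

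There is essentially no serious obstacle here; the only points requiring a word of care are (i) justifying that countable subadditivity follows from the additivity axiom of \cref{def:probability-measure} (apply additivity to the disjointified sets $A_m, A_{m+1}\setminus A_m, \dots$, or invoke it as a standard fact), and (ii) making explicit that one is taking a limit in $m$ of a fixed quantity rather than needing any continuity-of-measure argument — although an equally clean alternative would be to note that $B_1 \supseteq B_2 \supseteq \cdots$ with $\bigcap_m B_m = \{A_n \text{ i.o.}\}$ and appeal to continuity from above, giving $\P(\{A_n \text{ i.o.}\}) = \lim_{m\to\infty} \P(B_m) \leq \lim_{m\to\infty}\sum_{n=m}^\infty \P(A_n) = 0$. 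Either route is short; I would present the subadditivity-plus-tail-of-convergent-series version since it avoids invoking continuity of measure.
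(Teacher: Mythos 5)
Your proof is correct: it is the standard textbook argument (the i.o.\ event is contained in every tail union $B_m=\bigcup_{n\ge m}A_n$, countable subadditivity bounds $\P(B_m)$ by the tail sum $\sum_{n\ge m}\P(A_n)$, and the tail of a convergent series vanishes). The paper deliberately omits a proof of this lemma and defers to Durrett and Billingsley, and your argument is exactly the one found there, so there is nothing to reconcile.
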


\begin{lemma}[Borel--Cantelli II] \label{thm:bc2}
If \\ $\sum_{n=1}^\infty \P(A_n) = \infty$, then $\P(A_n \text{ i.o.})=1$, provided that $\{A_n\}$ is a sequence of independent events.
\end{lemma}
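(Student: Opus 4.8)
The plan is to follow the classical argument: prove that the complement of the event $\{A_n\infoft\}$ is null. Expanding the definition in \cref{eq:defn-io}, De Morgan gives $\{A_n\infoft\}^{\mathsf{c}}=\bigcup_{m=1}^\infty\bigcap_{n=m}^\infty A_n^{\mathsf{c}}$, so by countable subadditivity of $\P$ it suffices to show $\P\bigl(\bigcap_{n=m}^\infty A_n^{\mathsf{c}}\bigr)=0$ for each fixed $m\ge 1$. All the sets involved are measurable, since each $A_n$ belongs to the $\sigma$-algebra and $\sigma$-algebras are closed under countable intersections and complements, so these probabilities are well defined.

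Fix $m$, and for $N\ge m$ consider the finite intersection $\bigcap_{n=m}^N A_n^{\mathsf{c}}$. Since $\{A_n\}_{n\ge 1}$ is independent, so is every finite sub-collection of $\{A_n^{\mathsf{c}}\}_{n\ge 1}$, whence $\P\bigl(\bigcap_{n=m}^N A_n^{\mathsf{c}}\bigr)=\prod_{n=m}^N\bigl(1-\P(A_n)\bigr)$. The one analytic ingredient is the elementary inequality $1-x\le e^{-x}$, valid for all real $x$, which converts the product into the bound
\begin{equation}
\textstyle\P\bigl(\bigcap_{n=m}^N A_n^{\mathsf{c}}\bigr)\;\le\;\prod_{n=m}^N e^{-\P(A_n)}\;=\;\exp\bigl(-\sum_{n=m}^N\P(A_n)\bigr).
\end{equation}
Because $\sum_{n=1}^\infty\P(A_n)=\infty$ and removing the finitely many terms with $n<m$ does not affect divergence, $\sum_{n=m}^N\P(A_n)\to\infty$ as $N\to\infty$, so the right-hand side tends to $0$.

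It remains to pass from finite to infinite intersections. The sets $\bigcap_{n=m}^N A_n^{\mathsf{c}}$ are decreasing in $N$ with intersection $\bigcap_{n=m}^\infty A_n^{\mathsf{c}}$, so continuity of $\P$ from above yields $\P\bigl(\bigcap_{n=m}^\infty A_n^{\mathsf{c}}\bigr)=\lim_{N\to\infty}\P\bigl(\bigcap_{n=m}^N A_n^{\mathsf{c}}\bigr)=0$. Summing over $m$ via subadditivity gives $\P\bigl(\{A_n\infoft\}^{\mathsf{c}}\bigr)=0$, i.e. $\P(\{A_n\infoft\})=1$, as claimed. There is no serious obstacle here; the only points one must not skip are (i) that independence descends to finite families of complements and (ii) that the tail series still diverges, both of which legitimize the product estimate. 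An alternative, even shorter route---hinted at by the surrounding discussion---is to reuse the proof of \cref{lem:durrett-435} almost verbatim, replacing the conditional probabilities $\P(A_n\mid\cap_{m<n}A_m^{\mathsf{c}})$ there by the unconditional $\P(A_n)$, which is justified precisely by independence.
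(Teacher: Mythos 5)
Your proof is correct: the paper deliberately omits a proof of this lemma, deferring to \citet{durrett_2019} and \citet{billingsley1986}, and your argument is exactly the classical one found there --- De Morgan, independence passing to finite families of complements, the bound $1-x\le e^{-x}$ applied to the divergent tail series, and continuity of $\P$ from above. No gaps; your closing remark that one could instead adapt the proof of \cref{lem:durrett-435} by replacing the conditional probabilities with unconditional ones is precisely the parallel the paper itself points out.
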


Using the Borel--Cantelli lemmata, we can prove the following useful fact.
\begin{restatable}{corollary}{durrettSequence}
\label{cor:durrett-434}
Given a sequence $\{p_n\}$ where $p_n\in[0,1)$. Then,
\begin{equation}
    \textstyle\prod_{n=1}^\infty (1-p_n) = 0 \Longleftrightarrow \sum_{n=1}^\infty p_n=\infty.
\end{equation}
\end{restatable}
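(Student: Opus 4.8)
The plan is to interpret $\prod_{n=1}^\infty(1-p_n)$ probabilistically and then read off both implications from the Borel--Cantelli lemmata. Concretely, I would work on a probability space carrying a sequence of \emph{independent} events $\{A_n\}_{n=1}^\infty$ with $\P(A_n)=p_n$ (such a space exists by a standard product construction, e.g.\ by the same Carath\'eodory machinery used in \cref{sec:extension}). The one structural fact to establish first is, for every $m\ge 1$,
\[
\P\Bigl(\bigcap_{n=m}^\infty A_n^\comp\Bigr)=\prod_{n=m}^\infty(1-p_n),
\]
which follows because the partial intersections $\bigcap_{n=m}^{N}A_n^\comp$ decrease to $\bigcap_{n=m}^\infty A_n^\comp$ (continuity of $\P$ from above), while independence of $A_m^\comp,\dots,A_N^\comp$ gives $\P(\bigcap_{n=m}^N A_n^\comp)=\prod_{n=m}^N(1-p_n)$. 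Taking $m=1$ identifies the infinite product with $\P(\bigcap_{n=1}^\infty A_n^\comp)$, the probability that \emph{no} $A_n$ occurs.

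For the ($\Leftarrow$) direction, assume $\sum_n p_n=\infty$. Since the $A_n$ are independent, \cref{thm:bc2} gives $\P(A_n\infoft)=1$. But the event that no $A_n$ ever occurs is contained in the event that $A_n$ fails to occur infinitely often, i.e.\ $\bigcap_{n=1}^\infty A_n^\comp\subseteq\{A_n\infoft\}^\comp$, so $\P(\bigcap_{n=1}^\infty A_n^\comp)=0$, which by the identity above means $\prod_{n=1}^\infty(1-p_n)=0$.

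For the ($\Rightarrow$) direction I would argue the contrapositive: assume $\sum_n p_n<\infty$. By \cref{thm:bc1}, $\P(A_n\infoft)=0$, equivalently $\P\bigl(\bigcup_{m=1}^\infty B_m\bigr)=1$ where $B_m\defeq\bigcap_{n=m}^\infty A_n^\comp$. The $B_m$ increase with $m$, so continuity of $\P$ from below gives $\P(B_m)\to 1$, hence $\P(B_{m_0})>0$ for some $m_0$. By the identity above this says $\prod_{n=m_0}^\infty(1-p_n)>0$; and since $p_n\in[0,1)$ each factor of the finite prefix $\prod_{n=1}^{m_0-1}(1-p_n)$ lies in $(0,1]$, so it too is positive, giving $\prod_{n=1}^\infty(1-p_n)>0$ as desired.

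The only real subtlety, which I would call the main obstacle, is that \cref{thm:bc1} alone does \emph{not} close the ($\Rightarrow$) direction: ``$A_n$ occurs only finitely often'' is strictly weaker than ``no $A_n$ occurs,'' so one genuinely needs the continuity-from-below step to extract a tail index $m_0$ with $\P(B_{m_0})>0$. Everything else is bookkeeping with monotone continuity of $\P$. (As a remark, the ($\Rightarrow$) direction can also be obtained without \cref{thm:bc1} by the union bound $\P(B_m)\ge 1-\sum_{n\ge m}p_n$ together with a choice of $m$ making $\sum_{n\ge m}p_n<1$; I would include this observation.)
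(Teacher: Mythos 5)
Your proposal is correct and follows essentially the same route as the paper: realize the $p_n$ as probabilities of independent events via a product measure and apply both Borel--Cantelli lemmata. The only differences are cosmetic --- you identify $\prod_{n\ge m}(1-p_n)$ with $\P(\bigcap_{n\ge m}A_n^\comp)$ and argue the ($\Rightarrow$) direction by contrapositive with continuity from below, whereas the paper works with $1-\P(A_n\infoft)=\lim_m\prod_{n\ge m}(1-p_n)$ and manipulates the tail products directly; both handle the same subtlety (that ``finitely often'' is weaker than ``never'') correctly.
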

\begin{proof}
See \cref{sec:durrett-434-proof}.
\end{proof}

We now turn to proving a more general version of \cref{prop:div-implies-tight}, which would imply its converse. First, we define the following quantity
\begin{align}\label{eq:ptildeeos-set}
\ptildeEOS(t)\defeq\P(A_t\mid A_1^\mathsf{c}\cap \dotsm \cap A_{t-1}^\mathsf{c})
\end{align}
which can be viewed as the \eos probability at step $t$, given that \eos was not generated at any earlier step. In \cref{eq:weighted-eos} in \cref{sec:div-implies-tight-proof}, we show that, when $\ptildeEOS(t)$ is defined, it has the same value as
\begin{align}\label{eq:ptildeeos}
\ptildeEOS(t)=\frac{
    \sum_{\bmomega\in \alphabet^{t-1}} \pASM(\bmomega)\pASM(\textsc{eos}\mid\bmomega)}
    {\sum_{\bmomega\in \alphabet^{t-1}} \pASM(\bmomega)\phantom{\pASM(\textsc{eos}\mid\bmomega)}}.
\end{align}
We can now completely characterize the tightness of an \autoregmodelAcronym with the following theorem.\looseness=-1 
\begin{restatable}[Proposition 2.4 in \citealp{meister-tacl2022}]{theorem}{lmTightMain} \label{prop:lm-tight-main}
An \autoregmodelAcronym is tight if and only if $\ptildeEOS(t)=1$ for some $t$ or $\sum_{t=1}^\infty \ptildeEOS(t)=\infty$.
\end{restatable}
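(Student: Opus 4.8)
The plan is to work directly with $P(X\in\alphabet^\infty)=\P\!\left(\bigcap_{k=1}^\infty A_k^\mathsf{c}\right)$, write this probability as an infinite product of the factors $1-\ptildeEOS(t)$, and then let \cref{cor:durrett-434} convert ``the product vanishes'' into ``$\sum_t\ptildeEOS(t)=\infty$''. The one genuine subtlety is that $\ptildeEOS(t)=\P\!\left(A_t\mid A_1^\mathsf{c}\cap\dotsm\cap A_{t-1}^\mathsf{c}\right)$ is only meaningful when the conditioning ``survival'' event has positive probability, and the first disjunct of the theorem is precisely the symptom of this failing. So I would begin with a case split on this.

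\textbf{Case 1: some survival event is null.} Write $S_{t-1}\defeq A_1^\mathsf{c}\cap\dotsm\cap A_{t-1}^\mathsf{c}$, with $S_0=\alphabeteos^\infty$. Suppose $\P(S_{t-1})=0$ for some $t$, and let $t_0$ be the least such index; then $t_0\ge 2$ and $\P(S_{t_0-2})>0$ by minimality, so $\ptildeEOS(t_0-1)$ is defined. From $\P(S_{t_0-1})=\P\!\left(A_{t_0-1}^\mathsf{c}\mid S_{t_0-2}\right)\P(S_{t_0-2})=\bigl(1-\ptildeEOS(t_0-1)\bigr)\P(S_{t_0-2})=0$ together with $\P(S_{t_0-2})>0$ I get $\ptildeEOS(t_0-1)=1$; and $\P\!\left(\bigcap_k A_k^\mathsf{c}\right)\le\P(S_{t_0-1})=0$, so $P$ is tight by \cref{def:tight}. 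Hence in Case 1 both sides of the biconditional hold and there is nothing further to check. This also shows that ``$\ptildeEOS(t)=1$ for some $t$ at which $\ptildeEOS(t)$ is defined'' is equivalent to ``some survival event is null,'' which is the intended reading of the first disjunct.

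\textbf{Case 2: every survival event has positive probability.} Then every $\ptildeEOS(t)$ is well-defined, and moreover $\ptildeEOS(t)=1$ is impossible, since it would give $\P(S_t)=\P\!\left(A_t^\mathsf{c}\cap S_{t-1}\right)=0$, contradicting positivity of $\P(S_t)$; so the first disjunct is vacuous and the claim reduces to: tight $\iff\sum_{t=1}^\infty\ptildeEOS(t)=\infty$. Here I would (i) iterate the chain rule $\P(S_n)=\P\!\left(A_n^\mathsf{c}\mid S_{n-1}\right)\P(S_{n-1})$ down to $t=1$ (the base case uses $S_0=\alphabeteos^\infty$, $\P(S_0)=1$) to obtain $\P(S_n)=\prod_{t=1}^n\bigl(1-\ptildeEOS(t)\bigr)$; (ii) invoke continuity of $\P$ from above along the decreasing sequence $S_1\supseteq S_2\supseteq\dotsm$ with $\bigcap_n S_n=\bigcap_{k=1}^\infty A_k^\mathsf{c}$ to get $\P\!\left(\bigcap_{k=1}^\infty A_k^\mathsf{c}\right)=\prod_{t=1}^\infty\bigl(1-\ptildeEOS(t)\bigr)$; and (iii) apply \cref{cor:durrett-434} with $p_t=\ptildeEOS(t)\in[0,1)$ to conclude $\P\!\left(\bigcap_k A_k^\mathsf{c}\right)=0\iff\sum_{t=1}^\infty\ptildeEOS(t)=\infty$. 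Since the left-hand side is exactly tightness, this finishes the proof. (The forward direction $\sum_t\ptildeEOS(t)=\infty\Rightarrow$ tight could alternatively be read straight off \cref{lem:durrett-435}, since $\ptildeEOS(t)=\P\!\left(A_t\mid\bigcap_{m<t}A_m^\mathsf{c}\right)$ and discarding the $t=1$ term does not affect divergence.)

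The main obstacle is bookkeeping rather than depth: making the identity $\P(S_n)=\prod_{t\le n}\bigl(1-\ptildeEOS(t)\bigr)$ rigorous requires knowing every intermediate $\P(S_{t-1})$ is positive, which is exactly what the Case~1/Case~2 split buys, and one must be careful to interpret ``$\ptildeEOS(t)=1$ for some $t$'' as ranging only over $t$ where $\ptildeEOS(t)$ is defined. Everything else is a routine appeal to continuity of measure and to the already-established \cref{cor:durrett-434} (and, optionally, \cref{lem:durrett-435}).
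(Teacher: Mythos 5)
Your proposal is correct and follows essentially the same route as the paper's proof: a case split on whether the first disjunct holds (equivalently, whether some survival event is null), followed by writing $\P\bigl(\bigcap_k A_k^\mathsf{c}\bigr)$ as the infinite product $\prod_t\bigl(1-\ptildeEOS(t)\bigr)$ via the chain rule and continuity of measure, and then invoking \cref{cor:durrett-434}. Your treatment is in fact slightly more careful than the paper's about the well-definedness of the conditional probabilities, via the minimal-null-index argument.
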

\begin{proof}
See \cref{sec:lm-tight-main-proof}.  The proof uses \cref{cor:durrett-434}, which accounts for the form of the condition.
\end{proof}
We remark that \cref{prop:lm-tight-main} is a generalization of \cref{prop:div-implies-tight} since if $\ptildeEOS(t)$ is lower-bounded by $f(t)$ whose series diverges, its own series would also diverge. 
However, since $\ptildeEOS(t)$ involves the computation of a partition function in its denominator, it can be intractable to calculate \citep{lin-etal-2021-limitations}. Hence, \cref{prop:div-implies-tight} will be our main tool for determining tightness.

Finally, we note that \cref{prop:lm-tight-main} generalizes claims in previous work.
For example, \citet{welleck-etal-2020-consistency} require $f(t) = c > 0$ for some constant $c$ to determine tightness.
Hence, their bound is not helpful in determining the tightness in either \cref{ex:non-tight-rnn} or \cref{ex:tight-rnn}, because the \eos probability can be arbitrarily small in both cases. 
Applying \cref{prop:div-implies-tight}, we see that (1) the \autoregmodelAcronym in \cref{ex:non-tight-rnn} is non-tight, because the series $\sum_{t=1}^\infty \frac{1}{e^t+1}$ is convergent, and (2) the \autoregmodelAcronym in \cref{ex:tight-rnn} is tight, since the series $\sum_{t=1}^\infty \frac{1}{t+1}$ is divergent.

\section{Analysis of Common Language Models} \label{sec:lm-analysis}

We now put into practice the foundations built up in the previous sections and discuss the tightness of several classes of \autoregmodelAcronym{}s.

\subsection{Stochastic Finite-State Language Models}\label{sec:sfssm}

Language modeling based on $n$-grams has been historically 
influential in NLP \citep[Ch. 4]{jurafsky2009}.
However, as \cref{fig:sfssm} illustrates, $n$-gram language models are specific cases
of the more general stochastic finite-state language models \citep{pfsa}.
Tightness is more naturally characterized in this more general setting, as it turns out.
We begin with a linear-algebraic definition of stochastic finite-state language models---or, more precisely, sequence models, since in this paper we do not consider the non-tight ones to be language models.

\begin{definition} \label{def:sfslm}
A $Q$-state \defn{stochastic finite-state sequence model} (SFSSM) is a quadruple $\big(\alphabet, \vsource, \{\mP^{(a)}\}_{a \in \alphabet}, \vtarget\big)$, where $\alphabet$ is an alphabet of symbols, $\mP^{(a)} \in \R^{Q \times Q}_{\ge 0}$ is a symbol-specific transition matrix for $a \in \alphabet$,\footnote{For simplicity, we have disallowed $\varepsilon$-transitions.} $\vsource \in \R^{Q}_{\ge 0}$ is a vector of initial state probabilities, and $\vtarget \in \R^{Q}_{\ge 0}$ is a vector of termination probabilities, i.e., probabilities of generating $\eos$ in each state.\footnote{We use $Q$ to denote the number of states as $Q$ is the traditional notation for the set of states in a finite-state automaton.\looseness=-1
}
We further require that $\sum_{q=1}^Q s_q=1$ and that $t_q + \sum_{q'=1}^Q \mP_{qq'} = 1$
for all $1 \leq q \leq Q$, where $\mP \defeq \sum_{a \in \alphabet} \mP^{(a)}$ is the \defn{transition sum matrix}.
\end{definition}

Given an SFSSM $\Big(\alphabet, \vsource, \{\mP^{(a)}\}_{a \in \alphabet}, \vtarget\Big)$, the probability of a string
$\xx \in \kleene{\alphabet}$ is defined by%
\begin{equation}\label{eq:sfssm-string-prob}
\pASM(x_1 \cdots x_n) = \textstyle \vsource^{\top} \left( \prod_{t=1}^n \mP^{(x_t)} \right) \vtarget.
\end{equation}
\begin{definition} \label{def:acc-co-acc}
A state $q$ of an SFSSM ($1 \leq q \leq Q$) is \defn{accessible} if there is a positive-probability path to $q$ from some state $r$ with $s_r>0$; it is \defn{co-accessible} if there is a positive-probability path from $q$ to some state $r$ with $t_r>0$.
It is \defn{useful} if it is both accessible and co-accessible, i.e., $q$ appears on some positive-probability accepting path.
\end{definition}
\noindent\cref{def:acc-co-acc} allows a simple characterization of tight SFSSMs, namely \cref{thm:sfslm-tight}, and a straightforward proof of \cref{cor:ngram-mle-tight}.\footnote{\Cref{cor:ngram-mle-tight} is a special case of \citet{chi-geman-1998-estimation}, who showed that MLE estimates of PCFGs are tight.}
\begin{restatable}{theorem}{sfslmTight} \label{thm:sfslm-tight}
An SFSSM is tight iff all accessible states are also co-accessible.%
\end{restatable}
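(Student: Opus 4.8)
The plan is to reduce tightness of the SFSSM to the convergence behavior of the powers of the transition sum matrix $\mP$, and then connect that to the graph-theoretic accessibility/co-accessibility condition. First I would set up the probabilistic picture: running the SFSSM defines an \autoregmodelAcronym, and by \cref{prop:lm-tight-main} (or more conveniently by directly summing \cref{eq:sfssm-string-prob}) the total mass placed on finite strings is $\sum_{n=0}^\infty \vsource^\top \mP^n \vtarget = \vsource^\top \left(\sum_{n=0}^\infty \mP^n\right)\vtarget$. Tightness is exactly the statement that this sum equals $1$. Since $\mP$ is substochastic ($\mP\mathbf{1} + \vtarget = \mathbf{1}$ entrywise, with $\vtarget \ge 0$), the partial sums $\sum_{n=0}^{N}\mP^n$ are monotonically nondecreasing entrywise, so the series converges in $[0,\infty]$ coordinatewise; the question is whether the limit is finite and whether the resulting total equals $1$.

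The key step is a ``conservation of mass'' identity: for every $N$, $\vsource^\top\left(\sum_{n=0}^{N}\mP^n\right)\vtarget + \vsource^\top \mP^{N+1}\mathbf{1} = 1$, which follows by a telescoping argument from $\mP^n\vtarget = \mP^n\mathbf{1} - \mP^{n+1}\mathbf{1}$ together with $\vsource^\top\mathbf{1}=1$. Hence the model is tight if and only if $\vsource^\top \mP^{N+1}\mathbf{1} \to 0$ as $N\to\infty$; the quantity $\vsource^\top\mP^{N}\mathbf{1}$ is precisely the probability that the generative process has not yet emitted \eos after $N$ steps, i.e.\ $\P(A_1^\comp\cap\cdots\cap A_N^\comp)$, so this recovers \cref{def:tight} directly. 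Thus it remains to show: $\vsource^\top\mP^N\mathbf{1}\to 0$ iff every accessible state is co-accessible.

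For the easy direction, suppose some accessible state $q$ is \emph{not} co-accessible. Then starting from $q$ one can never reach a state with positive termination probability, so from $q$ the process emits \eos with probability $0$; restricting attention to the sub-automaton on the non-co-accessible states, that sub-automaton's transition submatrix is \emph{stochastic} (each such row has $t_q = 0$ and all its mass goes to other non-co-accessible states, since a co-accessible successor would make $q$ co-accessible), so mass $\vsource^\top\mP^N\mathbf{1}$ restricted there stays bounded below by the fixed positive probability of ever reaching $q$ from the start, and the model is non-tight. For the converse, assume every accessible state is co-accessible. Delete the non-accessible states (they carry no mass from $\vsource$, so they don't affect $\vsource^\top\mP^N\mathbf{1}$); on the remaining useful states, every state has a positive-probability path of some length $\le Q$ to an \eos-emitting state, so there is a uniform $\delta>0$ and integer $L\le Q$ such that from any useful state the probability of emitting \eos within $L$ steps is at least $\delta$. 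A standard geometric-decay argument then gives $\vsource^\top\mP^{mL}\mathbf{1}\le(1-\delta)^m\to 0$, and monotonicity in $N$ finishes it. The main obstacle is the bookkeeping in the easy direction---correctly isolating the non-co-accessible sub-automaton and verifying that its transition submatrix is genuinely stochastic (no leakage to co-accessible states and no \eos mass), which is what guarantees a nonvanishing floor on the surviving mass; everything else is routine linear algebra with substochastic matrices.
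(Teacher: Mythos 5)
Your proof is correct, and it takes a genuinely different route from the paper's on the harder direction. The paper handles ($\Leftarrow$) by building a Markov chain on the accessible states plus an absorbing $\eos$ state and invoking the standard absorption theorem for finite Markov chains as a black box (every state can reach the absorbing state, hence absorption occurs with probability 1); your version is self-contained: the telescoping identity $\vsource^\top\bigl(\sum_{n=0}^{N}\mP^n\bigr)\vtarget = 1 - \vsource^\top\mP^{N+1}\mathbf{1}$ reduces tightness to $\vsource^\top\mP^{N}\mathbf{1}\to 0$, and the uniform bound $\delta>0$ on the probability of emitting $\eos$ within $L\le Q$ steps from any useful state gives the geometric decay $(1-\delta)^m$ directly (this is essentially the proof of the absorption theorem the paper cites, specialized to this setting). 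For ($\Rightarrow$) both arguments trap mass at a non-co-accessible accessible state, but yours is more careful: the observation that the non-co-accessible states carry a genuinely \emph{stochastic} transition submatrix (each such state has $t_q=0$ and cannot leak to a co-accessible successor) is exactly the bookkeeping needed to justify the paper's informal claim that the process ``never terminates'' once it reaches such a state. A pleasant side benefit of your formulation is that the same telescoping identity and the convergence of $\sum_n\mP'^n$ feed directly into the paper's \cref{thm:sub-fslm-tight}, which the paper instead proves separately via a spectral-radius argument on the Frobenius normal form; your route unifies the two results at the cost of being slightly less explicit about \emph{why} $\mI-\mP'$ is invertible.
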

\begin{proof}
See \cref{sec:sfslm-proof}.
\end{proof}
\begin{restatable}{corollary}{ngramMLETight}
\label{cor:ngram-mle-tight}
Maximum likelihood estimates of $n$-gram models based on some corpus are tight.
\end{restatable}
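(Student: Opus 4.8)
The plan is to derive \cref{cor:ngram-mle-tight} as a direct application of \cref{thm:sfslm-tight}. First I would recall that an $n$-gram language model can be encoded as an SFSSM whose states are the $(n-1)$-gram contexts (together with a \bos-padded start context), exactly as illustrated in \cref{fig:sfssm}; the MLE parameters are the empirical conditional frequencies $\hat{p}(a \mid \vx_{<t})$ computed from the training corpus, with transition and termination probabilities set to the corresponding relative counts. The key observation is that the MLE assigns positive probability to a transition (or to termination in a state) if and only if that transition (or termination) was actually observed in the corpus. Hence the only states with $s_q > 0$ or with $t_q > 0$ or with an outgoing edge of positive weight are precisely the contexts that occur in the corpus, and the positive-probability edges are exactly the context-to-context moves witnessed by consecutive $n$-grams in the data.

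Next I would argue that every accessible state of this SFSSM is co-accessible. Suppose $q$ is accessible: there is a positive-probability path from some start state to $q$. By the count-support property above, this path corresponds to an actual prefix occurring in some training sentence, so $q$ is a context that appears at some position in that sentence. But every training sentence is finite and ends with \eos, so following the remainder of that same sentence from position gives a positive-probability path from $q$ to a state with positive termination probability (the context immediately before the final \eos of that sentence). Therefore $q$ is co-accessible. Since all accessible states are co-accessible, \cref{thm:sfslm-tight} immediately yields tightness.

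The main obstacle, such as it is, is bookkeeping rather than mathematical depth: one must set up the state space carefully (including how \bos-padding and sentence boundaries are handled, and whether sentences are modeled jointly or the corpus is a single stream) so that the phrase ``appears in the corpus'' is made precise and the correspondence between positive-probability paths and observed substrings is exact. There is also a minor edge case — an empty corpus, or a corpus in which some sentence is empty — but these are handled trivially (an empty corpus gives a degenerate but still tight model, and \eos-terminated empty sentences still contribute a terminating path from the start context). I would state the $n$-gram-as-SFSSM construction as a short lemma or inline remark, verify the count-support property in one or two sentences, and then invoke \cref{thm:sfslm-tight}. The whole proof should fit in a paragraph once the construction is pinned down.
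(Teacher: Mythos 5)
Your proposal is correct and follows the same route as the paper's own proof: identify the $n$-gram model's SFSSM whose states are $(n-1)$-gram contexts, observe that under MLE the accessible states are exactly the contexts witnessed in the corpus and are therefore co-accessible (each can finish generating the sentence it appeared in), and conclude by \cref{thm:sfslm-tight}. Your version merely spells out the count-support property and edge cases that the paper leaves implicit.
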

\begin{proof}
See \cref{sec:sfslm-proof}.
\end{proof}

In fact, we can express the termination probability of an SFSSM in simple linear algebra terms.\looseness=-1
\begin{definition} \label{def:trim}
\defn{Trimming} an SFSSM means removing its non-useful (useless) states to obtain a \defn{substochastic finite-state sequence model}.\footnotemark{} This does not affect the string probabilities \labelcref{eq:sfssm-string-prob}.  Removing the non-useful states means removing their rows and columns from $\mP$ as well as their rows from $\vsource$ and $\vtarget$, yielding possibly smaller $\mP', \vsource'$ and $\vtarget'$.\looseness=-1

\end{definition} 
\footnotetext{We use the term \emph{substochastic} rather than \emph{stochastic} here because the trimmed model satisfies $t'_q + \sum_{q'=1}^{Q'} \mP'_{qq'} \leq 1$, but might no longer achieve equality as required by \cref{def:sfslm}.}
\begin{restatable}{theorem}{trimSfslmTight} \label{thm:sub-fslm-tight}
Let $\mP'$ be the transition sum matrix of a trimmed substochastic FSSM.  Then $\mI-\mP'$ is invertible and $P(X \in \alphabet^*)=\vsource'^\top (\mI-\mP')^{-1} \vtarget' \leq 1$.

\end{restatable}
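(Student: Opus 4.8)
The plan is to realize $P(X\in\alphabet^\ast)$ as a convergent Neumann series, $P(X\in\alphabet^\ast)=(\vsource')^\top\!\big(\sum_{k\ge 0}(\mP')^k\big)\vtarget'$, and to identify $\sum_{k\ge 0}(\mP')^k$ with $(\mI-\mP')^{-1}$; the substance of the argument is showing that this matrix series converges, i.e.\ that $(\mP')^k\to\bm{0}$ entrywise. First I would record the consequences of substochasticity: $\mP'$ has nonnegative entries and its $q$-th row sum is at most $1-t'_q\le 1$, so $\mP'\bm{1}\le\bm{1}$ componentwise; since left-multiplication by the nonnegative matrix $\mP'$ is monotone, the vectors $\vu_k\defeq(\mP')^k\bm{1}$ satisfy $\bm{0}\le\vu_{k+1}\le\vu_k\le\bm{1}$ and decrease entrywise to some $\vu\ge\bm{0}$ with $\mP'\vu=\vu$ (continuity of a linear map in finite dimension). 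The key step is to show $\vu=\bm{0}$. Suppose not, and set $M\defeq\max_q u_q>0$ and $S\defeq\{q:u_q=M\}\ne\emptyset$. For $q\in S$ the chain $M=u_q=(\mP'\vu)_q=\sum_{q'}\mP'_{qq'}u_{q'}\le M\sum_{q'}\mP'_{qq'}\le M(1-t'_q)\le M$ must consist entirely of equalities, which forces $t'_q=0$ and $u_{q'}=M$ — i.e.\ $q'\in S$ — for every $q'$ with $\mP'_{qq'}>0$ (and such $q'$ exists, since the row sum is then $1$). Hence $S$ is a nonempty set closed under positive-probability transitions on which $t'\equiv 0$; but every state of a trimmed model is co-accessible (\cref{def:acc-co-acc}), so from any $q\in S$ there is a positive-probability path to some state $r$ with $t'_r>0$, and closure of $S$ forces $r\in S$, contradicting $t'_r=0$. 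Therefore $\vu=\bm{0}$, and since $0\le\big((\mP')^k\big)_{qq'}\le\big((\mP')^k\bm{1}\big)_q\to 0$, we get $(\mP')^k\to\bm{0}$.

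From $(\mP')^k\to\bm{0}$ the rest is routine, and I would finish as follows. Invertibility of $\mI-\mP'$ is immediate: if $\mP'\vv=\vv$ then $\vv=(\mP')^k\vv\to\bm{0}$, so $\vv=\bm{0}$. The telescoping identity $\sum_{k=0}^{K}(\mP')^k=(\mI-\mP')^{-1}\big(\mI-(\mP')^{K+1}\big)$ then gives $(\mI-\mP')^{-1}=\sum_{k=0}^{\infty}(\mP')^k$. For the probability identity, countable additivity over the disjoint, measurable singletons $\{\vx\}$ together with \cref{eq:prob-finite-string} gives $P(X\in\alphabet^\ast)=\sum_{\vx\in\alphabet^\ast}\pString(\vx)$, where for an SFSSM $\pString(\vx)$ is the right-hand side of \cref{eq:sfssm-string-prob} evaluated at the trimmed parameters, trimming leaving string probabilities unchanged by \cref{def:trim}. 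Writing $\mathbf{Q}^{(a)}$ for the submatrix of $\mP^{(a)}$ on the useful states, so that $\sum_{a\in\alphabet}\mathbf{Q}^{(a)}=\mP'$, multilinearity of the matrix product yields $\sum_{x_1\cdots x_n\in\alphabet^n}\mathbf{Q}^{(x_1)}\cdots\mathbf{Q}^{(x_n)}=(\mP')^n$; hence $P(X\in\alphabet^\ast)=\sum_{n=0}^{\infty}(\vsource')^\top(\mP')^n\vtarget'$, and because the matrix series converges we may pull the (finite-dimensional, hence continuous) bilinear map through the sum to obtain $(\vsource')^\top(\mI-\mP')^{-1}\vtarget'$. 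Finally, this quantity equals the termination probability $\sum_{\vx}\pString(\vx)$ of the original stochastic SFSSM, whose sequence model $P$ is a genuine probability measure, so it is $\le P(\alphabet^\ast\cup\alphabet^\infty)=1$.

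The one genuinely delicate step is showing $(\mP')^k\to\bm{0}$ — equivalently, that the spectral radius of $\mP'$ is strictly below $1$ — from the hypothesis that every state is co-accessible. The maximal-coordinate argument above is the cleanest self-contained route I see; one could instead try to show $\|(\mP')^{Q}\|_\infty<1$ directly by exhibiting, for each state, a positive-probability path of length $<Q$ to a state that leaks mass through a positive $t'_r$, or invoke Perron--Frobenius theory, but these demand the same combinatorial input with more bookkeeping. The degenerate case in which trimming removes every state is trivial: both sides equal $0$ and $\mI-\mP'$ is the empty matrix.
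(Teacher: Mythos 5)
Your proof is correct, and it reaches the crucial fact by a genuinely different route than the paper. The paper isolates the claim $\rho(\mP')<1$ as a separate lemma (\cref{lem:sub-fslm-eigen}) and proves it with matrix-analytic machinery: it passes to the Frobenius normal form to restore irreducibility, shows each irreducible diagonal block is strictly substochastic using trimming, and invokes a Horn--Johnson result relating row sums to the spectral radius, finally assembling $\rho(\mP')=\max_i\rho(\mP'_i)<1$ and concluding via the standard Neumann series. You instead prove the equivalent statement $(\mP')^k\to\bm{0}$ directly by a monotone-limit-plus-maximum-principle argument on $\vu_k=(\mP')^k\bm{1}$: the limit $\vu$ is a fixed point of $\mP'$, and the set $S$ of coordinates attaining its maximum (if positive) would be a nonempty transition-closed set of states with zero termination probability, contradicting co-accessibility. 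Both arguments consume exactly the same combinatorial input (every trimmed state is co-accessible), but yours is elementary and self-contained, avoiding Perron--Frobenius-type theory and the reducibility bookkeeping, whereas the paper's yields the slightly more quantitative conclusion $\rho(\mP')<1$ (hence geometric decay) as a reusable standalone lemma; since $(\mP')^k\to\bm{0}$ is in fact equivalent to $\rho(\mP')<1$, nothing is lost in the end. The remainder of your argument (telescoping identity for the inverse, exchanging the sum over string lengths with the bilinear form, and bounding by $1$ via the stochasticity of the untrimmed model) matches the paper's. One small point worth making explicit is that the positive-probability path witnessing co-accessibility passes only through useful states and therefore survives trimming, so the contradiction really does take place inside the trimmed model; this is easy to check and the paper elides it too.
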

\begin{proof}
See \cref{sec:subfslm-proof}.
\end{proof}
The well-known matrix inversion formula used above finds the total weight of all accepting paths in any weighted graph \cite{Tarj81b}.\footnote{This is assuming the total weight is finite (which we guarantee by substochasticity) and the matrix is invertible (which we guarantee by trimming)}  The formula can be seen as a special case of \citeposs{lehmann1977algebraic} algebraic path algorithm. 

\subsection{Transformer Language Models}\label{sec:transformer-thms}
We now prove that all Transformer language models are tight.
Key to our proof of the tightness of various neural architectures, including the Transformer, is the following basic fact in topology.
\begin{restatable}{theorem}{compact}\label{thm:compact}
Let $X$ be a compact topological space and $Y$ be any topological space. If $f:X\to Y$ is continuous, then $f(X)\subseteq Y$ is also compact.
\end{restatable}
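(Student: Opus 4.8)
The plan is to use the open-cover definition of compactness directly. Recall that $f(X)$, viewed as a subspace of $Y$, is compact precisely when every family of sets open in $Y$ whose union contains $f(X)$ admits a finite subfamily whose union still contains $f(X)$. So I would begin with an arbitrary index set $I$ and an arbitrary family $\{U_\alpha\}_{\alpha\in I}$ of open subsets of $Y$ with $f(X)\subseteq\bigcup_{\alpha\in I}U_\alpha$, and the goal is to extract a finite subcover.

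First I would pull the cover back through $f$. By continuity of $f$, each preimage $f^{-1}(U_\alpha)$ is open in $X$. Moreover, since $f(X)\subseteq\bigcup_{\alpha}U_\alpha$, every point $x\in X$ satisfies $f(x)\in U_\alpha$ for some $\alpha$, i.e.\ $x\in f^{-1}(U_\alpha)$; hence $\{f^{-1}(U_\alpha)\}_{\alpha\in I}$ is an open cover of $X$. Next I would invoke compactness of $X$ to obtain a finite set $\{\alpha_1,\dots,\alpha_n\}\subseteq I$ with $X=\bigcup_{i=1}^{n}f^{-1}(U_{\alpha_i})$. Applying $f$ to both sides and using the elementary set identity $f\bigl(\bigcup_{i=1}^{n}f^{-1}(U_{\alpha_i})\bigr)\subseteq\bigcup_{i=1}^{n}U_{\alpha_i}$, we conclude $f(X)\subseteq\bigcup_{i=1}^{n}U_{\alpha_i}$. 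Thus $\{U_{\alpha_1},\dots,U_{\alpha_n}\}$ is the required finite subcover, and $f(X)$ is compact.

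There is essentially no hard part here; the only points worth a sentence of care are the translation between ``compact subspace of $Y$'' and ``every cover by open sets of $Y$ has a finite subcover,'' and the fact that continuity is used exactly once, to guarantee that the pulled-back sets are open. As an alternative cosmetic phrasing, one could work entirely inside the subspace $f(X)$ by replacing each $U_\alpha$ with $U_\alpha\cap f(X)$ and noting that $f$ corestricts to a continuous surjection $X\to f(X)$; the substance of the argument is unchanged.
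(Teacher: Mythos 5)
Your proof is correct and follows essentially the same route as the paper's: pull the open cover of $f(X)$ back through $f$ using continuity, extract a finite subcover of $X$ by compactness, and push it forward. The extra care you take about covers by open sets of $Y$ versus open sets of the subspace $f(X)$ is a fine (and harmless) refinement of the same argument.
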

\begin{proof}
See \cref{sec:transformer-proofs}.
\end{proof}
To address the variable-length nature of modern deep NLP models, we will mathematically abstract them as a function on vector  tuples,\footnote{Here $\left(\R^d\right)^+$ is the set of nonempty tuples of vectors in $\R^d$.  This is formally the disjoint union (coproduct) $\coprod_{t\in\Z_{>0}}\R^{t\times d}$.} $f: \left(\R^d\right)^+ \to \left(\R^d\right)^+$, that is length-preserving in the sense that $f\left(\R^{t\times d}\right)\subseteq \left(\R^{t\times d}\right)$ for all $t > 0$.
Intuitively, this definition is saying that $f$ is a function that maps a nonempty vector tuple $\{\bm{v}_i\}_{i=1}^t$ 
to another vector tuple $\{\bm{h}_i\}_{i=1}^t$ of the same length, 
\begin{align}\label{eq:transformer}
f(\vv_1,\dots,\vv_t)=(\vh_1,\dots,\vh_t)\in\R^{t\times d},
\end{align}
where $\vv_i \in \R^d$ is commonly the embedding of the input symbol $x_i$.
In particular, we can take the function $f: \left(\R^d\right)^+ \to \left(\R^d\right)^+$ to be the function defined by a stack of Transformer layers. This setup will help us state the following.

\begin{restatable}{lemma}{transformerCompact} \label{lem:transformer-compact}
Let $f: \left(\R^d\right)^+ \to \left(\R^d\right)^+$ be the function defined by a finite number of Transformer layers (e.g., $n$ layers) with any continuous activation function. Given a compact set $K\subset \R^d$. Then, there exists a compact set $K'\subset\R^d$ such that for every $t\in\Z_{>0}$,
\begin{equation}
    f\big(K^t\big)\subseteq \left(K'\right)^t.
\end{equation}
\end{restatable}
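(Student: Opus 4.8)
The plan is to proceed by induction on the number of Transformer layers $n$, reducing the claim to a single-layer statement and then invoking \Cref{thm:compact} at each step. For the base case $n=0$ (no layers), $f$ is the identity, so $K' = K$ works trivially. For the inductive step, it suffices to show the claim for a single Transformer layer, since if $f = f_n \circ \dots \circ f_1$ and each $f_i$ maps $K_{i-1}^t$ into $K_i^t$ for compact sets $K_{i-1}, K_i$ (uniformly in $t$), then composing gives the result with $K' = K_n$.

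So the core task is: given a compact $K \subset \R^d$, produce a compact $K' \subset \R^d$ such that one Transformer layer maps $K^t$ into $(K')^t$ for every $t$. A Transformer layer is a composition of (i) a self-attention sublayer, (ii) a position-wise feedforward network with continuous activation, and (iii) residual connections and layer normalization. The key observation — and the reason the bound is uniform in $t$ — is that each output vector $\vh_i$ is computed as a function of the input vectors that, while depending on all of $\vv_1, \dots, \vv_t$, only ever forms \emph{convex combinations} of value vectors in the attention step. Concretely, the attention output at position $i$ is $\sum_{j} \alpha_{ij} (W_V \vv_j)$ where $\alpha_{i\cdot}$ is a probability vector (softmax weights); since $W_V$ is a fixed linear map, $W_V(K)$ is compact (continuous image of a compact set, \Cref{thm:compact}), hence so is its convex hull $\mathrm{conv}(W_V(K))$ by the Carathéodory-type argument that the convex hull of a compact set in $\R^d$ is compact. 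Thus every attention output lands in this fixed compact set regardless of $t$. Adding the residual $\vv_i \in K$ keeps us in the compact (Minkowski) sum $K + \mathrm{conv}(W_V(K))$, layer normalization is a continuous map so preserves compactness, and the feedforward sublayer with its continuous activation is again a continuous map on $\R^d$, so by \Cref{thm:compact} its image of the current compact set is compact. Chaining these finitely many continuous operations and convex-hull operations, each of which sends a compact set to a compact set, yields the desired $K'$.

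The step I expect to be the main obstacle is making precise the claim that attention produces only \emph{convex combinations} uniformly over all sequence lengths $t$ — this is what breaks the naive worry that "the input space $(\R^d)^+$ is not compact, so we cannot just apply \Cref{thm:compact}." One has to observe that the softmax weights always form a probability distribution over the $t$ positions, so no matter how large $t$ is, the attention output is a point in $\mathrm{conv}(W_V(K))$, a set that does not grow with $t$. A secondary technical point is handling multi-head attention and the output projection, but these are just additional fixed linear maps and concatenations, so they compose cleanly into the same argument. One should also note that positional encodings, if added, must themselves come from a bounded (hence precompact) set — I would either assume this as part of the architecture specification or absorb a fixed additive positional term into the compact sets, though in the common case where positional information is injected only at the input embedding layer this is already accounted for by taking $K$ to contain the embedded-plus-positional vectors. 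The remaining verifications (continuity of layer norm away from its measure-zero singular set, or noting that practical layer norm includes an $\epsilon$ so it is globally continuous; continuity of the feedforward map) are routine.
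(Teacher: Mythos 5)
Your proposal is correct and takes essentially the same approach as the paper: both proofs rest on the observation that attention outputs are convex combinations of (projections of) the input vectors, hence land in a fixed compact set independent of $t$, while all remaining per-position operations are continuous maps handled by \cref{thm:compact}. The only cosmetic difference is that you place the attention outputs in the (compact) convex hull of $W_V(K)$, whereas the paper bounds their norm via Jensen's inequality and uses a closed ball $\overline{B_M(0)}$ --- two phrasings of the same fact.
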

\begin{proof}
See \cref{sec:transformer-proofs}.
\end{proof}

Recall that a Transformer language model---or more precisely, a Transformer ASM---defines the conditional probabilities using the softmax transformation%
\begin{align}\label{eq:transformer-softmax}
\pASM(x_{t+1}\mid \vx_{\leq t})=\frac{\exp(\vu_{x_{t+1}}^\top \vh_t)}{\sum_{y\in\alphabeteos}\exp(\vu_y^\top \vh_t)}
\end{align}
where $\vu_x\in\R^d$ is the output symbol embedding of $x\in\alphabeteos$ and $\vh_t$ is defined from the input embeddings of $\vx_{\leq t}$ via \cref{eq:transformer}.  
Using \cref{lem:transformer-compact}, together with the finiteness of the vocabulary $\alphabet$ and the continuity of the softmax transformation \labelcref{eq:transformer-softmax}, readily yields our main result on Transformers.

\begin{restatable}{theorem}{transformerMain} \label{thm:transformer-main}
The autoregressive sequence model defined by any (fixed-depth) Transformer is tight.
\end{restatable}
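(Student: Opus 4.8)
The plan is to derive tightness from \cref{prop:div-implies-tight} by exhibiting a prefix-length-dependent lower bound $f(t)$ on the $\eos$ probability whose series diverges---in fact, I will show that the $\eos$ probability is bounded below by a positive \emph{constant} $c > 0$ uniformly over all prefixes, which makes $\sum_t f(t) = \sum_t c = \infty$ trivially. So the whole argument reduces to producing such a uniform lower bound on $\pASM(\eos \mid \vx_{\leq t})$ via \cref{eq:transformer-softmax}.

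First I would fix a compact set $K \subset \R^d$ containing all the (finitely many) input symbol embeddings $\{\vv_x : x \in \alphabet\}$ of the Transformer ASM; since $\alphabet$ is finite, such a $K$ exists (e.g., a closed ball of sufficiently large radius, or the finite set itself, which is compact). By \cref{lem:transformer-compact} applied to the stack of Transformer layers $f$ (which has a continuous activation function), there is a compact $K' \subset \R^d$ with $f(K^t) \subseteq (K')^t$ for every $t \in \Z_{>0}$. In particular, every hidden state $\vh_t$ that can arise as the last coordinate of $f$ applied to a tuple of input embeddings of a string $\vx_{\leq t} \in \kleene{\alphabet}$ lies in the single compact set $K'$, regardless of $t$ and regardless of the string content. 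This is the crucial uniformity: one compact set captures all reachable hidden states across all time steps.

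Next I would observe that the map $\vh \mapsto \pASM(\eos \mid \vh) \defeq \exp(\vu_\eos^\top \vh) / \sum_{y \in \alphabeteos} \exp(\vu_y^\top \vh)$ is continuous in $\vh$ (each $\exp(\vu_y^\top \cdot)$ is continuous, the denominator is a finite sum of strictly positive continuous functions hence continuous and nowhere zero, and $\alphabeteos$ is finite by construction) and strictly positive everywhere on $\R^d$. Restricting this continuous positive function to the compact set $K'$ and invoking the extreme value theorem (a continuous real-valued function on a compact space attains its infimum --- this is essentially \cref{thm:compact} plus the fact that a nonempty compact subset of $\R$ has a minimum), we obtain $c \defeq \min_{\vh \in K'} \pASM(\eos \mid \vh) > 0$, where positivity of the min follows from positivity of the function pointwise. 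Hence $\pASM(\eos \mid \vx_{\leq t}) \geq c$ for every $t \geq 1$ and every $\vx_{\leq t} \in \alphabet^{t-1}$ (treating the $t=1$/empty-prefix case the same way, since the initial hidden state also lies in $K'$, or handling it separately as a single step).

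Finally, I would apply \cref{prop:div-implies-tight} with the constant function $f(t) = c$: since $\sum_{t=1}^\infty c = \infty$, the proposition yields $\P(\cap_{k=1}^\infty A_k^\comp) = 0$, i.e., the Transformer ASM is tight. The main obstacle --- really the only nontrivial input --- is \cref{lem:transformer-compact}, i.e., showing that the reachable hidden states across \emph{all} sequence lengths lie in a common compact set; but that lemma is already established in the excerpt, so here the work is just assembling it with the extreme value theorem and the positivity of softmax. One minor point to be careful about is the empty-prefix / first-step case and making sure the "input embeddings" over which $K$ is taken genuinely include everything the Transformer consumes (positional encodings, if present, must either be bounded or folded into the compactness argument); I would note this explicitly but it does not change the structure of the proof.
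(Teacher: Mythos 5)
Your proposal is correct and follows essentially the same route as the paper's proof: bound all reachable hidden states in a single compact set via \cref{lem:transformer-compact}, use continuity and strict positivity of the softmax on that compact set to extract a uniform constant lower bound on the $\eos$ probability, and conclude via \cref{prop:div-implies-tight}. The one point you flag but defer---that the inputs to the first layer include position embeddings, so $K$ cannot simply be the finite set of word embeddings---is handled in the paper by observing that the position embeddings lie in the compact set $[-1,1]^d$, so the inputs remain within a fixed compact set independent of sequence length.
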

\begin{proof}
See \cref{sec:transformer-proofs}.
\end{proof}

\subsection{Recurrent Neural Language Models}\label{sec:rnn-thms}%
Recall that the hidden state of an RNN is typically defined by the recurrence
\begin{align}
\vh_t=\sigma\left(
\bm{W}\vv_t+\bm{U}\vh_{t-1}+\bm{b}
\right)
\end{align}
where $\vv_t \in \R^d$ is the embedding of the input symbol $x_t$, as above, and $\sigma(\cdot)$ is some activation function \cite{elman1990}. The conditional probabilities are usually defined in the same way as \cref{eq:transformer-softmax}. Using \cref{thm:compact} and the same strategy of proof as in \cref{thm:transformer-main}, one can also easily prove the tightness of any RNN ASM with bounded activations (e.g., $\tanh$ or $\mathrm{sigmoid}$). However, as we saw in \cref{ex:non-tight-rnn}, an unbounded activation function (e.g., $\relu$) can indeed lead to non-tightness by making the probability of \eos decay too fast. The condition derived in \cref{prop:lm-tight-main} precisely determines how fast such decay can be without losing the tightness of the language model. Below, we generalize this result as well as Lemma 3.2 of \citet{welleck-etal-2020-consistency}, and show that if the norm of the activations eventually grows sub-logarithmically, the RNN is still tight.

\begin{restatable}{proposition}{rnnTight}\label{thm:rnn-tight}
Given an RNN ASM over $\alphabeteos$. 
Again let the output symbol vector  be $\vu_{x} \in \R^d$ for $x \in \alphabeteos$, and set $k \defeq \sup_{x \in \alphabet} \Vert\vu_x- \ueos\Vert_2$. 
Additionally, for each $t > 0$, let  $\Vert\widehat\vh_t\Vert_2$ be the maximum attainable hidden state norm for any context $\vx \in \alphabet^t$.
Such a sequence model is tight if $k \Vert\widehat\vh_t\Vert_2 \le \log t$ for all sufficiently large $t$.
\end{restatable}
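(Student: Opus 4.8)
The plan is to reduce the claim to the sufficient condition in \cref{prop:div-implies-tight}: it suffices to exhibit a function $f$ with $\pASM(\eos \mid \vx) \ge f(t)$ for all $t \ge 1$ and all $\vx \in \alphabet^{t-1}$, together with $\sum_{t=1}^\infty f(t) = \infty$. Everything then follows from a single manipulation of the softmax \cref{eq:transformer-softmax}. Writing $\vh$ for the hidden state used to predict the symbol after the context $\vx$ (so $\vh = \vh_{t-1}$ when $|\vx| = t-1$) and dividing numerator and denominator by $\exp(\ueos^\top \vh)$ gives
\[
\pASM(\eos \mid \vx) = \frac{1}{\sum_{y \in \alphabeteos} \exp\!\big((\vu_y - \ueos)^\top \vh\big)}.
\]
The $y=\eos$ summand equals $1$; for $y \in \alphabet$, Cauchy--Schwarz yields $(\vu_y - \ueos)^\top \vh \le \Vert \vu_y - \ueos\Vert_2 \, \Vert\vh\Vert_2 \le k\,\Vert\vh\Vert_2$. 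Since $\alphabet$ is finite, the denominator is at most $1 + |\alphabet|\exp(k\Vert\vh\Vert_2)$, so $\pASM(\eos \mid \vx) \ge \big(1 + |\alphabet|\exp(k\Vert\vh\Vert_2)\big)^{-1}$.

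Next I would make this bound content-independent. For $\vx \in \alphabet^{t-1}$ the relevant hidden state is $\vh_{t-1}$, whose norm is at most $\Vert\widehat\vh_{t-1}\Vert_2$ by definition; this quantity is finite since $\alphabet^{t-1}$ is a finite set. Define $f(t) \defeq \big(1 + |\alphabet|\exp(k\,\Vert\widehat\vh_{t-1}\Vert_2)\big)^{-1}$ for $t \ge 1$, with the $t = 1$ case read off the fixed initial hidden state $\vh_0$. Then $\pASM(\eos\mid\vx) \ge f(t)$ for every $\vx \in \alphabet^{t-1}$, and $f(t) > 0$ for all $t$.

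It remains to check $\sum_{t=1}^\infty f(t) = \infty$. By hypothesis there is $T$ with $k\Vert\widehat\vh_t\Vert_2 \le \log t$ for all $t \ge T$, so $\exp(k\Vert\widehat\vh_t\Vert_2) \le t$ and hence $f(t+1) \ge \big(1 + |\alphabet|\,t\big)^{-1}$ for all $t \ge T$. The series $\sum_t \big(1 + |\alphabet|\,t\big)^{-1}$ diverges, and dropping the finitely many initial terms does not affect divergence, so $\sum_t f(t) = \infty$. Applying \cref{prop:div-implies-tight} with this $f$ gives $\P(\cap_{k=1}^\infty A_k^\c) = 0$, i.e., the RNN ASM is tight.

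I do not expect a genuine obstacle here: the only real content is the softmax rewrite together with Cauchy--Schwarz and the finiteness of $\alphabet$, and the only points requiring mild care are noting that $\Vert\widehat\vh_t\Vert_2$ is finite (so $f$ is a bona fide positive lower bound) and absorbing the harmless index shift between ``context length $t-1$'' in \cref{prop:div-implies-tight} and ``$\widehat\vh_t$ = max norm over contexts of length $t$'' in the statement, which cannot affect whether $\sum f(t)$ diverges. The one deliberate design choice is to route through the sufficient condition \cref{prop:div-implies-tight} rather than the exact characterization \cref{prop:lm-tight-main}: the latter's $\ptildeEOS(t)$ carries an intractable normalizer and gains nothing, since the bound above is already content-independent.
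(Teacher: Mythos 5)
Your proposal is correct and follows essentially the same route as the paper's proof: divide the softmax through by $\exp(\ueos^\top\vh)$, apply Cauchy--Schwarz and the finiteness of $\alphabet$ to obtain the lower bound $\left(1+|\alphabet|\exp(k\Vert\widehat\vh_t\Vert_2)\right)^{-1}$, convert the hypothesis into $\exp(k\Vert\widehat\vh_t\Vert_2)\le t$, and conclude divergence by comparison with $\sum_t (1+|\alphabet| t)^{-1}$ before invoking \cref{prop:div-implies-tight}. Your extra care about the off-by-one index shift and the finiteness of $\Vert\widehat\vh_t\Vert_2$ is harmless and does not change the argument.
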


\begin{proof}
See \cref{app:rnn-tight}.
\end{proof}

This result is weaker than \cref{thm:transformer-main} because in an RNN, unlike a Transformer, the depth of the computation graph grows with the sequence length.

\section{Conclusion}%
This paper presents a measure-theoretic treatment of language modeling and its tightness. 
Practical implications of our results include determining when sampling from an autoregressive sequence model is guaranteed to terminate and whether MCMC algorithms over such models will mix to the correct distribution. 

To this end, we first defined various components of language modeling in measure-theoretic terminology. 
This in turn allows us to understand the portion of probability mass allocated to  infinite-length strings. 
Importantly, this presentation formalizes a definition of sequence modeling under which the probability of producing an infinite-length sequence is non-zero; while today's models are often capable of producing such strings, previously there was no rigorous treatment of this case. 

Indeed, such a definition is useful when considering a number of neural architectures (e.g., a simple RNN as in \citealp{elman1990}) and language generation systems (e.g., the distribution induced by nucleus sampling; \citealp{Holtzman2020The}). 
In particular, we showed that perhaps the most commonly-used NLP architecture, the Transformer language model, is indeed a language model---a tight distribution over finite strings---a property that had been called into question by previous work.

\section*{Limitations}
Our discussion in this paper leaves out the consideration of computability of measures over languages.
Specifically, we note that there exist works on computable measure theory developed in the context of theoretical computer science \citep{de-leeuw-1956} and probabilistic programming languages \citep{roy-2011-thesis}.
Additional machinery needs to be developed in order for a proper treatment and we leave this for future work.

Another notable limitation is that we exclusively focused on the autoregressive production of language. 
Importantly, our formalism might not be compatible with other models of language production such as those induced by a PCFG.

Finally, our proofs of \cref{thm:transformer-main,thm:rnn-tight} exploit the strictly positive property of the softmax function. Importantly, they do not apply to models with sparse distributions \cite{pmlr-v48-martins16,peters-etal-2019-sparse,martins-reconciling-2021}.

\section*{Ethics}

There are no ethical implications of this paper to the best knowledge of the authors.

\section*{Acknowledgments}
We thank Holden Lee for helpful discussion and suggestions, Anej Svete for help with the graphics and Chu-Cheng Lin and Xinyan Velocity Yu for helpful comments. LD is partially supported by the Johns Hopkins Mathematical Institute for Data Science (MINDS) Fellowship. Finally, we thank the students of the \href{https://rycolab.io/classes/llm-s23/}{LLM course at ETH Z\"urich (263-5354-00L)} for carefully reading this paper as part of their lecture notes, and in particular, Valentin Bieri for making a valuable remark.

\bibliography{arxiv-ready}

\begin{thebibliography}{49}
\expandafter\ifx\csname natexlab\endcsname\relax\def\natexlab#1{#1}\fi

\bibitem[{Axler(2020)}]{axler2020}
Sheldon Axler. 2020.
\newblock \href {https://doi.org/10.1007/978-3-030-33143-6} {\emph{Measure,
  Integration {\&} Real Analysis}}.
\newblock Springer International Publishing.

\bibitem[{Billingsley(1995)}]{billingsley1986}
Patrick Billingsley. 1995.
\newblock \href
  {https://www.wiley.com/en-us/Probability+and+Measure\%2C+Anniversary+Edition-p-9781118122372}
  {\emph{Probability and Measure}}, 3$^{\text{rd}}$ edition.
\newblock Wiley.

\bibitem[{Blackwell and Diaconis(1996)}]{blackwell-1996}
David Blackwell and Persi Diaconis. 1996.
\newblock \href
  {https://projecteuclid.org/ebooks/institute-of-mathematical-statistics-lecture-notes-monograph-series/Statistics-probability-and-game-theory/Chapter/A-non-measurable-tail-set/10.1214/lnms/1215453560}
  {A non-measurable tail set}.
\newblock \emph{Statistics, probability and game theory: Papers in honor of
  David Blackwell}, 30:1--5.

\bibitem[{Booth and Thompson(1973)}]{booth1973}
Taylor~L. Booth and Richard~A. Thompson. 1973.
\newblock \href {https://doi.org/10.1109/T-C.1973.223746} {Applying probability
  measures to abstract languages}.
\newblock \emph{IEEE Transactions on Computers}, C-22(5):442--450.

\bibitem[{Bostrom and Durrett(2020)}]{durrett-nlp}
Kaj Bostrom and Greg Durrett. 2020.
\newblock \href {https://doi.org/10.18653/v1/2020.findings-emnlp.414} {Byte
  pair encoding is suboptimal for language model pretraining}.
\newblock In \emph{Findings of the Association for Computational Linguistics:
  EMNLP 2020}, pages 4617--4624, Online. Association for Computational
  Linguistics.

\bibitem[{Brown et~al.(2020)Brown, Mann, Ryder, Subbiah, Kaplan, Dhariwal,
  Neelakantan, Shyam, Sastry, Askell, Agarwal, Herbert-Voss, Krueger, Henighan,
  Child, Ramesh, Ziegler, Wu, Winter, Hesse, Chen, Sigler, Litwin, Gray, Chess,
  Clark, Berner, McCandlish, Radford, Sutskever, and Amodei}]{gpt3}
Tom Brown, Benjamin Mann, Nick Ryder, Melanie Subbiah, Jared Kaplan, Prafulla
  Dhariwal, Arvind Neelakantan, Pranav Shyam, Girish Sastry, Amanda Askell,
  Sandhini Agarwal, Ariel Herbert-Voss, Gretchen Krueger, Tom Henighan, Rewon
  Child, Aditya Ramesh, Daniel Ziegler, Jeffrey Wu, Clemens Winter, Chris
  Hesse, Mark Chen, Eric Sigler, Mateusz Litwin, Scott Gray, Benjamin Chess,
  Jack Clark, Christopher Berner, Sam McCandlish, Alec Radford, Ilya Sutskever,
  and Dario Amodei. 2020.
\newblock \href
  {https://proceedings.neurips.cc/paper/2020/file/1457c0d6bfcb4967418bfb8ac142f64a-Paper.pdf}
  {Language models are few-shot learners}.
\newblock In \emph{Advances in Neural Information Processing Systems},
  volume~33, pages 1877--1901.

\bibitem[{Chen et~al.(2018)Chen, Gilroy, Maletti, May, and
  Knight}]{chen-etal-2018-recurrent}
Yining Chen, Sorcha Gilroy, Andreas Maletti, Jonathan May, and Kevin Knight.
  2018.
\newblock \href {https://doi.org/10.18653/v1/N18-1205} {Recurrent neural
  networks as weighted language recognizers}.
\newblock In \emph{Proceedings of the 2018 Conference of the North {A}merican
  Chapter of the Association for Computational Linguistics: Human Language
  Technologies, Volume 1 (Long Papers)}, pages 2261--2271, New Orleans,
  Louisiana. Association for Computational Linguistics.

\bibitem[{Chi(1999)}]{chi-1999-statistical}
Zhiyi Chi. 1999.
\newblock \href {https://aclanthology.org/J99-1004} {Statistical properties of
  probabilistic context-free grammars}.
\newblock \emph{Computational Linguistics}, 25(1):131--160.

\bibitem[{Chi and Geman(1998)}]{chi-geman-1998-estimation}
Zhiyi Chi and Stuart Geman. 1998.
\newblock \href {https://aclanthology.org/J98-2005} {Estimation of
  probabilistic context-free grammars}.
\newblock \emph{Computational Linguistics}, 24(2):299--305.

\bibitem[{Cohen and Johnson(2013)}]{cohen-johnson-2013-effect}
Shay~B. Cohen and Mark Johnson. 2013.
\newblock \href {https://aclanthology.org/P13-1102} {The effect of
  non-tightness on {B}ayesian estimation of {PCFG}s}.
\newblock In \emph{Proceedings of the 51st Annual Meeting of the Association
  for Computational Linguistics (Volume 1: Long Papers)}, pages 1033--1041,
  Sofia, Bulgaria. Association for Computational Linguistics.

\bibitem[{de~Leeuw et~al.(1956)de~Leeuw, Moore, Shannon, and
  Shapiro}]{de-leeuw-1956}
K.~de~Leeuw, E.~F. Moore, C.~E. Shannon, and N.~Shapiro. 1956.
\newblock \href {http://www.jstor.org/stable/j.ctt1bgzb3s.12}
  {\emph{COMPUTABILITY BY PROBABILISTIC MACHINES}}, Annals of Mathematics.
  Studies, no. 34, pages 183--212. Princeton University Press.

\bibitem[{Durrett(2019)}]{durrett_2019}
Rick Durrett. 2019.
\newblock \href {https://doi.org/10.1017/9781108591034} {\emph{Probability:
  Theory and Examples}}, 5$^{\text{th}}$ edition.
\newblock Cambridge Series in Statistical and Probabilistic Mathematics.
  Cambridge University Press.

\bibitem[{Elman(1990)}]{elman1990}
Jeffrey~L. Elman. 1990.
\newblock \href
  {https://www.sciencedirect.com/science/article/abs/pii/036402139090002E}
  {Finding structure in time}.
\newblock \emph{Cognitive Science}, 14(2):179--211.

\bibitem[{Folland(1999)}]{folland-1999}
Gerald~B. Folland. 1999.
\newblock \href
  {https://www.wiley.com/en-us/Real+Analysis:+Modern+Techniques+and+Their+Applications,+2nd+Edition-p-9780471317166}
  {\emph{Real Analysis: Modern Techniques and Their Applications}},
  2$^{\text{nd}}$ edition.
\newblock Wiley.

\bibitem[{Grinstead and Snell(1997)}]{grinstead1997}
Charles~M. Grinstead and J.~Laurie Snell. 1997.
\newblock \href {https://math.dartmouth.edu/~prob/prob/prob.pdf}
  {\emph{Introduction to Probability}}, 2$^{\text{nd}}$ revised edition.
\newblock American Mathematical Society.

\bibitem[{Hale(2001)}]{hale2001probabilistic}
John Hale. 2001.
\newblock \href {https://aclanthology.org/N01-1021} {A probabilistic {E}arley
  parser as a psycholinguistic model}.
\newblock In \emph{Second Meeting of the North {A}merican Chapter of the
  Association for Computational Linguistics}.

\bibitem[{Holtzman et~al.(2020)Holtzman, Buys, Du, Forbes, and
  Choi}]{Holtzman2020The}
Ari Holtzman, Jan Buys, Li~Du, Maxwell Forbes, and Yejin Choi. 2020.
\newblock \href {https://openreview.net/forum?id=rygGQyrFvH} {The curious case
  of neural text degeneration}.
\newblock In \emph{International Conference on Learning Representations}.

\bibitem[{Horn and Johnson(2012)}]{horn2013}
Roger~A. Horn and Charles~R. Johnson. 2012.
\newblock \href {https://doi.org/10.1017/CBO9781139020411} {\emph{Matrix
  Analysis}}, 2$^{\text{nd}}$ edition.
\newblock Cambridge University Press.

\bibitem[{Jelinek(1976)}]{jelinek1976}
Frederick Jelinek. 1976.
\newblock \href {https://doi.org/10.1109/PROC.1976.10159} {Continuous speech
  recognition by statistical methods}.
\newblock \emph{Proceedings of the IEEE}, 64(4):532--556.

\bibitem[{Jurafsky and Martin(2009)}]{jurafsky2009}
Dan Jurafsky and James Martin. 2009.
\newblock \href {https://home.cs.colorado.edu/~martin/slp.html} {\emph{Speech
  and Language Processing: An Introduction to Natural Language Processing,
  Computational Linguistics, and Speech Recognition}}, 2$^{\text{nd}}$ edition.
\newblock Pearson Prentice Hall.

\bibitem[{Kolmogorov(1933)}]{kolmogorov1933}
A.~N. Kolmogorov. 1933.
\newblock \href {https://link.springer.com/book/10.1007/978-3-642-49888-6}
  {\emph{Grundbegriffe der Wahrscheinlichkeitsrechnung}}.
\newblock Springer.

\bibitem[{Lehmann(1977)}]{lehmann1977algebraic}
Daniel~J. Lehmann. 1977.
\newblock \href
  {https://www.sciencedirect.com/science/article/pii/0304397577900561}
  {Algebraic structures for transitive closure}.
\newblock \emph{Theoretical Computer Science}, 4(1):59--76.

\bibitem[{Lin(2022)}]{lin2022thesis}
Chu-Cheng Lin. 2022.
\newblock \emph{On Expressiveness, Inference, and Parameter Estimation of
  Discrete Sequence Models}.
\newblock Ph.D. thesis, Johns Hopkins University.

\bibitem[{Lin et~al.(2021)Lin, Jaech, Li, Gormley, and
  Eisner}]{lin-etal-2021-limitations}
Chu-Cheng Lin, Aaron Jaech, Xin Li, Matthew~R. Gormley, and Jason Eisner. 2021.
\newblock \href {https://doi.org/10.18653/v1/2021.naacl-main.405} {Limitations
  of autoregressive models and their alternatives}.
\newblock In \emph{Proceedings of the 2021 Conference of the North American
  Chapter of the Association for Computational Linguistics: Human Language
  Technologies}, pages 5147--5173, Online. Association for Computational
  Linguistics.

\bibitem[{Lin and McCarthy(2022)}]{lin2022on}
Chu-Cheng Lin and Arya~D. McCarthy. 2022.
\newblock \href {https://openreview.net/forum?id=SsPCtEY6yCl} {On the
  uncomputability of partition functions in energy-based sequence models}.
\newblock In \emph{International Conference on Learning Representations}.

\bibitem[{Liu et~al.(2022)Liu, Jiang, Monath, Cotterell, and
  Sachan}]{liu+al.emnlp22}
Tianyu Liu, Yuchen Jiang, Nicholas Monath, Ryan Cotterell, and Mrinmaya Sachan.
  2022.
\newblock \href {https://arxiv.org/pdf/2210.14698} {Autoregressive structure
  prediction with language models}.
\newblock In \emph{Findings of the Association for Computational Linguistics:
  EMNL 2022}, Abu Dhabi, United Arab Emirates. Association for Computational
  Linguistics.

\bibitem[{Martins and Astudillo(2016)}]{pmlr-v48-martins16}
Andre Martins and Ramon Astudillo. 2016.
\newblock \href {https://proceedings.mlr.press/v48/martins16.html} {From
  softmax to sparsemax: A sparse model of attention and multi-label
  classification}.
\newblock In \emph{Proceedings of The 33rd International Conference on Machine
  Learning}, volume~48 of \emph{Proceedings of Machine Learning Research},
  pages 1614--1623, New York, New York, USA. PMLR.

\bibitem[{Martins(2021)}]{martins-reconciling-2021}
Andr{\'{e}} F.~T. Martins. 2021.
\newblock \href {http://arxiv.org/abs/2104.00755} {Reconciling the
  discrete-continuous divide: Towards a mathematical theory of sparse
  communication}.
\newblock \emph{CoRR}, abs/2104.00755.

\bibitem[{Meister et~al.(2021)Meister, Pimentel, Haller, J{\"a}ger, Cotterell,
  and Levy}]{meister-etal-2021-revisiting}
Clara Meister, Tiago Pimentel, Patrick Haller, Lena J{\"a}ger, Ryan Cotterell,
  and Roger Levy. 2021.
\newblock \href {https://doi.org/10.18653/v1/2021.emnlp-main.74} {Revisiting
  the uniform information density hypothesis}.
\newblock In \emph{Proceedings of the 2021 Conference on Empirical Methods in
  Natural Language Processing}, pages 963--980, Online and Punta Cana,
  Dominican Republic. Association for Computational Linguistics.

\bibitem[{Meister et~al.(2022)Meister, Pimentel, Wiher, and
  Cotterell}]{meister-tacl2022}
Clara Meister, Tiago Pimentel, Gian Wiher, and Ryan Cotterell. 2022.
\newblock \href {https://arxiv.org/abs/2202.00666} {Locally typical sampling}.
\newblock \emph{Transactions of the Association for Computational Linguistics}.

\bibitem[{Munkres(2000)}]{munkres2000}
James~R. Munkres. 2000.
\newblock \href
  {https://www.amazon.com/Topology-2nd-James-Munkres/dp/0131816292}
  {\emph{Topology}}, 2$^{\text{nd}}$ edition.
\newblock Prentice Hall, Inc.

\bibitem[{Nair and Hinton(2010)}]{relu}
Vinod Nair and Geoffrey~E. Hinton. 2010.
\newblock \href {https://dl.acm.org/doi/10.5555/3104322.3104425} {Rectified
  linear units improve restricted {B}oltzmann machines}.
\newblock In \emph{Proceedings of the 27th International Conference on
  International Conference on Machine Learning}, pages 807--814, Madison, WI,
  USA.

\bibitem[{Nederhof and Satta(2006)}]{nederhof-satta-2006-estimation}
Mark-Jan Nederhof and Giorgio Satta. 2006.
\newblock \href {https://aclanthology.org/N06-1044} {Estimation of consistent
  probabilistic context-free grammars}.
\newblock In \emph{Proceedings of the Human Language Technology Conference of
  the {NAACL}, Main Conference}, pages 343--350, New York City, USA.
  Association for Computational Linguistics.

\bibitem[{Oxtoby(1980)}]{Oxtoby1980}
John~C. Oxtoby. 1980.
\newblock \href {https://doi.org/10.1007/978-1-4684-9339-9} {\emph{Measure and
  Category: A Survey of the Analogies between Topological and Measure Spaces}}.
\newblock Springer New York.

\bibitem[{Peters et~al.(2019)Peters, Niculae, and
  Martins}]{peters-etal-2019-sparse}
Ben Peters, Vlad Niculae, and Andr{\'e} F.~T. Martins. 2019.
\newblock \href {https://doi.org/10.18653/v1/P19-1146} {Sparse
  sequence-to-sequence models}.
\newblock In \emph{Proceedings of the 57th Annual Meeting of the Association
  for Computational Linguistics}, pages 1504--1519, Florence, Italy.
  Association for Computational Linguistics.

\bibitem[{Peters et~al.(2018)Peters, Neumann, Iyyer, Gardner, Clark, Lee, and
  Zettlemoyer}]{peters-etal-2018-deep}
Matthew~E. Peters, Mark Neumann, Mohit Iyyer, Matt Gardner, Christopher Clark,
  Kenton Lee, and Luke Zettlemoyer. 2018.
\newblock \href {https://doi.org/10.18653/v1/N18-1202} {Deep contextualized
  word representations}.
\newblock In \emph{Proceedings of the 2018 Conference of the North {A}merican
  Chapter of the Association for Computational Linguistics: Human Language
  Technologies, Volume 1 (Long Papers)}, pages 2227--2237, New Orleans,
  Louisiana. Association for Computational Linguistics.

\bibitem[{Petroni et~al.(2019)Petroni, Rockt{\"a}schel, Riedel, Lewis, Bakhtin,
  Wu, and Miller}]{petroni-etal-2019-language}
Fabio Petroni, Tim Rockt{\"a}schel, Sebastian Riedel, Patrick Lewis, Anton
  Bakhtin, Yuxiang Wu, and Alexander Miller. 2019.
\newblock \href {https://doi.org/10.18653/v1/D19-1250} {Language models as
  knowledge bases?}
\newblock In \emph{Proceedings of the 2019 Conference on Empirical Methods in
  Natural Language Processing and the 9th International Joint Conference on
  Natural Language Processing (EMNLP-IJCNLP)}, pages 2463--2473, Hong Kong,
  China. Association for Computational Linguistics.

\bibitem[{Reif et~al.(2022)Reif, Ippolito, Yuan, Coenen, Callison-Burch, and
  Wei}]{reif-etal-2022-recipe}
Emily Reif, Daphne Ippolito, Ann Yuan, Andy Coenen, Chris Callison-Burch, and
  Jason Wei. 2022.
\newblock \href {https://doi.org/10.18653/v1/2022.acl-short.94} {A recipe for
  arbitrary text style transfer with large language models}.
\newblock In \emph{Proceedings of the 60th Annual Meeting of the Association
  for Computational Linguistics (Volume 2: Short Papers)}, pages 837--848,
  Dublin, Ireland. Association for Computational Linguistics.

\bibitem[{Roy(2011)}]{roy-2011-thesis}
Daniel~M. Roy. 2011.
\newblock \href {http://danroy.org/papers/Roy-PHD-2011.pdf}
  {\emph{Computability, Inference and Modeling in Probabilistic Programming}}.
\newblock Ph.D. thesis, Massachusetts Institute of Technology, USA.
\newblock AAI0823858.

\bibitem[{Royden(1988)}]{royden1988real}
Halsey~L. Royden. 1988.
\newblock \href {https://books.google.com/books?id=KBCfPwAACAAJ} {\emph{Real
  Analysis}}, 3$^{\text{rd}}$ edition.
\newblock Prentice-Hall.

\bibitem[{Shannon(1948)}]{shannon1948mathematical}
Claude~E. Shannon. 1948.
\newblock \href {https://doi.org/10.1002/j.1538-7305.1948.tb01338.x} {A
  mathematical theory of communication}.
\newblock \emph{Bell System Technical Journal}, 27:623--656.

\bibitem[{Tao(2011)}]{tao2011}
Terence Tao. 2011.
\newblock \href
  {https://terrytao.files.wordpress.com/2012/12/gsm-126-tao5-measure-book.pdf}
  {\emph{An Introduction to Measure Theory}}.
\newblock American Mathematical Society.

\bibitem[{Tao(2016)}]{tao2016analysis}
Terence Tao. 2016.
\newblock \href {https://doi.org/10.1007/978-981-10-1804-6} {\emph{Analysis II:
  Third Edition}}.
\newblock Texts and Readings in Mathematics. Springer Singapore.

\bibitem[{Tarjan(1981)}]{Tarj81b}
Robert~E. Tarjan. 1981.
\newblock \href {https://dl.acm.org/doi/10.1145/322261.322273} {Fast algorithms
  for solving path problems}.
\newblock \emph{Journal of the Association for Computing Machinery},
  28(3):594--614.

\bibitem[{Ulam(1930)}]{ulam1930}
Stanis{\l}aw Ulam. 1930.
\newblock \href {http://eudml.org/doc/212487} {Zur masstheorie in der
  allgemeinen mengenlehre}.
\newblock \emph{Fundamenta Mathematicae}, 16(1):140--150.

\bibitem[{Vaswani et~al.(2017)Vaswani, Shazeer, Parmar, Uszkoreit, Jones,
  Gomez, Kaiser, and Polosukhin}]{vaswani-2017-attention}
Ashish Vaswani, Noam Shazeer, Niki Parmar, Jakob Uszkoreit, Llion Jones,
  Aidan~N. Gomez, \L{}ukasz Kaiser, and Illia Polosukhin. 2017.
\newblock \href
  {https://proceedings.neurips.cc/paper/2017/file/3f5ee243547dee91fbd053c1c4a845aa-Paper.pdf}
  {Attention is all you need}.
\newblock In \emph{Advances in Neural Information Processing Systems},
  volume~30.

\bibitem[{Vidal et~al.(2005)Vidal, Thollard, de~la Higuera, Casacuberta, and
  Carrasco}]{pfsa}
E.~Vidal, F.~Thollard, C.~de~la Higuera, F.~Casacuberta, and R.C. Carrasco.
  2005.
\newblock \href {https://doi.org/10.1109/TPAMI.2005.147} {Probabilistic
  finite-state machines - part i}.
\newblock \emph{IEEE Transactions on Pattern Analysis and Machine
  Intelligence}, 27(7):1013--1025.

\bibitem[{Weaver(1955)}]{weaver1949}
Warren Weaver. 1955.
\newblock \href
  {https://repositorio.ul.pt/bitstream/10451/10945/2/ulfl155512_tm_2.pdf}
  {Translation}.
\newblock In William~N. Locke and A.~Donald Boothe, editors, \emph{Machine
  Translation of Languages}, pages 15--23. MIT Press.
\newblock Reprinted from a memorandum written by Weaver in 1949.

\bibitem[{Welleck et~al.(2020)Welleck, Kulikov, Kim, Pang, and
  Cho}]{welleck-etal-2020-consistency}
Sean Welleck, Ilia Kulikov, Jaedeok Kim, Richard~Yuanzhe Pang, and Kyunghyun
  Cho. 2020.
\newblock \href {https://doi.org/10.18653/v1/2020.emnlp-main.448} {Consistency
  of a recurrent language model with respect to incomplete decoding}.
\newblock In \emph{Proceedings of the 2020 Conference on Empirical Methods in
  Natural Language Processing (EMNLP)}, pages 5553--5568, Online. Association
  for Computational Linguistics.

\end{thebibliography}
\bibliographystyle{acl_natbib}

\appendix
\onecolumn

\section{Related Work}

The issue of tightness has been studied extensively in the context of probabilistic context-free grammars \citep[PCFG;][]{chi-geman-1998-estimation,chi-1999-statistical,cohen-johnson-2013-effect}, although \citet{chi-1999-statistical} refers to non-tight models as \defn{improper}. 
Specifically, \citet{chi-1999-statistical} gave algorithms for determining the tightness of a PCFG by formalizing a PCFG as a branching process. 
\citet{chi-1999-statistical} further proved that any maximum-likelihood estimator yields a tight PCFG.
Several previous works study the ability of language models to place probability mass on infinite-length strings \cite{booth1973,nederhof-satta-2006-estimation,chen-etal-2018-recurrent,welleck-etal-2020-consistency}, where they refer to the non-tight language models as \defn{inconsistent}.
In some cases, this behavior can be attributed to the discrepancy between the language model itself and the distribution induced by a (possibly stochastic) decoding algorithm: the decoder may have a lower probability of generating the \eos token.  For example, on the tight bigram model of \cref{ex:tight-2-gram}, a greedy decoder will always generate $\seq{a}$ and never \eos. Yet in other examples, it is the model itself that leaks probability mass to infinite-length strings, i.e., it may be non-tight, which is the problem we focus on in this work, providing a characterization of tightness.
Notably, the conditions we propose are more general than those of \citet{welleck-etal-2020-consistency}.

Several other works consider the limitations of common neural network architectures for modeling distributions over finite sequences (strings), albeit focusing specifically on other attributes, such as their computational complexity for problems like equivalence or undecidability \cite{chen-etal-2018-recurrent,lin-etal-2021-limitations,lin2022on,lin2022thesis}. 
In contrast, this work provides a formal treatment of language models by enlarging the sample space to $\kleene{\alphabet} \cup \alphabet^\infty$, although to ensure tightness, $\alphabet^\infty$ must receive probability 0.  Such definitions are not uncommon in probability theory. For example, while the Wiener process (i.e., the standard Brownian motion) is a distribution over \emph{all} functions, the definition ensures that the set of discontinuous functions is assigned probability 0 \cite[Ch. 7]{durrett_2019}.

\citet{meister-tacl2022} similarly address the notion of a language model as a distribution over infinite sequences by casting such models as stochastic processes. They use this framing in order to motivate decoding, without providing comprehensive measure-theoretic foundations of such distributions. 

\section{Details for Motivating \cref{ex:tight-2-gram}} \label{sec:tight-2-gram-proof}

Here, we lay out the steps to calculate $\P(\alphabet^*)$ from \cref{fig:tight-2-gram-example}:
\begin{subequations}
\begin{align}
    \P(\alphabet^*)
    &= \sum_{n=0}^\infty \left(\P(\seq{a}^{n+1})+\sum_{m=0}^\infty \P(\seq{a}^{n+1}\seq{b}^{m+1}) \right) \\
    &= \sum_{n=0}^\infty \left(1\cdot (0.7)^n\cdot 0.1 + \sum_{m=0}^\infty 1\cdot (0.7)^n \cdot 0.2 \cdot (0.9)^m \cdot 0.1 \right) \\
    &= \sum_{n=0}^\infty \;(0.7)^n\cdot \left(0.1 + 0.2 \cdot \left( \sum_{m=0}^\infty (0.9)^m \right) \cdot 0.1 \right) \\
    &= \sum_{n=0}^\infty \;(0.7)^n\cdot \left(0.1 + 0.2 \cdot \frac{1}{1-0.9} \cdot 0.1 \right) \\
    &= \sum_{n=0}^\infty \;(0.7)^n\cdot 0.3 = \frac{0.3}{1-0.7} = 1
\end{align}
\end{subequations}

\section{Measure Theory Details}

\subsection{Proofs and Details in \cref{sec:pre-measure}} \label{sec:pre-measure-appendix}

\subsubsection{Details of \cref{thm:ac-ch-impossible}} \label{sec:ac-ch-impossible-proof}

\ACCHImpossible*

This theorem is an \defn{impossibility of measure} theorem. 
Generally speaking, the existence of a non-measurable set implies some form of impossibility of measure.
The most famous example of non-measurable sets are Vitali sets, which exist given the Axiom of Choice.  Vitali's 1905 construction is typically described in introductory texts on measure theory \cite{royden1988real, billingsley1986, axler2020}. 
The existence of Vitali sets shows that it is impossible to define a probability measure that satisfies translational invariance on the measurable space $\big([0,1), \mathcal{P}([0,1))\big)$.  Thus, to achieve translational invariance, Lebesgue measure uses a $\sigma$-algebra smaller than $\mathcal{P}([0,1))$, in which the Vitali sets are among the non-measurable sets.  However, the translational invariance desideratum is not relevant to our space of discrete sequences. 
A theorem by \citet{ulam1930} reveals a deeper reason that some sets must be non-measurable. We shall state the theorem below as given in \citet{Oxtoby1980} and omit its proof. We refer interested readers to Chapter 5 in \citet{Oxtoby1980}, which contains an accessible proof and an excellent discussion of the theorem including its generalizations and historical context.\looseness=-1
\begin{theorem}[\citealp{ulam1930}] \label{thm:ulam-1930}
Assuming the Axiom of Choice, a finite measure $\mu$ defined for all subsets of a set $X$ of cardinality $\aleph_1$ vanishes identically [that is, equals zero for all subsets] if it is equal to zero for every one-element subset.
\end{theorem}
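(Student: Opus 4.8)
The plan is to establish Ulam's theorem through the classical \emph{Ulam matrix} construction, a purely combinatorial gadget that converts the hypothesis $|X| = \aleph_1$ into a contradiction with countable additivity. Using the Axiom of Choice, I would first well-order $X$ and identify it with the first uncountable ordinal $\omega_1$, so that $X = \{\alpha : \alpha < \omega_1\}$ and every initial segment $\{\beta : \beta < \alpha\}$ with $\alpha < \omega_1$ is countable. Two easy reductions come next. Since $\mu$ is countably additive and vanishes on singletons, every countable $C \subseteq X$ satisfies $\mu(C) = \sum_{x \in C} \mu(\{x\}) = 0$, so countable sets are null. And since $\mu$ is finite and monotone, it suffices to prove $\mu(X) = 0$, because then $\mu(A) \le \mu(X) = 0$ for every $A \subseteq X$.

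For the construction, for each $\alpha < \omega_1$ I would fix (by AC) an injection $h_\alpha : \{\beta : \beta < \alpha\} \to \omega$, which exists precisely because the initial segment is countable, and then define, for each $\beta < \omega_1$ and each $n \in \omega$,
\begin{equation}
  A_{\beta, n} \defeq \{\alpha : \beta < \alpha < \omega_1 \text{ and } h_\alpha(\beta) = n\}.
\end{equation}
I would then verify the two defining properties of the matrix. \emph{(Disjoint columns.)} For a fixed $n$, if $\alpha \in A_{\beta,n} \cap A_{\beta',n}$ then $h_\alpha(\beta) = n = h_\alpha(\beta')$ with $\beta, \beta' < \alpha$, so injectivity of $h_\alpha$ forces $\beta = \beta'$; hence $\{A_{\beta,n} : \beta < \omega_1\}$ is pairwise disjoint. \emph{(Co-countable rows.)} For a fixed $\beta$, every $\alpha > \beta$ has $h_\alpha(\beta)$ defined and equal to a unique $n$, so $\alpha$ lies in exactly one $A_{\beta,n}$; thus $\bigcup_{n} A_{\beta,n} = \{\alpha : \alpha > \beta\}$, whose complement $\{\alpha : \alpha \le \beta\}$ is countable and therefore null.

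With the matrix in hand, I would argue by contradiction: suppose $\mu(X) = c > 0$. Fixing any row $\beta$, the co-countable row gives $\mu(\bigcup_n A_{\beta,n}) = c$, and since these row sets are also disjoint in $n$ we obtain $\sum_n \mu(A_{\beta,n}) = c > 0$, so some $n(\beta)$ satisfies $\mu(A_{\beta, n(\beta)}) > 0$. The map $\beta \mapsto n(\beta)$ sends the uncountable set $\omega_1$ into the countable set $\omega$, so some value $n^\ast$ has uncountable preimage; since $\{\beta : n(\beta) = n^\ast\} = \bigcup_{k \ge 1}\{\beta : \mu(A_{\beta, n^\ast}) > 1/k\}$ cannot be a countable union of finite sets, for some $k$ there are infinitely many $\beta$ with $\mu(A_{\beta, n^\ast}) > 1/k$. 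Choosing $N$ of them with $N > kc$ and using that the column $\{A_{\beta, n^\ast}\}$ is pairwise disjoint yields
\begin{equation}
  c = \mu(X) \ge \sum_{i=1}^{N} \mu(A_{\beta_i, n^\ast}) > \frac{N}{k} > c,
\end{equation}
a contradiction. Hence $\mu(X) = 0$, which by monotonicity gives $\mu \equiv 0$.

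The main obstacle, and the only genuinely nontrivial step, is the Ulam matrix construction together with the verification of its two combinatorial properties; everything afterward is a routine pigeonhole-and-additivity argument. I would take care to flag exactly where the Axiom of Choice enters—both in well-ordering $X$ as $\omega_1$ and in simultaneously selecting the injections $h_\alpha$—since the theorem is stated under AC, and to emphasize that the argument exploits only finite additivity applied to arbitrarily many disjoint column entries, so it in fact works for any set of cardinality $\le \aleph_1$.
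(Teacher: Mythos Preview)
Your proof is correct and is precisely the classical Ulam matrix argument. The paper, however, does not prove this theorem at all: it merely states the result as given in Oxtoby (1980) and explicitly omits the proof, referring the reader to Chapter~5 of that text. So there is no ``paper's own proof'' to compare against; you have supplied the standard proof (which is essentially the one Oxtoby gives) where the paper chose to cite rather than prove.

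One minor remark: your closing comment that the argument ``exploits only finite additivity'' is slightly misleading. You do use countable additivity twice---once to conclude that countable sets are null, and once to sum the row $\sum_n \mu(A_{\beta,n}) = c$. The column step, where you sum finitely many disjoint $A_{\beta_i,n^\ast}$, indeed needs only finite additivity, but the overall argument requires $\sigma$-additivity (as it must, since a finitely additive diffuse probability measure on $\mathcal{P}(\omega_1)$ can exist).
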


In the statement above, $\aleph_1$ denotes the cardinality of the first uncountable ordinal. We can see that \cref{thm:ac-ch-impossible} is a straightforward consequence of \cref{thm:ulam-1930}.

\begin{proof}[Proof of \cref{thm:ac-ch-impossible}]
Recall that $\mathrm{card}(\{\seq{H},\seq{T}\}^\infty)=2^{\aleph_0}$. Assuming the Continuum Hypothesis, $2^{\aleph_0}=\aleph_1$, and hence by \cref{thm:ulam-1930}, such a measure is uniformly 0, and hence cannot be a probability measure.
\end{proof}

\subsubsection{Other Proofs in \cref{sec:pre-measure}} \label{sec:pre-measure-proof}

\eosCylinderIsAlgebra*
\begin{proof}
  First, $\Omega \in \overcalC$ since it is a cylinder set of rank 0 or indeed of any rank $k$: $\Omega=\overC(\alphabeteos^k) \in \overcalC_k \subset \overcalC$.  Second, $\overcalC$ is closed under complements: given a cylinder set of rank $k$, that is, $\overC(H)$ where $H\subseteq \alphabeteos^k$, its complement $\big(\overC(H)\big)^\mathsf{c}=\overC\left(\alphabeteos^k\setminus H\right)$ is also a cylinder set of rank $k$.  Finally, $\overcalC$ is closed under union: the union of cylinder sets of ranks $k_1 \leq k_2$ is a cylinder set of rank $k_2$, since both can be regarded as cylinder sets of rank $k_2$.
  Hence, $\overcalC$ is an algebra over $\Omega$.
\end{proof}

\begin{proposition} \label{prop:pre-measure-well-defined}
$\P_0$ as defined in \cref{eq:pre-measure} is a well-defined function.
\end{proposition}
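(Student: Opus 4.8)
The plan is to show that the number $\sum_{\vx\in H}\pASM(\vx)$ depends only on the set $\overC(H)$ and not on the particular rank $k$ and generating set $H\subseteq\alphabeteos^k$ used to describe it. The starting point is the elementary \emph{one-step identity}: for any $\vx\in\alphabeteos^k$, by \cref{eq:cylinder-prod} and the fact that $\pASM(\cdot\mid\vx)$ is a probability distribution over $\alphabeteos$ (\cref{def:autoreg-model}),
\begin{equation*}
\textstyle\sum_{a\in\alphabeteos}\pASM(\vx a)=\pASM(\vx)\sum_{a\in\alphabeteos}\pASM(a\mid\vx)=\pASM(\vx).
\end{equation*}
Summing this over $\vx\in H$ and iterating (induction on the number of appended symbols) shows that, for any $k'\ge k$, if we set $H^{\uparrow k'}\defeq\{\vx'\in\alphabeteos^{k'}:\vx'_{\le k}\in H\}$, then $\sum_{\vx'\in H^{\uparrow k'}}\pASM(\vx')=\sum_{\vx\in H}\pASM(\vx)$. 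Moreover $\overC(H^{\uparrow k'})=\overC(H)$, since both consist of exactly the infinite strings whose $k$-prefix lies in $H$.

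Next I would establish injectivity of $H\mapsto\overC(H)$ \emph{at a fixed rank}: if $H_1,H_2\subseteq\alphabeteos^{k}$ and $\overC(H_1)=\overC(H_2)$, then $H_1=H_2$. This follows because the thin cylinders $\{\overC(\{\vx\})\}_{\vx\in\alphabeteos^k}$ are nonempty (as $\alphabeteos^\infty\neq\emptyset$) and pairwise disjoint, and $\overC(H)=\bigsqcup_{\vx\in H}\overC(\{\vx\})$, so $H$ is recoverable from $\overC(H)$.

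Finally I would combine the two observations. Suppose the same cylinder set is presented as $\overC(H_1)=\overC(H_2)$ with $H_1\subseteq\alphabeteos^{k_1}$ and $H_2\subseteq\alphabeteos^{k_2}$; without loss of generality $k_1\le k_2$. Then $\overC(H_1^{\uparrow k_2})=\overC(H_1)=\overC(H_2)$ with both $H_1^{\uparrow k_2}$ and $H_2$ subsets of $\alphabeteos^{k_2}$, so $H_1^{\uparrow k_2}=H_2$ by the rank-fixed injectivity, and hence $\sum_{\vx\in H_1}\pASM(\vx)=\sum_{\vx'\in H_1^{\uparrow k_2}}\pASM(\vx')=\sum_{\vx\in H_2}\pASM(\vx)$. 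Thus $\P_0$ from \cref{eq:pre-measure} assigns a single value to each element of $\overcalC$ and is well-defined. I do not anticipate a genuine obstacle here; the only point requiring care is the precise characterization of when two cylinder sets of different ranks coincide, which the extension operation $(\cdot)^{\uparrow k'}$ handles cleanly.
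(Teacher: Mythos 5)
Your proposal is correct and follows essentially the same route as the paper's proof: both reduce the two presentations to a common rank by showing $H_2$ must equal the extension of $H_1$ to rank $k_2$ (your $H_1^{\uparrow k_2}$), and both use the local normalization of $\pASM(\cdot\mid\vx)$ to show the sum is invariant under that extension. Your explicit injectivity-at-fixed-rank step (via disjointness and nonemptiness of thin cylinders) is only implicit in the paper's phrase ``$H_2$ consists of all length-$k_2$ prefixes of strings in $\overC(H_2)$,'' so your write-up is if anything slightly more careful.
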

\begin{proof}
  Suppose a cylinder set arises in two ways, $\overC(H_1)=\overC(H_2)$, where
  $H_1\subseteq\alphabeteos^{k_1}$ and $H_2\subseteq\alphabeteos^{k_2}$.
  We must show $\sum_{\bm{x}\in H_1} \pASM(\bm{x}) = \sum_{\bm{x'}\in H_2} \pASM(\bm{x'})$.
  Without loss of generality, assume that $k_1\leq k_2$.
  The definition of $\overC(H_2)$ (\cref{def:cylinder-set}) implies that $H_2$ consists of all length-$k_2$ prefixes of strings in $\overC(H_2)$.  But $\overC(H_2)=\overC(H_1)$, so the definition of $\overC(H_1)$ (\cref{def:cylinder-set}) implies that its length-$k_2$ prefixes are exactly the strings of the form $\vx\yy$ where $\vx\in H_1, \yy\in\alphabeteos^{k_2-k_1}$.  Hence we can write $H_2$ in terms of $H_1$ as $H_2=\{\vx\yy: \vx\in H_1, \yy\in\alphabeteos^{k_2-k_1}\}$.
Thus
\begin{align}
  \sum_{\bm{x'}\in H_2} \pASM(\bm{x'}) &= \sum_{\bm{x}\in H_1} \sum_{\bm{y} \in \alphabeteos^{k_2-k_1}} \pASM(\bm{x}\bm{y}) = \sum_{\bm{x}\in H_1} \pASM(\bm{x})
\end{align}
where the last equality is true because $\pASM$ is defined by the locally normalized product \labelcref{eq:cylinder-prod}.
\end{proof}

\eosCylinderPremeasure*
For the proof of \cref{lem:cylinder-premeasure}, we will mostly follow the proof of Thm 2.3 in \citet{billingsley1986}, with the exception of invoking the Tychonoff theorem directly. This proof depends on the following lemma, which is Example 2.10 in \citet{billingsley1986}. We repeat the statement and proof here for the reader's convenience.

\begin{lemma} \label{lem:pre-measure-continuity}
Let $\P_0$ be a finitely additive probability pre-measure over $\overcalC$ such that, given a decreasing sequence of sets $A_1\supset A_2\supset \dotsm$ in $\overcalC$ where $\bigcap_{n=1}^\infty A_n=\emptyset$, $\lim_{n\to\infty} \P_0(A_n)=0$. Then, $\P_0$ is also countably additive over $\overcalC$.
\end{lemma}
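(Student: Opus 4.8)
The plan is to show that finite additivity plus the stated continuity-at-$\emptyset$ condition together imply countable additivity. This is a standard measure-theoretic equivalence, so I would proceed along the classical lines. Suppose $E_1, E_2, \dots$ is a sequence of pairwise disjoint sets in $\overcalC$ whose union $E \defeq \bigcup_{n=1}^\infty E_n$ also lies in $\overcalC$ (this last hypothesis is essential, since $\overcalC$ is merely an algebra and need not be closed under countable unions). I want to prove $\P_0(E) = \sum_{n=1}^\infty \P_0(E_n)$.

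First I would record the finite case: since $\overcalC$ is an algebra, each partial union $\bigcup_{n=1}^N E_n$ is in $\overcalC$, and finite additivity gives $\P_0\big(\bigcup_{n=1}^N E_n\big) = \sum_{n=1}^N \P_0(E_n)$. Next, define the ``tails'' $A_N \defeq E \setminus \bigcup_{n=1}^N E_n = \bigcup_{n=N+1}^\infty E_n$. Because $E \in \overcalC$ and $\bigcup_{n=1}^N E_n \in \overcalC$, we have $A_N \in \overcalC$; moreover $A_1 \supset A_2 \supset \cdots$ and, by disjointness, $\bigcap_{N=1}^\infty A_N = \emptyset$. By the hypothesis of the lemma, $\lim_{N\to\infty} \P_0(A_N) = 0$. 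On the other hand, finite additivity applied to the disjoint decomposition $E = \big(\bigcup_{n=1}^N E_n\big) \cup A_N$ yields
\begin{equation}
\P_0(E) = \sum_{n=1}^N \P_0(E_n) + \P_0(A_N).
\end{equation}
Letting $N \to \infty$ and using $\P_0(A_N) \to 0$ gives $\P_0(E) = \sum_{n=1}^\infty \P_0(E_n)$, which is exactly countable additivity.

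I do not expect any serious obstacle here; the only subtlety to be careful about is making sure every set invoked actually belongs to the algebra $\overcalC$ (so that $\P_0$ is defined on it) — in particular that $A_N \in \overcalC$, which follows from closure of an algebra under finite unions and complements together with the standing assumption $E \in \overcalC$. The genuinely hard analytic content — verifying the continuity-at-$\emptyset$ hypothesis itself for the cylinder pre-measure, which is where Tychonoff's theorem and the compactness of $\alphabeteos^\infty$ enter — is \emph{not} part of this lemma; it is precisely what the surrounding proof of \cref{lem:cylinder-premeasure} must supply in order to apply this lemma.
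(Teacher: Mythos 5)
Your proof is correct and is essentially identical to the paper's: both decompose $E$ into the finite partial union plus the tail $A_N = \bigcup_{n>N} E_n$, apply finite additivity, and invoke the continuity-at-$\emptyset$ hypothesis to kill the tail term. Your explicit check that $A_N \in \overcalC$ is a small point of added care that the paper leaves implicit.
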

\begin{proof}
Let $\{A_n\}$ be a sequence of disjoint sets in $\overcalC$ such that $A=\bigcup_n A_n \in \overcalC$. Then, defining $B_n=\bigcup_{m>n} A_m$, we see that $B_1\supset B_2\supset \dotsm$ and $\bigcap_n B_n=\emptyset$. Notice that
\begin{equation}
    A=A_1\cup B_1=A_1\cup A_2\cup B_2 = \dotsm =A_1\cup\dotsm\cup A_n \cup B_n
\end{equation}
for any $n$ and hence by finite additivity of $\P_0$
\begin{equation}
    \P_0(A)=\P_0(A_1)+\dotsm+\P_0(A_n)+\P_0(B_n)
\end{equation}
or equivalently
\begin{equation}\label{eq:finite-countable-add}
    \P_0(A_1)+\dotsm+\P_0(A_n) = \P_0(A)-\P_0(B_n).
\end{equation}
Since, $B_n\downarrow\emptyset$ implies that $\P_0(B_n)\downarrow0$ by assumption, taking the limits on both sides of \cref{eq:finite-countable-add} yields
\begin{equation}
    \sum_{n} \P_0(A_n)=\lim_{n\to\infty} \sum_{i\leq n}\P_0(A_i)
    =\P_0(A)-\lim_{n\to\infty}\P_0(B_n) =\P_0(A)
\end{equation}
which shows countable additivity.
\end{proof}
We also recall the Tychonoff theorem.\footnote{See \S37 in \citet{munkres2000} for a detailed and well-written treatise.}
\begin{theorem}[Tychonoff] \label{thm:tychonoff}
Let $\{X_\alpha\}_{\alpha\in J}$ be an indexed family of compact topologies. Then, their product topology $\prod_{\alpha\in J} X_\alpha$ is also compact.
\end{theorem}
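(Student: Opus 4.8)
The plan is to prove this classical fact by the standard argument via maximal families with the finite intersection property, following, e.g., \S37 of \citet{munkres2000}. Recall that a space $Z$ is compact if and only if every family $\mathcal{A}$ of subsets of $Z$ with the finite intersection property (FIP) satisfies $\bigcap_{A\in\mathcal{A}}\overline{A}\neq\emptyset$. So, writing $X=\prod_{\alpha\in J}X_\alpha$ and letting $\pi_\alpha\colon X\to X_\alpha$ be the canonical projections, it suffices to start from an arbitrary family $\mathcal{A}$ of subsets of $X$ with the FIP and exhibit a point lying in $\overline{A}$ for every $A\in\mathcal{A}$.

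First I would invoke Zorn's lemma to enlarge $\mathcal{A}$ to a family $\mathcal{D}\supseteq\mathcal{A}$ that is maximal among families of subsets of $X$ with the FIP; the union of a chain of FIP families again has the FIP, so Zorn applies. Next I would record the two combinatorial consequences of maximality: (i) any finite intersection of members of $\mathcal{D}$ is again a member of $\mathcal{D}$, and (ii) any subset of $X$ that meets every member of $\mathcal{D}$ is itself a member of $\mathcal{D}$ --- both because adjoining such a set to $\mathcal{D}$ preserves the FIP, so maximality forces it to already belong to $\mathcal{D}$. Then, for each coordinate $\alpha$, the family $\{\pi_\alpha(D):D\in\mathcal{D}\}$ of subsets of $X_\alpha$ has the FIP, so compactness of $X_\alpha$ gives a point $x_\alpha\in\bigcap_{D\in\mathcal{D}}\overline{\pi_\alpha(D)}$; using the Axiom of Choice, assemble these into $x=(x_\alpha)_{\alpha\in J}\in X$.

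Finally I would verify that $x\in\overline{D}$ for every $D\in\mathcal{D}$, which (since $\mathcal{A}\subseteq\mathcal{D}$) completes the proof. It is enough to show that every basic open neighborhood of $x$ meets every $D\in\mathcal{D}$. A subbasic neighborhood of $x$ has the form $\pi_\alpha^{-1}(U_\alpha)$ with $U_\alpha$ open in $X_\alpha$ and $x_\alpha\in U_\alpha$; since $x_\alpha\in\overline{\pi_\alpha(D)}$, the set $U_\alpha$ meets $\pi_\alpha(D)$, hence $\pi_\alpha^{-1}(U_\alpha)$ meets $D$ for every $D\in\mathcal{D}$, so property (ii) gives $\pi_\alpha^{-1}(U_\alpha)\in\mathcal{D}$. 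A basic neighborhood of $x$ is a finite intersection of such subbasic sets, which lies in $\mathcal{D}$ by property (i), and therefore meets every member of $\mathcal{D}$ by the FIP.

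The bookkeeping is light; the genuine content is (a) realizing that one must pass to a maximal FIP family rather than work with $\mathcal{A}$ directly, since this is exactly what makes the subbasic-neighborhood argument close, and (b) the unavoidable appeals to Zorn's lemma and the Axiom of Choice --- indeed Tychonoff's theorem is equivalent to AC, so no choice-free proof exists. An alternative, arguably cleaner, route is via ultrafilters: a space is compact iff every ultrafilter on it converges; given an ultrafilter on $X$, its $\pi_\alpha$-pushforwards are ultrafilters on the compact $X_\alpha$ and hence converge, and the tuple of limits is a limit of the original ultrafilter in the product topology. Either way, the one step deserving care is checking convergence (equivalently, the closure condition) at the level of the product's \emph{subbasis} rather than its full topology. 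Since the paper only needs this as a black box and cites \citet{munkres2000}, I would present the above as a sketch rather than a full proof.
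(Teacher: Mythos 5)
Your sketch is correct and is precisely the standard maximal-FIP-family argument from \S37 of \citet{munkres2000}, which is exactly the reference the paper defers to (the paper states this theorem as a black box and gives no proof of its own). Nothing to add; the one subtlety you flag --- that the closure/convergence check need only be done on the product's subbasis, which is what makes maximality of $\mathcal{D}$ do its work --- is indeed the crux of that proof.
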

We can now give the proof for \cref{lem:cylinder-premeasure}.
\begin{proof}[Proof of \cref{lem:cylinder-premeasure}]
We first show that $\P_0$ is finitely additive over $\overcalC$. Let $C(H_1)$ and $C(H_2)$ be two disjoint cylinder sets. By \cref{prop:pre-measure-well-defined}, we can assume they are of the same rank without loss of generality. Then, 
\begin{subequations}
\begin{align}
    C(H_1)\cup C(H_2)
    &= \bigcup_{\vx\in H_1} \{\vx\bmomega:\bmomega\in \alphabeteos^\infty\}
    \cup \bigcup_{\vx\in H_2} \{\vx\bmomega:\bmomega\in \alphabeteos^\infty\} \\
    &= \bigcup_{\vx\in H_1 \cup H_2} \{\vx\bmomega:\bmomega\in \alphabeteos^\infty\} \qquad\justification{$H_1$ and $H_2$ equal rank and disjoint} \\
    &=C(H_1\cup H_2)
\end{align}
\end{subequations}
which leads to
\begin{subequations}
\begin{align}
    \P_0(C(H_1)\cup C(H_2))
    &= \P_0(C(H_1\cup H_2)) = \sum_{\vx\in H_1\cup H_2} \pASM(\vx)
    = \P_0(C(H_1))+\P_0(C(H_2)).
\end{align}
\end{subequations}
Hence, $\P_0$ is finitely additive.

Now, equip $\alphabeteos$ with the discrete topology. Since $\alphabeteos$ is finite, it is compact under the discrete topology and so is $\alphabeteos^\infty$ by \cref{thm:tychonoff}. Then, by properties of the product topology over discrete finite spaces, all cylinder sets in $\alphabeteos^\infty$ are compact. To apply \cref{lem:pre-measure-continuity}, let $C_1\supset C_2\supset \dotsm$ be a decreasing sequence of cylinder sets with empty intersection. Suppose to the contrary that $\P_0\left(\bigcap_n C_n\right)>0$. This would imply that all $C_n$ are nonempty (any of these being empty would result in a measure 0). However, by Cantor's intersection theorem\footnote{Cantor's intersection theorem states that a decreasing sequence of nonempty compact sets have a nonempty intersection. A version of this result in introductory real analysis is the Nested Interval Theorem.}, $\bigcap_n C_n$ is nonempty, contradicting the assumption. Hence, $\P_0\left(\bigcap_n C_n\right)=0$, and by \cref{lem:pre-measure-continuity}, $\P_0$ is countably additive.
\end{proof}

\subsection{Details in \cref{sec:extension}} 

\subsubsection{Carath\'eodory's Extension Theorem}
\label{sec:extension-proof}
\caratheodory*
\begin{proof}[Proof Sketch]
First, construct an outer measure by approximation with countable coverings. Then, show that the collection of sets that is measurable with respect to this outer measure is a $\sigma$-algebra $\calF$ that contains $\mathcal{A}$. Finally, restricting the outer measure to this $\sigma$-algebra, one is then left with a probability space. 
To show minimality, one can show that $\calF$ is contained in any $\sigma$-algebra that contains $\calA$. Uniqueness is given by applying Dynkin's $\pi$-$\lambda$ theorem (Theorem 3.2 in \citealp{billingsley1986}).

Great care must be taken in each step involved in the outline above. To address these is well beyond the scope of this paper and we refer reader to the many excellent texts with a proof of this theorem, such as Chapter 12 in \citet{royden1988real} and Chapter 11 in \citet{billingsley1986}.
\end{proof}

\subsubsection{The Space of Non-measurable Sets} \label{sec:nonmeasurable}

Non-measurable sets are, in general, difficult to find. Even when we can exhibit such sets, they tend to be very abstract and counter-intuitive. Vitali's and Bernstein's sets are two prominent examples for the Lebesgue measure. \citet{blackwell-1996} offers a construction of a non-measurable set in the cylinder $\sigma$-algebra.\footnote{The following assumes basic familiarity with the theory of ordinal numbers. Readers without such background may skip to the last paragraph for conclusion.}

As another approach to understand this better, we can consider how our collection $\sigma(\overcalC)$ of all measurable sets, i.e., our $\sigma$-algebra, is constructed from our algebra $\overcalC$ of cylinder sets (as opposed to simply knowing from Carath\'eodory's Extension Theorem that it exists). 
Concretely, as in \S1.6 in \citet{folland-1999}, we can intuitively consider the following process to build from the collection of cylinder sets $\overcalC$, which is a countable collection, all the way up to its generated $\sigma$-algebra, whose cardinality is unknown just yet:
\begin{itemize}
    \item Let $\overcalC_0=\overcalC$,
    \item Let $\overcalC_1$ be the set that includes all countable unions of sets in $\overcalC_0$ or the complements of such,
    \item Repeat this process to build $\overcalC_n$ for every $n\in\N$.
\end{itemize}
One might then take the union $\bigcup_{n\in\N}\overcalC_n$ of this increasing sequence of collections of sets, and ask if it is the same as $\sigma(\overcalC)$. In general, the answer is no (as one might expect if one is familiar with the Borel Hierarchy). However, we can obtain $\sigma(\overcalC)$ if we perform this construction for every countable ordinal $\alpha$. Abbreviating the operation in the second step above as $\delta$, i.e., $\overcalC_1=\delta(\overcalC_0)$, and letting $\omega_1$ be the collection of all countable ordinals,\footnote{$\omega_1$ is, in fact, the same as the first uncountable ordinal. Its existence (and hence the existence of the collection of all countable ordinals) can be guaranteed by exhibiting a well-ordered uncountable set using the Axiom of Choice.} we can define
\begin{align}
    \overcalC_\alpha = \begin{cases}
        \delta(\overcalC_\beta) & \text{if } \alpha=\beta+1 \text{ for some } \beta\in\omega_1, \\
        \bigcup_{\beta\in\omega_1:\beta<\alpha} \overcalC_\beta & \text{otherwise}.
    \end{cases}
\end{align}
This will give us the desired set as follows:
\begin{proposition}[Proposition 1.23, \citealp{folland-1999}] \label{prop:folland-123}
$\sigma(\overcalC)=\bigcup_{\alpha\in\omega_1} \overcalC_\alpha$.
\end{proposition}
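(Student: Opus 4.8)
\textbf{Proof proposal for \Cref{prop:folland-123}.}

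The plan is to prove the two inclusions $\bigcup_{\alpha\in\omega_1}\overcalC_\alpha \subseteq \sigma(\overcalC)$ and $\sigma(\overcalC)\subseteq\bigcup_{\alpha\in\omega_1}\overcalC_\alpha$ separately. For the first inclusion, I would argue by transfinite induction on $\alpha\in\omega_1$ that $\overcalC_\alpha\subseteq\sigma(\overcalC)$. The base case $\overcalC_0=\overcalC\subseteq\sigma(\overcalC)$ is immediate from \Cref{thm:caratheodory}. For a successor ordinal $\alpha=\beta+1$, the inductive hypothesis gives $\overcalC_\beta\subseteq\sigma(\overcalC)$; since $\sigma(\overcalC)$ is closed under countable unions and complements, it is closed under the operation $\delta$, so $\overcalC_\alpha=\delta(\overcalC_\beta)\subseteq\sigma(\overcalC)$. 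For a limit ordinal, $\overcalC_\alpha$ is a union of $\overcalC_\beta$ over $\beta<\alpha$, each of which lies in $\sigma(\overcalC)$ by hypothesis. Taking the union over all $\alpha\in\omega_1$ completes this direction.

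For the reverse inclusion, it suffices to show that $\mathcal{B}\defeq\bigcup_{\alpha\in\omega_1}\overcalC_\alpha$ is itself a $\sigma$-algebra containing $\overcalC$, since $\sigma(\overcalC)$ is by definition the smallest such. That $\overcalC\subseteq\mathcal{B}$ is clear since $\overcalC=\overcalC_0$. For closure under complements: if $E\in\mathcal{B}$, then $E\in\overcalC_\alpha$ for some $\alpha$, and $E^\c$ is obtained by one application of $\delta$ (a complement is a special case of ``a countable union or the complement of such''), so $E^\c\in\overcalC_{\alpha+1}\subseteq\mathcal{B}$. The crux is closure under countable unions: given $E_1,E_2,\dots\in\mathcal{B}$, each $E_n$ lies in some $\overcalC_{\alpha_n}$ with $\alpha_n\in\omega_1$. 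The key fact I would invoke is that $\omega_1$ is \emph{regular}: a countable supremum of countable ordinals is again a countable ordinal, so $\alpha\defeq\sup_n\alpha_n\in\omega_1$. Since the sequence $\overcalC_\beta$ is increasing in $\beta$ (which itself needs a quick transfinite induction, using that $\delta$ is extensive, i.e.\ $\overcalC_\beta\subseteq\delta(\overcalC_\beta)$, because any set is a trivial one-term union of itself), we get $E_n\in\overcalC_\alpha$ for all $n$, hence $\bigcup_n E_n\in\delta(\overcalC_\alpha)=\overcalC_{\alpha+1}\subseteq\mathcal{B}$. This shows $\mathcal{B}$ is a $\sigma$-algebra, whence $\sigma(\overcalC)\subseteq\mathcal{B}$.

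The main obstacle is the countable-union step in the second inclusion, and specifically the appeal to the regularity (uncountable cofinality) of $\omega_1$: one must be certain that the supremum of countably many countable ordinals does not reach $\omega_1$ itself. This is a standard set-theoretic fact (provable from the Axiom of Choice, which is already in force in this section for the existence of $\omega_1$), but it is the one nontrivial ingredient; the rest is bookkeeping with transfinite induction and the elementary observation that $\delta$ is both extensive and monotone. I would state the regularity fact explicitly as a lemma or cite it, then assemble the argument as above. Since the paper elsewhere simply cites \citet{folland-1999} for analogous measure-theoretic facts, it would also be reasonable to give this as a proof sketch and refer the reader to \S1.6 of \citet{folland-1999} for the full details.
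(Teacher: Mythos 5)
Your proof is correct and is essentially the standard argument for this fact; the paper itself gives no proof of \Cref{prop:folland-123}, deferring entirely to \citet{folland-1999}, and your transfinite-induction-plus-regularity-of-$\omega_1$ argument is precisely the one given there. You have also correctly isolated the only nontrivial ingredient, namely that a countable supremum of countable ordinals is countable (which does require a choice principle), so nothing is missing.
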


Next, we recall the following basic fact from cardinality theory.
\begin{proposition}[Proposition 0.14, \citealp{folland-1999}] \label{prop:folland-014}
If $\mathrm{card}(A)\leq 2^{\aleph_0}$ and $\mathrm{card}(X_\alpha)\leq2^{\aleph_0}$ for all $\alpha\in A$, then $\mathrm{card}\left(\bigcup_{\alpha\in A}X_\alpha\right)\leq2^{\aleph_0}$.
\end{proposition}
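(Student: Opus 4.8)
The plan is to exhibit an injection from $\bigcup_{\alpha\in A} X_\alpha$ into a product set whose cardinality is easily bounded by $2^{\aleph_0}$, and then to finish with the cardinal-arithmetic identity $2^{\aleph_0}\cdot 2^{\aleph_0}=2^{\aleph_0}$. If $A=\emptyset$ then the union is empty and there is nothing to prove, so assume $A\neq\emptyset$.

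First I would fix a ``coordinate space'' of cardinality exactly $2^{\aleph_0}$, say $\{0,1\}^\N$, together with an injection $g:A\hookrightarrow\{0,1\}^\N$ witnessing $\mathrm{card}(A)\le 2^{\aleph_0}$. The hypothesis $\mathrm{card}(X_\alpha)\le 2^{\aleph_0}$ says that for each $\alpha\in A$ the set of injections $X_\alpha\hookrightarrow\{0,1\}^\N$ is nonempty; invoking the Axiom of Choice, fix one such injection $f_\alpha$ for every $\alpha\in A$ simultaneously. Next, for each $x\in\bigcup_{\alpha\in A}X_\alpha$ the index set $\{\alpha\in A: x\in X_\alpha\}$ is nonempty, so (again by Choice, or by fixing in advance a well-ordering of $A$) select a single index $\alpha(x)$ from it. Now define
\[
\Phi:\ \bigcup_{\alpha\in A}X_\alpha \longrightarrow A\times\{0,1\}^\N,\qquad \Phi(x)=\bigl(\alpha(x),\,f_{\alpha(x)}(x)\bigr).
\]
This map is injective: if $\Phi(x)=\Phi(x')$, then first $\alpha(x)=\alpha(x')=:\beta$, and then $f_\beta(x)=f_\beta(x')$ forces $x=x'$ since $f_\beta$ is injective.

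It remains to bound the target. Since $\mathrm{card}(A)\le 2^{\aleph_0}$ and $\mathrm{card}(\{0,1\}^\N)=2^{\aleph_0}$, we have $\mathrm{card}(A\times\{0,1\}^\N)=\mathrm{card}(A)\cdot\mathrm{card}(\{0,1\}^\N)\le 2^{\aleph_0}\cdot 2^{\aleph_0}=2^{\aleph_0+\aleph_0}=2^{\aleph_0}$, where the last two equalities use the law of exponents for cardinal arithmetic and $\aleph_0+\aleph_0=\aleph_0$ (equivalently, interleaving two binary sequences gives an explicit bijection $\{0,1\}^\N\times\{0,1\}^\N\to\{0,1\}^\N$). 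Composing with the injection $\Phi$ gives $\mathrm{card}\bigl(\bigcup_{\alpha\in A}X_\alpha\bigr)\le 2^{\aleph_0}$, which is the claim. The one point requiring a moment's care is the legitimacy of the two appeals to the Axiom of Choice---choosing the family $(f_\alpha)_{\alpha\in A}$ and choosing the selector $x\mapsto\alpha(x)$---but AC is already in force throughout this paper (it underlies, e.g., \cref{thm:ac-ch-impossible} and \cref{prop:folland-123}), so this is not a genuine obstacle; the rest is routine bookkeeping.
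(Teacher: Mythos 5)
Your proof is correct. Note that the paper does not actually prove this statement: it is quoted verbatim as Proposition 0.14 of \citet{folland-1999} and used as a black box, so there is no in-paper argument to compare against. Your argument is the standard one (and essentially the one in Folland): choose, for each $\alpha$, an injection $f_\alpha:X_\alpha\hookrightarrow\{0,1\}^{\mathbb{N}}$, choose a selector $x\mapsto\alpha(x)$, inject the union into $A\times\{0,1\}^{\mathbb{N}}$, and conclude with $2^{\aleph_0}\cdot 2^{\aleph_0}=2^{\aleph_0}$. All steps check out, including the two invocations of the Axiom of Choice, which is indeed assumed throughout the paper. One trivial remark: the injection $g:A\hookrightarrow\{0,1\}^{\mathbb{N}}$ you fix at the outset is never used, since you bound $\mathrm{card}(A\times\{0,1\}^{\mathbb{N}})$ directly from $\mathrm{card}(A)\le 2^{\aleph_0}$; it could be deleted, or alternatively used to land in $\{0,1\}^{\mathbb{N}}\times\{0,1\}^{\mathbb{N}}$ explicitly. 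Either way the proof stands.
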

Noting that $\mathrm{card}(\omega_1)\leq 2^{\aleph_0}$ and $\mathrm{card}(\overcalC)={\aleph_0}$, we can conclude that $\mathrm{card}(\sigma(\overcalC))\leq 2^{\aleph_0}$ from \cref{prop:folland-123} and \cref{prop:folland-014}. In other words, the cardinality of $\sigma(\overcalC)$ is at most that of the continuum, and since $\mathrm{card}\big(\mathcal{P}(\alphabeteos^\infty)\big)=2^{2^{\aleph_0}}(=\beth_2)$, $\sigma(\overcalC)$ is, in terms of cardinality, an almost negligible subset of $\mathcal{P}(\alphabeteos^\infty)$!  That is, most subsets in $\alphabeteos^\infty$ are non-measurable---though explicit examples have rarely been constructed \cite{blackwell-1996}.  \Cref{sec:rv-proof} below establishes that common subsets of $\alphabeteos^\infty$ that we work with are measurable.

\subsection{Proofs in \cref{sec:rv}} \label{sec:rv-proof}

\rvMeasurable*
\begin{proof}
  To show that $X$ is measurable, it suffices to show the measurability of preimages of a generating set\footnote{A set $G$ is said to be a generating set of a $\sigma$-algebra $\calF$ if $\calF$ is the smallest $\sigma$-algebra that contains $G$.} of the $\sigma$-algebra $\sigma(\calC)$ on $\alphabet^* \cup \alphabet^\infty$. Such a generating set is formed by the thin cylinders $C(\bm{x}) \defeq C(\{\bm{x}\})$ for $\bm{x} \in \alphabet^*$. (Recall that cylinders in $\alphabet^* \cup \alphabet^\infty$ are defined by \cref{eq:rv-cylinder}.)  Given $\bm{x}\in\alphabet^*$:
\begin{subequations}
\begin{align}
X^{-1}(C(\bm{x}))
=& X^{-1}(\{\bm{x}\bmomega: \bmomega\in\alphabet^*\cup \alphabet^\infty\}) \\
=& X^{-1}(\{\bm{x} \bmomega: \bmomega\in\alphabet^* \})  \cup X^{-1}(\{\bm{x} \bmomega: \bmomega\in\alphabet^\infty \}) \\
=& \left(\bigcup_{\bmomega\in \alphabet^*} \overline{C}(\bm{x}\bmomega\,\eos)\right)  \cup \left(\overline{C}(\bm{x})\cap \bigcap_{k=1}^\infty A^\mathsf{c}_k \right)
\end{align}
\end{subequations}
Note that the set $A_k$ above, defined by \cref{eq:term-at-k}, is a cylinder of $\alphabeteos^\infty$, representing the event of terminating by step $k$.
Then, from the derivation above, we can see that $X^{-1}(C(\vx))$ is formed by countable operations over measurable sets (cylinders) of $\alphabeteos^\infty$, and is hence measurable.
So $X$ is a measurable function.
\end{proof}

\begin{proposition} \label{prop:rv-singleton-measurable}
In measure space $(\alphabet^*\cup\alphabet^\infty, \sigma(\calC))$, $\{\vx\}$ is measurable for all $\vx\in\alphabet^*$.
\end{proposition}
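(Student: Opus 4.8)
The plan is to exhibit $\{\vx\}$ as a countable intersection (or a countable combination) of sets that are already known to be measurable in $\sigma(\calC)$, and to reduce everything to the thin cylinders $C(\vx\vy)$ for $\vy \in \alphabet^*$, which generate $\sigma(\calC)$ and are therefore measurable. The key observation is that $\{\vx\}$, as a subset of $\alphabet^* \cup \alphabet^\infty$, is the singleton consisting of the finite string $\vx$ only; an infinite string or a strictly longer finite string extending $\vx$ is \emph{not} in $\{\vx\}$. So I want to take the cylinder $C(\vx)$ — all elements of $\alphabet^* \cup \alphabet^\infty$ with prefix $\vx$ — and subtract off everything that properly extends $\vx$.

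Concretely, I would first write
\begin{equation}
\{\vx\} = C(\vx) \setminus \bigcup_{a \in \alphabet} C(\vx a),
\end{equation}
since a point of $C(\vx)$ either equals $\vx$ or has a first symbol after the prefix $\vx$, i.e.\ lies in some $C(\vx a)$ with $a \in \alphabet$; and conversely $\vx$ itself lies in none of the $C(\vx a)$ because it has no symbol in position $|\vx|+1$. Each $C(\vx a)$ is a thin cylinder and hence measurable; $\alphabet$ is finite, so the union $\bigcup_{a\in\alphabet} C(\vx a)$ is a finite (hence countable) union of measurable sets and is measurable. Then $\{\vx\} = C(\vx) \cap \big(\bigcup_{a\in\alphabet} C(\vx a)\big)^{\mathsf c}$ is an intersection of a measurable set with the complement of a measurable set, so it lies in the $\sigma$-algebra $\sigma(\calC)$ by closure under complement and finite intersection.

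The main thing to get right — and the only real obstacle — is the combinatorial identity for $\{\vx\}$: one has to argue carefully that no infinite string and no longer finite string survives the subtraction, and that $\vx$ itself does survive. This hinges on the precise definition of the rank-$k$ cylinders $C(H)$ in $\alphabet^* \cup \alphabet^\infty$ from \cref{eq:rv-cylinder}, namely that $C(\vx a) = \{\vx a\,\bmomega : \bmomega \in \alphabet^* \cup \alphabet^\infty\}$ includes the finite string $\vx a$ itself (take $\bmomega$ empty). Once that bookkeeping is pinned down, the measurability conclusion is immediate from the algebra/$\sigma$-algebra axioms, exactly as the singleton-in-$C(\vx)$ argument was handled for $\alphabeteos^\infty$ in the proof of \cref{prop:rv-singleton-measurable}'s companion (the measurability of $X$). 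No use of Carath\'eodory beyond knowing $\sigma(\calC)$ is a $\sigma$-algebra containing the cylinders is needed.
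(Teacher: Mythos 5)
Your proposal is correct and takes essentially the same route as the paper: both prove the identity $\{\vx\} = C(\vx)\setminus \bigcup_{a\in\alphabet} C(\vx a)$ and then conclude measurability from the closure of $\sigma(\calC)$ under finite unions, complements, and set differences of (thin) cylinders. The paper merely spells out the underlying disjoint decomposition $C(\vx)=\{\vx\}\cup\bigcup_{a\in\alphabet}C(\vx a)$ by splitting into finite and infinite continuations, which is exactly the bookkeeping you identify as the one point needing care.
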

\begin{proof}
We will show that $\{\vx\}=C(\vx)\setminus \bigcup_{a\in\alphabet} C(\vx a)$ and hence is measurable.
By definition in \cref{eq:rv-cylinder}, for any $\vx\in\alphabet^*$,
\begin{subequations}
\begin{align}
    C(\vx) &=\{\vx\bmomega: \bmomega\in \alphabet^*\cup\alphabet^\infty \} \\
    &=\{\vx\bmomega: \bmomega\in \alphabet^* \} \cup \{\vx\bmomega: \bmomega\in \alphabet^\infty \}
\end{align}
\end{subequations}
where
\begin{subequations}
\begin{align}
    \{\vx\bmomega: \bmomega\in \alphabet^* \} = \{\vx\} \cup \bigcup_{a\in\alphabet} \{\vx a\bmomega: \bmomega\in \alphabet^* \}
\end{align}
\end{subequations}
and
\begin{align}
    \{\vx\bmomega: \bmomega\in \alphabet^\infty \} = 
    \bigcup_{a\in\alphabet} \{\vx a\bmomega: \bmomega\in \alphabet^\infty \}.
\end{align}
So
\begin{subequations}
\begin{align}
    C(\vx) &= \{\vx\} \cup \bigcup_{a\in\alphabet} \bigg(
        \{\vx a\bmomega: \bmomega\in \alphabet^* \}
     \cup\{\vx a\bmomega: \bmomega\in \alphabet^\infty \}\bigg) \\
    &= \{\vx\} \cup \bigcup_{a\in\alphabet} C(\vx a)
\end{align}
\end{subequations}
where the union is disjoint.  This implies $\{\vx\}=C(\vx)\setminus \bigcup_{a\in\alphabet} C(\vx a)$ as desired. 
\end{proof}

\begin{proposition} \label{prop:rv-infinite-measurable}
In the measure space $(\alphabet^*\cup\alphabet^\infty, \sigma(\calC))$, $\alphabet^\infty$ is measurable.
\end{proposition}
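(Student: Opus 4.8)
The statement is a short corollary of \cref{prop:rv-singleton-measurable} together with the closure properties of a $\sigma$-algebra, so the proof will be almost entirely bookkeeping. The key observation is that $\alphabet^\infty$ is precisely the complement of $\alphabet^*$ inside the ambient set $\alphabet^*\cup\alphabet^\infty$, which plays the role of $\Omega$ in the measurable space $(\alphabet^*\cup\alphabet^\infty,\sigma(\calC))$. So it suffices to show that $\alphabet^*$ itself is measurable, and then invoke closure under complementation (condition 2 of \cref{def:sigma-algebra}).

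First I would recall that $\alphabet^*$ is a \emph{countable} set (as noted in \cref{sec:examples} and \cref{def:language-model}), so that $\alphabet^* = \bigcup_{\vx\in\alphabet^*}\{\vx\}$ exhibits $\alphabet^*$ as a countable union of singletons. By \cref{prop:rv-singleton-measurable}, each $\{\vx\}$ with $\vx\in\alphabet^*$ lies in $\sigma(\calC)$; hence by closure under countable unions (condition 3 of \cref{def:sigma-algebra}), $\alphabet^*\in\sigma(\calC)$. Then $\alphabet^\infty=(\alphabet^*\cup\alphabet^\infty)\setminus\alphabet^*\in\sigma(\calC)$, completing the argument.

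As an alternative route that avoids \cref{prop:rv-singleton-measurable} entirely, I would note that $\alphabet^\infty=\bigcap_{k=1}^\infty C(\alphabet^k)$: the rank-$k$ cylinder $C(\alphabet^k)$ is exactly the set of strings (finite or infinite) of length at least $k$, and a string belongs to every such set iff it is infinite. Since each $C(\alphabet^k)$ is a generator of $\sigma(\calC)$ and $\sigma$-algebras are closed under countable intersections, this gives measurability directly. I would probably present the complement argument as the main proof and mention the cylinder-intersection identity as a remark.

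There is no real obstacle here — the only points requiring a moment's care are (i) explicitly using the countability of $\alphabet^*$ so that the union over $\vx\in\alphabet^*$ is a \emph{countable} union, and (ii) being precise that the "$\Omega$" in the complement step is $\alphabet^*\cup\alphabet^\infty$ rather than any larger set such as $\alphabeteos^\infty$.
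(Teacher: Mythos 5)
Your main argument is exactly the paper's proof: write $\alphabet^\infty$ as the complement of the countable union $\bigcup_{\vx\in\alphabet^*}\{\vx\}$ inside the full outcome space $\alphabet^*\cup\alphabet^\infty$ and invoke \cref{prop:rv-singleton-measurable}. Your alternative identity $\alphabet^\infty=\bigcap_{k=1}^\infty C(\alphabet^k)$ is also correct and would give an even more direct proof from the generators, but the main route matches the paper, so nothing further is needed.
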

\begin{proof}
First, $\alphabet^*\cup\alphabet^\infty$ is the entire outcome space, which is measurable by the definition of $\sigma$-algebra. Notice that
\begin{align}
    \alphabet^\infty = (\alphabet^*\cup\alphabet^\infty) \setminus 
    \bigcup_{\vx\in\alphabet^*} \{\vx\}.
\end{align}
Since each $\{\vx\}$ in the above is measurable by \cref{prop:rv-singleton-measurable} and $\alphabet^*$ is a countable set, $\alphabet^\infty$ is then measurable.
\end{proof}

The measurability of $\alphabet^\infty$ in
$(\alphabet^*\cup\alphabet^\infty, \sigma(\calC))$ (\cref{prop:rv-infinite-measurable}) was assumed by our definition of tightness (\cref{def:tight}).  As we have also established that each $\{\vx\}$ is measurable (\cref{prop:rv-singleton-measurable}), we can give an alternative characterization.

\begin{proposition} \label{prop:seq-tight}
A sequence model $(\alphabet^*\cup\alphabet^\infty, \sigma(\calC), P)$ is tight if and only if $\sum_{\vx\in\alphabet^*} P(\{\vx\})=1$.
\end{proposition}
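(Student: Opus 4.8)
The plan is to exploit the fact that the sample space $\alphabet^*\cup\alphabet^\infty$ is partitioned into the countable set $\alphabet^*$ and the set $\alphabet^\infty$, and then apply countable additivity of $P$. First I would record that $\alphabet^*$ decomposes as the disjoint union $\alphabet^* = \bigsqcup_{\vx\in\alphabet^*}\{\vx\}$, that each singleton $\{\vx\}$ is measurable by \cref{prop:rv-singleton-measurable}, and that $\alphabet^\infty$ is measurable by \cref{prop:rv-infinite-measurable}; hence $\alphabet^*$ is measurable as a countable union of measurable sets. I would also note $\alphabet^*\cap\alphabet^\infty=\emptyset$ and $\alphabet^*\cup\alphabet^\infty=\Omega$ is the whole outcome space.

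Next, since $\alphabet^*$ is countable, countable additivity of $P$ gives $P(\alphabet^*)=\sum_{\vx\in\alphabet^*}P(\{\vx\})$, and a second application of (finite) additivity to the disjoint pair $\alphabet^*, \alphabet^\infty$ gives
\begin{equation}
1 = P(\alphabet^*\cup\alphabet^\infty) = P(\alphabet^*) + P(\alphabet^\infty) = \sum_{\vx\in\alphabet^*}P(\{\vx\}) + P(\alphabet^\infty).
\end{equation}
From this single identity both directions follow immediately: $P(\alphabet^\infty)=0$ holds if and only if $\sum_{\vx\in\alphabet^*}P(\{\vx\})=1$. By \cref{def:tight}, $P(\alphabet^\infty)=0$ (equivalently $P(X\in\alphabet^\infty)=0$, using the pushforward identity $P(\alphabet^\infty)=\P(X^{-1}(\alphabet^\infty))$) is exactly tightness, which completes the argument.

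There is essentially no obstacle here beyond bookkeeping: the only things that need to have been established first are the measurability of the singletons and of $\alphabet^\infty$ (already done in \cref{prop:rv-singleton-measurable,prop:rv-infinite-measurable}) and the countability of $\alphabet^*$, which is standard. The one point to state carefully is that the sum $\sum_{\vx\in\alphabet^*}P(\{\vx\})$ is a genuine countable series indexed by $\alphabet^*$, so that $\sigma$-additivity — rather than mere finite additivity — is what licenses the equality $P(\alphabet^*)=\sum_{\vx\in\alphabet^*}P(\{\vx\})$.
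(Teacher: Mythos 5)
Your proposal is correct and follows essentially the same route as the paper's proof: decompose $\Omega=\alphabet^*\cup\alphabet^\infty$ into the disjoint measurable pieces, apply finite additivity to that pair and countable additivity over the singletons of $\alphabet^*$, and read off the equivalence with $P(\alphabet^\infty)=0$. Nothing is missing.
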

\begin{proof}
We defined a sequence model to be tight if and only if $P(\alphabet^\infty)=0$ (\cref{def:tight}).
By \cref{prop:rv-singleton-measurable,prop:rv-infinite-measurable}, we can write
\begin{subequations}
\begin{align}
    1 = P(\alphabet^*\cup\alphabet^\infty)
    &=P(\alphabet^\infty)+P(\alphabet^*) & \justification{finite additivity} \\
    &=P(\alphabet^\infty)+\sum_{\vx\in\alphabet^*} P(\{\vx\}). & \justification{countable additivity}
\end{align}
\end{subequations}
Hence, a sequence model is tight if and only if $\sum_{\vx\in\alphabet^*} P(\{\vx\})=1$.
\end{proof}

\section{Proofs on Characterizing Tightness (\cref{sec:tightness})}\label{sec:termination-proof}

\subsection{Proof of \cref{lem:durrett-435}}

The result below is stated without proof as Exercise 4.3.5 in \citet{durrett_2019}.

\durrettConditional*
\begin{proof}
First, recall an elementary inequality that for $x>0$,
\begin{align}
    x-1\geq \log x \quad 
    \Leftrightarrow \quad 1-x \leq \log\frac{1}{x}. \label{ineq:durrett-435}
\end{align}
Note that $\P(\cap_{m=1}^nA_m^\mathsf{c})>0$ for any $n$, for otherwise the conditional probabilities would be undefined.
Let $p_n \defeq \P(\cap_{m=1}^nA_m^\mathsf{c})$. 
Then we have that $p_n>0$ for all $n$, and
\begin{subequations}
\begin{align}
\infty
&=\sum_{n=2}^\infty \P(A_n\mid\cap_{m=1}^{n-1}A_m^\mathsf{c}) \\
&=\sum_{n=2}^\infty 1-\P(A_n^\mathsf{c}\mid\cap_{m=1}^{n-1}A_m^\mathsf{c}) \\
&=\lim_{N\to\infty}\sum_{n=2}^N 1-\P(A_n^\mathsf{c}\mid\cap_{m=1}^{n-1}A_m^\mathsf{c}) \\
&\leq\lim_{N\to\infty}\sum_{n=2}^N \log 1/\P(A_n^\mathsf{c}\mid\cap_{m=1}^{n-1}A_m^\mathsf{c}) &\justification{by \cref{ineq:durrett-435}} \\
&=\lim_{N\to\infty}\sum_{n=2}^N \log \frac{\P(\cap_{m=1}^{n-1}A_m^\mathsf{c})}{\P(\cap_{m=1}^{n}A_m^\mathsf{c})} \\
&=\lim_{N\to\infty}\sum_{n=2}^N \log \frac{p_{n-1}}{p_n} \\
&=\lim_{N\to\infty}\sum_{n=2}^N (\log p_{n-1}- \log p_n) \\
&=\lim_{N\to\infty} (\log p_1 - \log p_N) \\
&=\log p_1 - \lim_{N\to\infty} \log p_N
\end{align}
\end{subequations}
which implies that
\begin{subequations}
\begin{align}
    &\lim_{N\to\infty} \log p_N = -\infty \\
    \Leftrightarrow\quad & \lim_{N\to\infty} p_N = 0 \\
    \Leftrightarrow\quad & \lim_{N\to\infty} \P(\cap_{m=1}^N A_m^\mathsf{c}) = 0 \\
    \Leftrightarrow\quad & \P(\cap_{m=1}^\infty A_m^\mathsf{c}) = 0. &\justification{by continuity of measure}
\end{align}
\end{subequations}
\end{proof}

\subsection{Proof of \cref{prop:div-implies-tight}} 
\label{sec:div-implies-tight-proof}

\divergentLowerBound*
\begin{proof}
In the proof, we rename the index $t$ to $n$ to match the usual presentation of the Borel-Cantelli lemmata.  We are given that $\pASM(\textsc{eos}\mid\bm{x})\geq f(n)$ for all $\bm{x}\in\alphabet^{n-1}$. To apply \cref{lem:durrett-435}, we observe that
\begin{subequations}
\begin{align}
    A_n \cap (A_1^\mathsf{c} \cap \dotsm \cap A_{n-1}^\mathsf{c}) =& \{\bmomega \in \alphabeteos^\infty: \omega_n=\textsc{eos}\}~\cap \left(
    \bigcap_{i=1}^{n-1} \{\bmomega \in \alphabeteos^\infty: \omega_i\not=\textsc{eos}\}
    \right) \\
    =& \{\bmomega \in \alphabeteos^\infty: \bmomega=\textsc{eos}, \forall~i<n, \bmomega\not=\textsc{eos} \} \\
    =& \{\bmomega \in \alphabeteos^\infty: \text{$\bmomega$'s first \textsc{eos} is at position $n$}\}
\end{align}
\end{subequations}
and similarly
\begin{equation}
    A_1^\mathsf{c} \cap \dotsm \cap A_{n-1}^\mathsf{c} = \{ \bmomega \in \alphabeteos^\infty: \text{there is no \textsc{eos} in $\bmomega$'s first $n-1$ positions}\}
\end{equation}
Setting $G\defeq\{\bmomega\,\textsc{eos}:\bmomega\in \alphabet^{n-1} \}\subset\alphabeteos^n$, we get
\begin{subequations}
\begin{align}
    \P(A_n \mid A_1^\mathsf{c} \cap \dotsm \cap A_{n-1}^\mathsf{c}) \label{eq:weighted-eos} &= \frac{
          \P(A_n \cap (A_1^\mathsf{c} \cap \dotsm \cap A_{n-1}^\mathsf{c}))
        }{
          \P(A_1^\mathsf{c} \cap \dotsm \cap A_{n-1}^\mathsf{c})
        } \\
    &= \frac{\P(\overC(G))
        }{
        \P(\overC(\alphabet^{n-1}))} &\justification{definition of $G$} \\
    &=\frac{
        \sum_{\bmomega\in \alphabet^{n-1}} \pASM(\textsc{eos}\mid\bmomega)\pASM(\bmomega)}
        {\sum_{\bmomega\in \alphabet^{n-1}} \pASM(\bmomega)} &\justification{by \cref{eq:pre-measure}} \\
    &\geq \frac{\sum_{\bmomega\in \alphabet^{n-1}} f(n)\pASM(\bmomega)
    }{
    \sum_{\bmomega\in \alphabet^{n-1}} \pASM(\bmomega)
    } &\justification{definition of $f(n)$} \\
    &= f(n) \frac{\sum_{\bmomega\in \alphabet^{n-1}} \pASM(\bmomega)
    }{
    \sum_{\bmomega\in \alphabet^{n-1}} \pASM(\bmomega)
    } \\
    &= f(n).
\end{align}
\end{subequations}
Since $\sum_{n=1}^\infty f(n)=\infty$ and hence $\sum_{n=2}^\infty f(n)=\infty$,
the above inequality shows that the condition of \cref{lem:durrett-435} holds.
Hence by \cref{lem:durrett-435}, the event of a string never terminating, i.e., $\cap_{k=1}^\infty A_k^\mathsf{c}$, has probability measure $\P(\cap_{k=1}^\infty A_k^\mathsf{c})=0$.

In summary, if the \eos probability of a language model is lower-bounded at ever steps by the terms of a divergent series, then the event that this language model terminates has probability 1.
\end{proof}

\subsection{Proof of \cref{cor:durrett-434}}
\label{sec:durrett-434-proof}

To show \cref{cor:durrett-434}, we first show the following simple consequence of Borel--Cantelli.

\begin{corollary} \label{cor:io-implies-div}
If $\P(A_n \text{ i.o.})=1$, then $\sum_{n=1}^\infty \P(A_n) = \infty$.
\end{corollary}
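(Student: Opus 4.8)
The plan is to recognize that this corollary is nothing more than the contrapositive of the first Borel--Cantelli lemma (\cref{thm:bc1}), so the proof will be a one-line argument by contradiction rather than anything requiring new machinery.

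Concretely, I would proceed as follows. Suppose, for the sake of contradiction, that $\sum_{n=1}^\infty \P(A_n) < \infty$. Then \cref{thm:bc1} applies verbatim and yields $\P(A_n \infoft) = 0$. But this contradicts the hypothesis that $\P(A_n \infoft) = 1$, since $0 \neq 1$. Hence the supposition was false, and we conclude $\sum_{n=1}^\infty \P(A_n) = \infty$, as claimed.

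There is no genuine obstacle here: the entire substance is carried by \cref{thm:bc1}, which we are free to invoke. The only points worth flagging are (i) that we use only the implication ``convergent series $\Rightarrow$ probability $0$'', so no independence assumption on $\{A_n\}$ is needed (independence is required only for \cref{thm:bc2}), and (ii) that in fact any hypothesis of the form $\P(A_n \infoft) > 0$ would already force the same conclusion, so the specific value $1$ is inessential; stating the corollary for $\P(A_n \infoft) = 1$ is simply the form in which it will be consumed in the proof of \cref{cor:durrett-434}.
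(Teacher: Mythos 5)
Your proposal is correct and is essentially identical to the paper's proof: both argue by contradiction, supposing the series converges and invoking Borel--Cantelli~I to get $\P(A_n \infoft)=0$, contradicting the hypothesis. Your side remarks (no independence needed; any positive value of $\P(A_n \infoft)$ would suffice) are accurate but not part of the paper's argument.
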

\begin{proof}
Suppose to the contrary that $\sum_{n=1}^\infty \P(A_n) < \infty$, then, by Borel--Cantelli I (\cref{thm:bc1}), $\P(A_n \text{ i.o.})=0$, which contradicts the assumption. Hence, $\sum_{n=1}^\infty \P(A_n) = \infty$.

\end{proof}

\cref{cor:durrett-434} below is also stated without proof as Exercise 4.3.4 in \citet{durrett_2019}.
\durrettSequence*
\begin{proof}
We can use a product measure to construct a sequence of independent events $\{A_n\}_{n=1}^{\infty}$ such that $\P(A_n)=p_n$.
(The product measure ensures independence.) 
Then, by definition in \cref{eq:defn-io},
\begin{align}
\{A_n \text{ i.o.}\}^\textsf{c} = \bigcup_{m=1}^\infty \bigcap_{n\geq m} A_n^\textsf{c}
\end{align}
So,
\begin{subequations}
\begin{align}
1-\P(A_n \text{ i.o.})&=\P\left(\bigcup_m \bigcap_{n\geq m} A_n^\textsf{c}\right) \\
&=\lim_{m\to\infty} \P\left(\bigcap_{n\geq m} A_n^\textsf{c}\right) \\
&=\lim_{m\to\infty} \prod_{n\geq m} \P(A_n^\textsf{c}) &\justification{$A_n$ are independent by construction} \\
&=\lim_{m\to\infty} \prod_{n\geq m} (1-p_n)
\end{align}
\end{subequations}

\paragraph{$(\Rightarrow)$:} Assume $\prod_{n=1}^\infty (1-p_n) = 0$. Then, for any $m$,
\begin{equation}
0= \prod_{n\geq 1} (1-p_n) =\underbrace{\left(\prod_{1\leq n<m} (1-p_n)\right)}_{>0}
\left(\prod_{n\geq m} (1-p_n)\right)
\end{equation}
So it must the case that, for any $m$, $\prod_{n\geq m} (1-p_n)=0$. Therefore,
\begin{equation}
1-\P(A_n \text{ i.o.}) = \lim_{m\rightarrow \infty} \prod_{n\geq m} (1-p_n) = 0
\end{equation}
which implies $\P(A_n \text{ i.o.})=1$.
\Cref{cor:io-implies-div} implies that $\sum_{n=1}^{\infty} p_n=\infty$.

\paragraph{$(\Leftarrow)$:} Assume $\sum_{n=1}^\infty p_n=\infty$. 
Then by Borel--Cantelli II (\cref{thm:bc2}), $\P(A_n \text{ i.o.})=1$ which implies
\begin{equation}
    0 = 1-\P(A_n \text{ i.o.}) = \lim_{m\rightarrow \infty} \prod_{n\geq m} (1-p_n)
\end{equation}
Observe that $\Big\{\prod_{n\geq m} (1-p_n)\Big\}_m$ is a non-decreasing sequence in $m$; to see this, note that as $m$ grows larger we multiply strictly fewer values $(1 - p_n) \in (0, 1]$. 
However, since we know the sequence is non-negative and tends to $0$, it follows that for \emph{any} $m$, we have
\begin{align}
\prod_{n\geq m} (1-p_n) =0.
\end{align}
It follows that, for any $m$, we have
\begin{equation}
    \prod_{n=1}^\infty (1-p_n) = \prod_{n<m} (1-p_n) \underbrace{\prod_{n\geq m} (1-p_n)}_{=0} = \prod_{n<m} (1-p_n) \cdot 0 = 0.
\end{equation}
\end{proof}

\subsection{Proof of \cref{prop:lm-tight-main}} 
\label{sec:lm-tight-main-proof}

\lmTightMain*
\begin{proof}
Recall the definition of $\ptildeEOS$, as previously defined in \cref{eq:ptildeeos-set}, is 
\begin{align}
\ptildeEOS(t)\defeq\P(A_t\mid A_1^\mathsf{c}\cap \dotsm \cap A_{t-1}^\mathsf{c}).
\end{align}

\paragraph{Case 1.} Suppose that $\ptildeEOS(t)<1$ for all $t$. Consider the termination probability again:
\begin{subequations}
\begin{align}
\P\left(\bigcap_{t=1}^\infty A_t^\mathsf{c}\right) 
&= \lim_{T\to\infty} \P\left(\bigcap_{t=1}^T A_t^\mathsf{c}\right) \\
&= \lim_{T\to\infty} \prod_{t=1}^T \P(A_t^\comp \mid A_1^\comp \cap \dotsm \cap A_{t-1}^\comp) \\
&= \lim_{T\to\infty} \prod_{t=1}^T (1-\widetilde{p}_\eos(t)) \\
&= \prod_{t=1}^\infty (1-\widetilde{p}_\eos(t)). \label{eq:term-prob-inf-prod}
\end{align}
\end{subequations}
In the above, we have assumed that $\P(A_1^\comp \cap \dotsm \cap A_{t}^\comp)>0$ for all $t$, which is true by assumption that $\ptildeEOS(t)<1.$. 
Hence, by \cref{cor:durrett-434}, \cref{eq:term-prob-inf-prod} is $0$ if and only if $\sum_t \widetilde{p}_\eos(t)=\infty$.

\paragraph{Case 2.} If $\widetilde{p}_\eos(t)=1$ is true for some $t=t_0$, then $\P(A_1^\comp \cap \dotsm \cap A_{t_0}^\comp)=0$ and hence $\P\left(\bigcap_{t=1}^\infty A_t^\comp\right)=0$ and such a language model is guaranteed to terminate at $t_0$.
\end{proof}

\section{Proofs for Analyses of Common Language Models (\cref{sec:lm-analysis})}

\subsection{Proofs for FSSMs (\cref{sec:sfssm})}\label{app:n-gram}

\subsubsection{Proofs for Stochastic FSSMs}\label{sec:sfslm-proof}
\sfslmTight*
\begin{proof} We refer to a state $q$ as \defn{initial} if $s_q > 0$ and as \defn{final} if $t_q > 0$.  (These are sometimes called source and sink states.)
We prove each direction of the theorem in turn:
  \paragraph{($\Rightarrow$):}  Assume the SFSSM is tight. Let $q$ be an accessible state.  Since the SFSSM has at least one positive-probability path from an initial state, there is a positive probability of reaching $q$ during generation.  If there were no positive-probability path from $q$ to a final state, then the SFSSM would never terminate on the occasions when it reached $q$, contradicting the assumption of tightness. 
  Hence $q$ must be co-accessible. 

\paragraph{($\Leftarrow$):} 
Assume that all accessible states are co-accessible. We construct a Markov chain whose states are the SFSSM's accessible states $Q_A\subseteq \{1, \ldots, Q\}$ together with an \eos state.  In this Markov chain, the initial probability of $q$ is given by $s_q$ when $q \in Q_A$ and by $0$ when $q=\eos$; the transition probability from $q$ to $q'$ is given by $\mP_{qq'}$ when $q,q' \in Q_A$, by $t_q$ when $q \in Q_A$ and $q'=\eos$, by $1$ when $q=q'=\eos$, and by $0$ otherwise. 
The probability that the Markov chain is in state $q \in Q_A$ after $t$ steps equals the probability that the SFSSM is in state $q$ after $t$ steps (note that the SFSSM never reaches any state $q \notin Q_A$).  The probability that it is in state $\eos$ after $t$ steps equals the probability that the SFSSM has terminated after $\leq t$ steps.  

Clearly $\eos$ is an \defn{absorbing state} of the Markov chain, meaning that  once the Markov chain reaches this state, it never leaves. A fundamental result on finite-state Markov chains \cite[Theorem 11.3]{grinstead1997} is that if every state can reach an absorbing state, then with probability 1, the chain reaches an absorbing state (``is absorbed'') in finite time.   Every state can in fact reach $\eos$, by coaccessibility of $Q_A$. This further implies that \eos is the \emph{only} absorbing state (as an absorbing state cannot reach any other state).  So by the result cited above, the Markov chain reaches \eos with probability 1 in finite time. Consequently, the SFSSM terminates after finitely many steps with probability 1; that is, the SFSSM is tight.  
\end{proof}

\ngramMLETight*
\begin{proof}
The SFSSM for an $n$-gram model has states that correspond to $(n-1)$-grams and transitions that correspond to characters (unigrams), as illustrated by \cref{fig:sfssm}. 
When the SFSSM's probabilities are estimated with MLE, the accessible states are $(n-1)$-grams that have appeared in some string in the corpus. 
Such states must also be co-accessible so that they can generate the rest of that string.
Hence, by \cref{thm:sfslm-tight}, this SFSSM is tight.
\end{proof}

\subsubsection{Proofs for Substochastic FSSMs}\label{sec:subfslm-proof}

To prove \cref{thm:sub-fslm-tight}, we will make use of the following useful lemma.

\begin{lemma}\label{lem:sub-fslm-eigen}
Let $\mP'$ be the transition sum matrix of a trimmed substochastic FSSM.  Then $\rho(\mP')<1$ where $\rho(\cdot)$ denotes the spectral radius.
\end{lemma}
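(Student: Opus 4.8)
The plan is to reduce the claim $\rho(\mP') < 1$ to a statement about the row sums of a suitable \emph{power} of $\mP'$. Write $\bm{1} \in \R^{Q'}$ for the all-ones vector and, for $k \ge 0$, let $r^{(k)} \defeq (\mP')^k \bm{1}$ be the vector of row sums of $(\mP')^k$, so that $r^{(0)} = \bm{1}$ and $r^{(k+1)} = \mP' r^{(k)}$. I may assume $Q' \ge 1$ (the case $Q' = 0$ is vacuous, $\rho(\mP') = 0$), and since a trimmed model has at least one useful — hence co-accessible — state, it has at least one \emph{final} state, i.e.\ some state with positive termination probability. I will use two elementary facts: (i) substochasticity, $t'_q + \sum_{q'} \mP'_{qq'} \le 1$, gives $r^{(1)} \le \bm{1}$ entrywise, whence $r^{(k+1)} = \mP' r^{(k)} \le \mP' r^{(k-1)} = r^{(k)}$ for all $k \ge 0$ by induction (using $\mP' \ge 0$), so each sequence $(r^{(k)}_q)_{k}$ is non-increasing and lies in $[0,1]$; and (ii) for any nonnegative matrix $A$ and $n \ge 1$, $\rho(A)^n = \rho(A^n) \le \|A^n\|_\infty = \max_q (A^n \bm{1})_q$, since the spectral radius is dominated by any submultiplicative matrix norm.

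The heart of the argument is to show $r^{(Q')}_q < 1$ for \emph{every} state $q$. Fix $q$. By co-accessibility there is a positive-probability path $q = q_0 \to q_1 \to \dots \to q_\ell$ ending at a final state $r \defeq q_\ell$ (so $t'_r > 0$), with $\mP'_{q_i q_{i+1}} > 0$ for $0 \le i < \ell$; choosing a shortest such path makes its states distinct, so $\ell \le Q' - 1$. Put $\pi \defeq \prod_{i=0}^{\ell-1} \mP'_{q_i q_{i+1}} > 0$ (an empty product if $\ell = 0$), so $\big((\mP')^{\ell}\big)_{qr} \ge \pi$. Then, writing $r^{(\ell+1)} = (\mP')^{\ell}(\mP'\bm 1) = (\mP')^{\ell} r^{(1)}$ and using $r^{(1)}_{q'} \le 1$ for all $q'$ while $r^{(1)}_{r} \le 1 - t'_r$,
\begin{align*}
r^{(\ell+1)}_q
&= \sum_{q'} \big((\mP')^{\ell}\big)_{qq'}\, r^{(1)}_{q'}
\;\le\; \sum_{q'} \big((\mP')^{\ell}\big)_{qq'} \;-\; \big((\mP')^{\ell}\big)_{qr}\, t'_r \\
&= r^{(\ell)}_q - \big((\mP')^{\ell}\big)_{qr}\, t'_r
\;\le\; 1 - \pi\, t'_r \;<\; 1 .
\end{align*}
Since $\ell + 1 \le Q'$ and the row sums are non-increasing in the exponent (fact (i)), $r^{(Q')}_q \le r^{(\ell+1)}_q < 1$.

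To finish, set $c \defeq \max_q r^{(Q')}_q$; being the maximum of finitely many numbers each $< 1$, we have $c < 1$. By fact (ii) with $A = \mP'$ and $n = Q'$, $\rho(\mP')^{Q'} \le \|(\mP')^{Q'}\|_\infty = c < 1$, hence $\rho(\mP') \le c^{1/Q'} < 1$, which is exactly the input needed for the Neumann-series step behind \cref{thm:sub-fslm-tight}. I do not expect a serious obstacle; the one point requiring care is that a \emph{single} exponent $n = Q'$ must work for all states simultaneously, which is why I combine the monotonicity from fact (i) with the uniform path-length bound $\ell \le Q' - 1$, and why I invoke substochasticity only to obtain \emph{upper} bounds on row sums (any mass lost to trimmed-away states only helps). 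As an alternative one could argue spectrally: a nonnegative matrix with $\rho = 1$ has a nonnegative eigenvector $v \neq 0$ with $\mP' v = v$, and inspecting a coordinate at which $v$ attains its maximum forces that state to be non-final with all its positive-probability successors also maximal, so the maximal coordinate set is transition-closed and contains no final state, contradicting co-accessibility.
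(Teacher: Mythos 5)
Your proof is correct, and it takes a genuinely different route from the paper's. The paper cites two nontrivial facts from Horn and Johnson: it permutes $\mP'$ into Frobenius (irreducible) normal form, argues that each irreducible diagonal block is \emph{strictly} substochastic using accessibility and co-accessibility, and then invokes the refinement of the row-sum bound ($\rho(A) < \vertiii{A}_\infty$ for an irreducible $A$ with unequal absolute row sums) to get $\rho < 1$ block by block, finishing with $\rho(\mP') = \max_i \rho(\mP'_i)$. You instead bound a \emph{power} of $\mP'$: monotonicity of the row-sum vectors $r^{(k)} = (\mP')^k\bm{1}$, plus a shortest co-accessibility path of length $\ell \le Q'-1$ from each state to a final state, gives $\vertiii{(\mP')^{Q'}}_\infty < 1$ uniformly, whence $\rho(\mP') \le \vertiii{(\mP')^{Q'}}_\infty^{1/Q'} < 1$. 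Your argument is more elementary and self-contained --- it needs only $\rho(A)^n = \rho(A^n) \le \vertiii{A^n}_\infty$ and no structure theory of reducible nonnegative matrices --- and it yields an explicit quantitative gap in terms of the minimal path weight and termination probability; the paper's version is shorter on the page because it outsources the work to standard matrix-analysis results and makes visible where the spectral radius is attained (in the irreducible components). The only point you leave implicit, which is worth a sentence, is that the shortest positive-probability path from a useful state $q$ to a final state consists entirely of useful states (each is reachable from the accessible $q$ and can reach the final state), so the path and the inequality $\big((\mP')^{\ell}\big)_{qr} \ge \pi$ survive trimming; with that noted, the argument is complete.
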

\begin{proof}
To begin with, we wish to apply the following result, which connects the row sums of a matrix to its spectral radius. Below, $M_n$ denotes the set of $n\times n$ matrices, and $\vertiii{A}_\infty=\max_{1\leq i \leq n} \sum_{j=1}^n |A_{ij}|$ denotes the operator $\infty$-norm.
\begin{proposition}[\S6.2.P8, \citealp{horn2013}] \label{prop:hj-628}
    For any $A\in M_n$, $\rho(A)\leq \vertiii{A}_\infty$. Additionally, if $A$ is irreducible and not all absolute row sums of $A$ are equal, then $\rho(A)<\vertiii{A}_\infty$.
\end{proposition}
However, the transition sum matrix $\mP$ of a substochastic FSSM may be reducible, whereas the irreducibility condition in \cref{prop:hj-628} cannot be dropped. Hence, we need to ``decompose'' $\mP'$ in a way that recovers irreducibility. We use the \emph{Frobenius normal form} (also known as \emph{irreducible normal form}) to achieve this.
\begin{proposition}[\S8.3.P8, \citealp{horn2013}] \label{prop:hj838}
        Let $A\in M_n$ be non-negative. Then, either $A$ is irreducible or there exists a permutation matrix $\Pi$ such that
        \begin{align}
        \Pi^\top A \Pi =
        \begin{bmatrix}
            A_1 & & \ast \\
                & \ddots & \\
            \mathbf{0} & & A_k
        \end{bmatrix}
        \end{align}
        is block upper triangular, and each diagonal block is irreducible (possibly a $1\times 1$ zero matrix). This is called a \defn{Frobenius normal form} (or \defn{irreducible normal form}) of $A$. Additionally, $\lambda(A)=\lambda(A_1)\cup\dotsm\cup\lambda(A_k)$ where $\lambda(\cdot)$ denotes the set of eigenvalues of a matrix.
\end{proposition}

Notice that the permutation in the Frobenius normal form merely renumbers the states of the trimmed FSSM.  We may check that as a result, the termination probability given in \cref{thm:sub-fslm-tight} is unchanged:\footnote{The equalities here use the fact that the inverse of a permutation matrix $\Pi$ is its transpose: $\Pi\,\Pi^\top = I$.}
\begin{align}
    (\Pi^\top\vsource')^\top (\Pi^\top \mP' \Pi)^k (\Pi^\top\vtarget') 
    = (\vsource'^\top \Pi) (\Pi^\top \mP'^k \Pi)(\Pi^\top \vtarget') 
    = \vsource'^\top \mP'^k \vtarget'
\end{align}
Hence, with an appropriate renumbering, we may assume without loss of generality that $\mP$ is already given in Frobenius normal form
\begin{align}
    \mP' = \begin{bmatrix}
        \mP'_1 &  & \ast \\
        & \ddots & \\
        \mathbf{0} & & \mP'_k
    \end{bmatrix}
\end{align}
where each $\mP'_i$ is irreducible.

Since the transition sum matrix $\mP'$ of a trimmed substochastic FSSM is a substochastic matrix, each $\mP'_i$ is also substochastic. In fact, each $\mP'_i$ is \emph{strictly} substochastic, meaning that there is at least one row that sums to less than 1. To see this, suppose to the contrary that there is a stochastic $\mP'_i$. Since the FSSM is trimmed, every state is both accessible and co-accessible. Being accessible implies that there is a positive probability of reaching every state in $\mP'_i$. However, the stochasticity of $\mP'_i$ forces the corresponding $\vtarget'$ entries to be 0. Hence, none of these states can transition to \eos, meaning that they're not co-accessible, contradicting the assumption. Hence, every $\mP'_i$ is strictly substochastic. Then, either all row sums of $\mP'_i$ are less than 1 (in which case $\vertiii{\mP'_i}_\infty < 1$) or some row sums are 1 and some are less than 1 (in which case $\vertiii{\mP'_i}_\infty = 1$ and $\mP'$ has unequal absolute row sums).  In either case, \cref{prop:hj-628} implies that $\rho(\mP'_i)<1$, for all $1\leq i\leq k$.

Finally, the last sentence of \cref{prop:hj838} entails that $\rho(\mP')=\max\{\rho(\mP'_1),\dots,\rho(\mP'_k)\}$.  Since each $\rho(\mP'_i)<1$, we have $\rho(\mP')<1$.
\end{proof}

\trimSfslmTight*
\begin{proof}
By \cref{lem:sub-fslm-eigen}, $\rho(\mP')<1$, in which case $\mI-\mP'$ is invertible and the Neumann series $\sum_{k = 0}^\infty \mP'^k = \mI+\mP'+\mP'^2+\dotsm$ converges to $(\mI-\mP')^{-1}$ \citep[\S5.6]{horn2013}. Thus

\begin{align}
P(\alphabet^\ast) &= \sum_{k = 0}^\infty P(\alphabet^k) = \sum_{k = 0}^\infty \vsource'^\top \mP'^k \vtarget' = \vsource'^\top \left(\sum_{k=0}^\infty \mP'^k\right) \vtarget' = \vsource'^\top (\mI - \mP')^{-1} \vtarget'.
\end{align}
\end{proof}

\subsection{Proofs for Transformer Result (\cref{sec:transformer-thms})}
\label{sec:transformer-proofs}

Again, the following theorem is well-known:
\compact*
\begin{proof}
Let $\{U_\alpha\}_{\alpha\in\mathcal{A}}$ be any open cover of $f(X)$. By continuity, $f^{-1}(U_\alpha)\subset X$ is open for any $\alpha\in\mathcal{A}$, and hence $\{f^{-1}(U_\alpha)\}_{\alpha\in\mathcal{A}}$ is also an open cover of $X$. By the compactness of $X$, there is a finite sub-cover $\{f^{-1}(U_{\alpha_i})\}_{i=1}^n$, in which case $\{U_{\alpha_i}\}_{i=1}^n$ forms a finite sub-cover for $f(X)$.
\end{proof}

\transformerCompact*
\paragraph{Note.} 
We make use of the following notations in the proof below: $\triangle_{t-1}=\{\bm{y}\in\R^t:\bm{y}\geq 0, \bm{1}^\top\bm{y}=1\}$ denotes the $(t-1)$-dimensional simplex; $B_r(\bm{z})=\{\bm{v}\in\R^n:\text{dist}(\bm{z},\bm{v})<r\}$ denotes the open ball centered at $\bm{z}$ with radius $r$; $\overline{A}$ denotes the closure of set $A$.
\begin{proof} Let $K_0=K$.  In an autoregressive transformer, each of the $n$ layers consists of two blocks: a self-attention block and a feedforward block. We will use induction on the $2n$ blocks to build up compact sets $K_1, K_2, \ldots, K_{2n}$ that contain the output vectors of these respective blocks, and then take $K' = K_{2n}$.

The self-attention block is a function on $(\R^d)^+\to(\R^d)^+$. So, let $t\in\Z_{>0}$ be arbitrary and consider any sequence of input vectors $(\bm{v}_1,\dots,\bm{v}_t)$ such that for all $i$, $\bm{v}_i\in K_0$. 
Denote the output vectors of the attention block with $(\bm{v}_1', \dots,\bm{v}_t')$. 
By definition of attention, each output vector $\bm{v}_j'=\sum_{i=1}^t \alpha^{(j)}_i\bm{v}_i$ where $\bm{\alpha}^{(j)}\in\triangle_{t-1}$ are the attention weight vectors obtained through the softmax function. 
Compact sets in $\R^d$ are bounded (by the Heine--Borel theorem), and hence there exists $M>0$ such that $K_0\subseteq\overline{B_M(0)}$. Noting that the norm function $\Vert\cdot\Vert$ on $\R^d$ is convex, we have the following
\begin{subequations}
\begin{align}
\Vert\bm{v}_j'\Vert&=\left\Vert\sum_{i=1}^t\alpha^{(j)}_i\bm{v}_i\right\Vert \\
&\leq\sum_{i=1}^t \alpha^{(j)}_i \Vert\bm{v}_j\Vert \label{eq:used-jensen} \tag{$\ast$} \\
&\leq\sum_{i=1}^t \alpha^{(j)}_i M=M \label{eq:attn-output-bound}
\end{align}
\end{subequations}
where (\ref{eq:used-jensen}) results from Jensen’s inequality. \cref{eq:attn-output-bound} shows that each of the output vectors $\bm{v}_j'$ lies in $\overline{B_M(0)}$ which is compact. Hence, setting $K_1=\overline{B_M(0)}$, we have shown that, for any $t\in\Z_{>0}$, the attention block maps $K_0^t$ into $K_1^t$. 

Note that we \emph{cannot} use \cref{thm:compact} here because the attention block defines a different function on $\R^{t\times d}\to\R^{t\times d}$ for each $t$, and \cref{thm:compact} only implies that there exists a separate \emph{length-dependent} output compact set $K_t\subset\R^{t\times d}$ for each $t$, which is different from this lemma's statement. 

The feedforward function is a continuous function on $\R^d\to\R^d$, and therefore, by \cref{thm:compact}, maps its input compact set $K_1$ to an output compact set, which we call $K_2$.

Finally, residual connections and layer norms are also continuous functions acting on each of the input vectors, and hence by the same reasoning would also preserve compactness.

Now we can use induction and show that there exist compact sets $K_3, K_4, \dots, K_{2n-1}, K_{2n}$ where $K_{2n}$ contains the output set of the final layer. Set $K'=K_{2n}$ and we have proven the statement.
\end{proof}

\transformerMain*
\begin{proof}
Given the Transformer, there exists a fixed compact set $K$ that will contain all inputs $\bm{v}_i \in \R^d$ to the first layer.  This is true because each $\bm{v}_i$ is the sum of a word embedding, which falls in a finite set since $\alphabeteos$ is finite, and a position embedding, which lies in the compact set $[-1,1]^d$. Hence, by \cref{lem:transformer-compact}, there exists a fixed compact set $K'$ that contains all output embedding vectors (regardless of how long the sequence is). 

The final output probability is given by a multiplication with the word embedding matrix followed by the softmax function as in \cref{eq:transformer-softmax}. This process amounts to composing two continuous functions. In particular, we can extract the \eos probability as a continuous $\R$-valued function $g_\textsc{eos}: K'\to(0,1)$ (neither 0 or 1 is in the range of the softmax function). By continuity of $g_\textsc{eos}$ and \cref{thm:compact}, $K''\defeq g_\textsc{eos}(K')\subseteq(0,1)$ is compact. Since $K''$ is compact, and hence closed, $\inf K'' \in K''$. 
Thus $\inf K'' \in (0,1)$ and in particular
$\inf K'' > 0$. Therefore, taking $\epsilon=\inf K''$, we have shown that the \eos probability of a Transformer is bounded below by some $\epsilon>0$ (regardless of the length of the sequence).

Hence, by \cref{prop:div-implies-tight}, any Transformer ASM is tight and thus defines a language model.
\end{proof}

\subsection{Proofs for RNN Result (\cref{sec:rnn-thms})}
\label{app:rnn-tight}

\rnnTight*

\begin{proof}
Let $X_t(\bmomega)$ be the random variable that is equal to the $t$\textsuperscript{th} token in an outcome $\bmomega \in \Omega$.
Also let $\vh_{\vx}$ be the hidden representation of the RNN after processing some finite list of tokens $\vx \in \alphabet^\ast$.
Further, let $\vu_x \in \R^d$ be the output embedding of $x \in \alphabeteos$,
Then for any $t \in \N$ and any $\vx \in \alphabet^t$, we have:
\begin{subequations}
\begin{align}
\P(X_{t+1} = \eos \mid \boldX_{\leq t} = \vx) 
&= \frac{\exp \ueos^{\top} \vh_{\vx}}{\sum_{y \in \alphabeteos} \exp \vu_y^{\top} \vh_{\vx}} \\
&= \frac{1}{{\sum_{y \in \alphabeteos} \exp \vu_y^{\top} \vh_{\vx}}\,/\,{\exp \ueos^{\top} \vh_{\vx}}} \\
&= \frac{1}{1 + \sum_{y \in \alphabet} \exp (\vu_y - \ueos)^{\top} \vh_{\vx}} \\
&\geq \frac{1}{1 + \sum_{y \in \alphabet} \exp \left(\Vert\vu_y - \ueos\Vert_2 \Vert\vh_{\vx}\Vert_2\right)} & \text{\normalfont (Cauchy--Schwarz)}  \\
&\geq \frac{1}{1 + \sum_{y \in \alphabet} \exp(k\Vert\vh_{\vx}\Vert_2)} \\ 
&= \frac{1}{1 + |\alphabet| \exp(k\,\Vert\vh_{\vx}\Vert_2)}
\end{align}
\end{subequations}
Now define $\Vert\widehat\vh_t\Vert_2 \defeq \sup_{\vx \in \alphabet^t} \Vert\vh_{\vx}\Vert_2$.
We then have that $\forall t \in \N$ and $\forall \vx \in \alphabet^t$:
\begin{align}
    \P(X_{t+1} = \eos \mid \boldX_{\leq t} = \vx) \ge \frac{1}{1 + |\alphabet| \exp(k \Vert\widehat\vh_t\Vert_2)}
\end{align}

Now, by \cref{prop:div-implies-tight}, we have that if $\sum_{t = 0}^\infty \frac{1}{1 + |\alphabet| \exp(k\,\Vert\widehat\vh_t\Vert_2)}$ diverges, then the language model is tight.
We will show that this condition holds if $\exists N \in \N$ such that $\forall t \geq N$, $k\Vert\widehat\vh_t\Vert_2 \le \log t$.

First, note that $\lim_{t \to \infty} \frac{1}{t}\frac{1 + |\alphabet| t}{1} = \lim_{t \to \infty} \frac{1}{t} + |\alphabet| = |\alphabet| \in (0, \infty)$.
Hence, by the limit comparison test, since $\sum_{t=1}^\infty \frac{1}{t}$ diverges, this means $\sum_{t=1}^\infty \frac{1}{1 + |\alphabet| t}$ must also diverge.

Now, suppose there exists $N$ such that that $k \, \Vert\widehat\vh_t\Vert_2 \le \log t$ for all $t \ge N$.
This implies that for $t \ge N$ we have $\frac{1}{1 + |\alphabet| \exp( k \Vert\widehat\vh_t\Vert_2)} \ge \frac{1}{1 + |\alphabet| t}$, which combined with the above and the comparison test, implies that 
$\sum_{t=N}^\infty \frac{1}{1 + |\alphabet| \exp( k \Vert\widehat\vh_t\Vert_2)}$ diverges.
This in turn means that $\sum_{t=0}^\infty \frac{1}{1 + |\alphabet| \exp( k \Vert\widehat\vh_t\Vert_2)}$ diverges.

Hence, if $k \, \Vert\widehat\vh_t\Vert_2 \le \log t$ for all sufficiently large $t$ (that is, for all $t \geq N$), then the RNN ASM is tight and thus defines a language model.

\end{proof}

\end{document}